\documentclass{article}

\usepackage[main, final]{neurips_2026}
 \usepackage{amsmath}
 \usepackage{mathtools}
 \usepackage{amssymb}
 \usepackage{amsthm}
\usepackage{wrapfig}
\usepackage{algorithm}
 \usepackage{booktabs}
\usepackage{algpseudocode}

\usepackage[utf8]{inputenc} 
\usepackage[T1]{fontenc}    
\usepackage{hyperref}       
\usepackage{url}            
\usepackage{amsfonts}       
\usepackage{nicefrac}       
\usepackage{microtype}      
\usepackage{xcolor}         
\newtheorem{theorem}{Theorem}
\newtheorem{proposition}{Proposition}
\newtheorem{lemma}{Lemma}
\newtheorem{corollary}[theorem]{Corollary}

\usepackage{multirow}
\usepackage{graphicx}
\newcommand{\score}[2]{#1 {\footnotesize #2}}
\title{TGRL: Temperature-Grouped Reinforcement Learning for Efficient Exploration in LLMs}

\author{Zihan Lin$^{1,2,3}$\thanks{Both authors contributed equally to this research.}\hspace{0.4em}\thanks{Work was done during an internship at Meituan.}\hspace{0.4em}, Xiaohan Wang$^{2}$\footnotemark[1] \thanks{Corresponding authors: \texttt{wangxiaohan17@meituan.com}, \texttt{yinguojun02@meituan.com},\texttt{ran.he@ia.ac.cn}} , Jie Cao$^{3}$, Jiajun Chai$^{2}$, \\
 \textbf{Wei Lin$^{2}$, Guojun Yin$^{2}$\footnotemark[3] , Ran He$^{1,3}$\footnotemark[3]} \\
\\
$^1$University of Chinese Academy of Sciences \\
$^2$Meituan \\
$^3$MAIS\&NLPR, Institute of Automation, Chinese Academy of Sciences
}

\begin{document}

\maketitle









\begin{abstract}
Efficient exploration often remains a central bottleneck in reinforcement learning with verifiable rewards (RLVR). Although temperature control and test-time scaling strategies can increase rollout diversity of large language models (LLMs), they either expand the sample budget at rollout time or leave the benefit of exploration unquantified.
To this end, we propose Temperature-Grouped Reinforcement Learning (TGRL), which turns temperature-induced diversity into an explicit training signal. For each prompt, TGRL partitions its rollout group into low- and high-temperature subsets, estimates exploration gain through their reward contrast, and allocates this group-level signal as token-level credit using Jensen--Shannon (JS) divergence between the corresponding temperature-scaled next-token distributions induced by the same logits. Notably, TGRL reaches equivalent accuracy up to $36\%$ faster than strong RLVR baselines without expanding the rollout budget.  Across 11 benchmarks from diverse domains, TGRL broadly improves over strong RLVR baselines: it improves the six-benchmark math average by $1.6\%$ at 32B, raises CodeForces rating by $196.7$ points and LiveCodeBench Pass@16 by $4.4\%$, and improves ALFWorld/WebShop success rates by $6.3\%$/$4.9\%$. Comprehensive ablations and wall-clock analysis confirm the efficacy of all proposed components. Code is available at \url{https://github.com/1229095296/TGRL/tree/main}.
\end{abstract}

\section{Introduction}

Recently, reinforcement learning with verifiable rewards (RLVR) has become a dominant post-training paradigm for improving the reasoning capabilities of large language models (LLMs), enabling iterative self-improvement through policy-gradient optimization over automatically verifiable outcomes \citep{guo2025deepseek,lin2026resrlboostingllmreasoning,wang2025ragen,lin2026awpoenhancingtooluselarge}. However, the efficacy of RLVR depends critically on efficient exploration—the capacity to identify diverse, high-reward trajectories within a constrained rollout budget. Exploration dynamics not only dictate the policy's ability to escape local optima but also determine the attainable performance ceiling \citep{cui2025entropy,chen2025pass,wang2026self,hu2026ziprl}. Without robust exploration and sufficient trajectory diversity, RLVR optimization can prematurely converge to a narrow decoding mode, thereby limiting the model’s attainable reasoning gains.

To enable efficient exploration in RLVR, existing work usually focuses on increasing generation diversity during rollout. Common approaches include test-time scaling, which draws more samples \citep{wang2022self, brown2024large}, and decoding interventions such as temperature control or temperature schedules \citep{yang2025let,dang2026temperature}. However, these strategies either expand the sample budget at rollout time or leave the estimated benefit of exploration unquantified. We argue that efficient RLVR exploration requires both an active sampling intervention and an explicit estimate of exploration gain, so that the exploration induced at the rollout stage can be connected to the policy-gradient update optimized during training.

To instantiate this principle, we propose Temperature-Grouped Reinforcement Learning (TGRL), a novel RLVR framework that estimates exploration gain and refines it into token-level credit. As illustrated in Figure~\ref{fig:tgrl_overview}, TGRL samples low-temperature reference rollouts and high-temperature exploration rollouts for each prompt. The mean-reward gap between the two groups estimates the prompt-level exploration gain, which is injected into the mixed-group advantage to produce a reward signal that reflects both within-group ranking and exploration benefit. TGRL then computes token-level JS divergence from the same logits and allocates credit proportionally, concentrating gradient updates on temperature-sensitive positions. In this way, TGRL translates the reward benefit of temperature-induced exploration into token-level policy-gradient credit. Built on this design, our contributions are as follows:
\begin{enumerate}
\item \textbf{Theoretical characterization of exploration gain and token-level credit allocation.}
We formalize temperature as a controllable perturbation of the same policy, showing that the reward contrast between low- and high-temperature groups estimates prompt-level exploration gain. We also characterize token-level JS divergence as a local distributional response to the temperature perturbation, providing a principled basis for allocating more credit to temperature-sensitive positions.

    \item \textbf{Exploration-gain estimation with JS-based token credit allocation.}
    We introduce TGRL, which combines two mechanisms: mixed-temperature grouping estimates whether broader exploration improves the verifiable reward for each prompt, and JS-based credit allocation maps this grouped signal into token-level policy-gradient credit for high-temperature exploratory trajectories.

    \item \textbf{Strong empirical gains across reasoning, coding, and agent tasks.}
    Across 11 benchmarks, TGRL broadly improves over strong RLVR baselines: it improves the six-benchmark math average by $1.6\%$ at 32B, raises CodeForces rating by $196.7$ points and LiveCodeBench Pass@16 by $4.4\%$, and improves ALFWorld/WebShop success rates by $6.3\%$/$4.9\%$. Ablations, training-dynamics analyses, and wall-clock measurements further support the proposed grouping-and-credit-allocation mechanism.
\end{enumerate}

\begin{figure}[t]
    \centering
    \includegraphics[width=\textwidth]{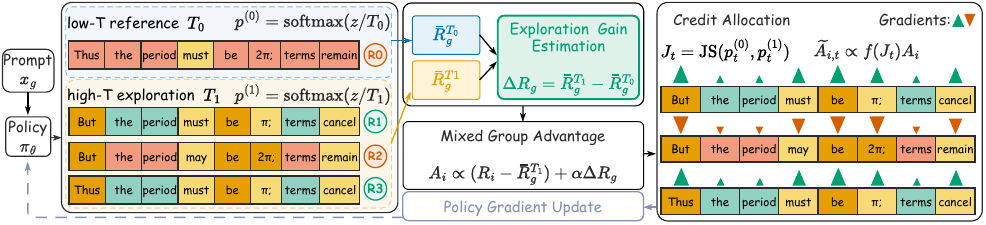}
\caption{\textbf{Overview of TGRL.} TGRL forms low-temperature reference and high-temperature exploration groups to estimate exploration gain from their mean-reward gap, and then assigns this gain to temperature-sensitive tokens via token-level JS divergence for policy-gradient updates.}
    \label{fig:tgrl_overview}
    \vspace{-0.5em}
\end{figure}

\section{Related Work}
\label{related_work}

\paragraph{RLVR optimization algorithms.}
RLVR has become a core post-training paradigm for improving mathematical, coding, and agentic reasoning in LLMs \citep{shao2024deepseekmath,lu2026contextual,yang2026your}. 
Most RLVR algorithms instantiate this paradigm through policy-gradient optimization over sampled completions: PPO provides the canonical clipped template \citep{ouyang2022training}, RLOO uses leave-one-out baselines \citep{ahmadian2024back}, and REINFORCE++ incorporates PPO-style stabilization into critic-free REINFORCE training to reduce overhead \citep{hu2025reinforcepp}. 
Building on this foundation, GRPO establishes group-relative normalization over model-generated responses as a standard RLVR recipe for verifiable reasoning \citep{shao2024deepseekmath,liu2026cdrrm}. 
Subsequent methods primarily refine this group-based framework by correcting objective biases or improving large-scale training stability: Dr.~GRPO and VAPO address biases from length effects, sparse rewards, and value estimation \citep{liu2025drgrpo,yue2025vapo}, while DAPO and GSPO improve stability through decoupled clipping, dynamic sampling, and sequence-level importance control \citep{yu2025dapo,zheng2025gspo}. 
Collectively, these methods substantially strengthen the optimization side of RLVR. However, they commonly assume that the rollout groups already contain sufficiently diverse trajectories, thereby neglecting the active estimation and exploitation of exploration gain within a fixed budget.

\paragraph{Exploration strategies in RL.}
Despite optimizer advances, exploration during rollout generation remains a bottleneck in RLVR. Recent analyses link this issue to shrinking trajectory diversity, insufficient long-horizon search, and the mismatch between single-trajectory optimization and multi-sample success criteria \citep{cui2025entropy,cheng2026reasoning,chen2025pass}. Classical RL studies exploration through uncertainty-driven, novelty- or curiosity-based, perturbation-based, and maximum-entropy mechanisms \citep{jaksch2010nearoptimal,osband2016bootstrapped,tang2017exploration,pathak2017curiosity,plappert2017parameter,fortunato2018noisy,haarnoja2018soft}. In LLM inference, test-time scaling promotes exploration by sampling, searching over, or verifying multiple reasoning trajectories before selecting or aggregating outputs \citep{wang2022self,yao2023tree,brown2024large,snell2024scaling}. While effective, these methods typically rely on increasing the number of sampled trajectories or rollouts, making exploration costly. Training-time RLVR methods instead incorporate exploration into optimization through rollout filtering and grouping \citep{xu2025not,chen2025dra}, entropy/search/pass@$k$ objectives \citep{cui2025entropy,cheng2026reasoning,chen2025pass,walder2025pass}, and temperature scheduling or policies \citep{yang2025let,dang2026temperature,zhou2026lookinward}. Nevertheless, they mainly treat exploration as a global regularizer, auxiliary objective, or decoding policy, rather than as a reward-grounded, prompt-level exploration-gain signal induced by mixed-temperature grouping. TGRL fills this gap by estimating exploration gain from reward gaps between low- and high-temperature groups and allocating the signal as token-level credit to temperature-sensitive tokens via token-level JS divergence.

\section{Method}
\label{sec:method}

We first establish the theoretical basis for the two mechanisms of TGRL (Section~\ref{sec:theory_analysis}), then describe their joint instantiation in the TGRL algorithm (Section~\ref{sec:algorithm}).

\subsection{Theoretical Analysis}
\label{sec:theory_analysis}

We analyze the theoretical roles of TGRL's two mechanisms: how mixed-temperature grouping produces an exploration signal, and how this signal can be refined into token-level credit. First, we show that the low-/high-temperature reward gap estimates prompt-level exploration gain and appears as an additive shift in mixed-group advantages. Second, we show that token-level JS divergence captures local sensitivity to the temperature intervention, thereby allocating the signal to temperature-sensitive tokens.

\paragraph{Setup.}
For each prompt $x_g$, TGRL draws low- and high-temperature rollouts indexed by $\mathcal{G}_{g,T_0}$ and $\mathcal{G}_{g,T_1}$, respectively, forming $\mathcal{G}_{g,T_0+T_1}\coloneqq\mathcal{G}_{g,T_0}\cup\mathcal{G}_{g,T_1}$ with $|\mathcal{G}_{g,T_0+T_1}|=|\mathcal{G}_{g,T_0}|+|\mathcal{G}_{g,T_1}|$. Rollout $i\in\mathcal{G}_{g,T_0+T_1}$ yields a scalar reward $R_i$ and logits $z_{i,t}\in\mathbb{R}^{|V|}$ at response position $t$. We define
\begin{equation}
\label{eq:js_setup}
p_{i,t}^{(0)} \coloneqq \mathrm{softmax}(z_{i,t}/T_0),\qquad
p_{i,t}^{(1)} \coloneqq \mathrm{softmax}(z_{i,t}/T_1),\qquad
J_{i,t} \coloneqq \mathrm{JS}\!\bigl(p_{i,t}^{(0)},p_{i,t}^{(1)}\bigr),
\end{equation}
where $\mathrm{JS}$ denotes the Jensen--Shannon divergence \citep{lin2002divergence}, a symmetric and bounded measure of discrepancy between probability distributions, as explained in Appendix~\ref{app:js_allocation_appendix}. Thus $J_{i,t}$ measures how strongly the local next-token distribution responds to the temperature switch from $T_0$ to $T_1$.

\subsubsection{Exploration Gain Estimation}
\label{sec:theory_exploration_gain}

\begin{lemma}[Prompt-level exploration gain under mixed-temperature grouping]
\label{lem:dual_var}
For a fixed prompt $x$, suppose the low-temperature rewards are independent draws from $\pi_\theta^{T_0}(\cdot\mid x)$ and the high-temperature rewards are independent draws from $\pi_\theta^{T_1}(\cdot\mid x)$, independently across the two subgroups. Define
\begin{equation}
\label{eq:dual_gap_def}
    \bar{R}_g^{T_0} \coloneqq \frac{1}{|\mathcal{G}_{g,T_0}|}\sum_{i\in\mathcal{G}_{g,T_0}} R_i,
    \qquad
    \bar{R}_g^{T_1} \coloneqq \frac{1}{|\mathcal{G}_{g,T_1}|}\sum_{i\in\mathcal{G}_{g,T_1}} R_i,
    \qquad
    \Delta R_g \coloneqq \bar{R}_g^{T_1} - \bar{R}_g^{T_0}.
\end{equation}
Then
\begin{equation}
\label{eq:dual_var}
    \mathbb{E}[\Delta R_g \mid x] = \mu_{T_1}(x) - \mu_{T_0}(x),
    \qquad
    \mathrm{Var}(\Delta R_g \mid x) = \frac{\sigma_{T_1}^2(x)}{|\mathcal{G}_{g,T_1}|} + \frac{\sigma_{T_0}^2(x)}{|\mathcal{G}_{g,T_0}|},
\end{equation}
where $\mu_T(x) \coloneqq \mathbb{E}_{y\sim\pi_\theta^T(\cdot\mid x)}[R(x,y)]$ and $\sigma_T^2(x) \coloneqq \mathrm{Var}_{y\sim\pi_\theta^T(\cdot\mid x)}[R(x,y)]$.
\end{lemma}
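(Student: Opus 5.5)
The argument is a direct computation with sample means, using linearity of expectation for the first identity and independence for the second. Write $n_0\coloneqq|\mathcal{G}_{g,T_0}|$ and $n_1\coloneqq|\mathcal{G}_{g,T_1}|$, and condition on the prompt $x$ throughout. By hypothesis each $R_i$ with $i\in\mathcal{G}_{g,T_1}$ equals $R(x,y_i)$ with $y_i\sim\pi_\theta^{T_1}(\cdot\mid x)$. Hence $\mathbb{E}[R_i\mid x]=\mu_{T_1}(x)$ and $\mathrm{Var}(R_i\mid x)=\sigma_{T_1}^2(x)$, and the analogous statements hold on the low-temperature subgroup with $T_0$. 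These moments exist because verifiable rewards are bounded; alternatively we simply assume $\sigma_T^2(x)<\infty$. This is the only regularity condition needed.

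\textbf{Mean.} By linearity, $\mathbb{E}[\bar R_g^{T_1}\mid x]=\frac{1}{n_1}\sum_{i\in\mathcal{G}_{g,T_1}}\mu_{T_1}(x)=\mu_{T_1}(x)$. The same computation gives $\mathbb{E}[\bar R_g^{T_0}\mid x]=\mu_{T_0}(x)$. Subtracting yields $\mathbb{E}[\Delta R_g\mid x]=\mu_{T_1}(x)-\mu_{T_0}(x)$. This step needs no independence, only identical marginals.

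\textbf{Variance.} Within the high-temperature subgroup the draws are independent, so all cross-covariances vanish and
\[
\mathrm{Var}(\bar R_g^{T_1}\mid x)=\frac{1}{n_1^2}\sum_{i\in\mathcal{G}_{g,T_1}}\sigma_{T_1}^2(x)=\frac{\sigma_{T_1}^2(x)}{n_1}.
\]
Likewise $\mathrm{Var}(\bar R_g^{T_0}\mid x)=\sigma_{T_0}^2(x)/n_0$. We then expand
\[
\mathrm{Var}(\Delta R_g\mid x)=\mathrm{Var}(\bar R_g^{T_1}\mid x)+\mathrm{Var}(\bar R_g^{T_0}\mid x)-2\,\mathrm{Cov}(\bar R_g^{T_1},\bar R_g^{T_0}\mid x).
\]
The covariance term is zero because the two subgroups are independent given $x$. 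Substituting the two subgroup variances gives the claimed formula. Note that the minus sign in $\Delta R_g$ does not change the variance, since $\mathrm{Var}(-Z)=\mathrm{Var}(Z)$.

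The only point needing care is interpreting the hypotheses as conditional independence given $x$, both within and across subgroups. The rollouts share the same parameters $\theta$ and the same prompt, but they are sampled with fresh randomness, so conditional on $x$ and $\theta$ they are independent. Once this is stated explicitly, the lemma is the standard two-sample mean-difference calculation.
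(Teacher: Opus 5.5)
Your proof is correct and takes essentially the same route as the paper's. Both use linearity of expectation for the mean, then expand the variance of the difference, with the cross-covariance vanishing by independence of the two subgroups and each i.i.d. subgroup average contributing $\sigma_T^2(x)/|\mathcal{G}_{g,T}|$; your extra remarks (the mean identity needs only identical marginals, and bounded rewards guarantee finite moments) are accurate refinements the paper leaves implicit.
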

\noindent Hence $\Delta R_g$ is an unbiased estimate of the prompt-level exploration gain. Uniform-temperature group normalization has no within-prompt temperature contrast and therefore cannot recover this quantity directly (Appendix~\ref{app:grpo_gap}).

\begin{proposition}[Mixed-group advantage decomposition]
\label{prop:mixed_adv}
Let $\mathcal{G}_{g,T_0+T_1} \coloneqq \mathcal{G}_{g,T_0} \cup \mathcal{G}_{g,T_1}$, and define
\begin{equation}
\label{eq:mixed_group_mean}
    \mu_g \coloneqq \frac{1}{|\mathcal{G}_{g,T_0+T_1}|}\sum_{i\in \mathcal{G}_{g,T_0+T_1}} R_i,
    \qquad
    \sigma_g^2 \coloneqq \frac{1}{|\mathcal{G}_{g,T_0+T_1}|}\sum_{i\in \mathcal{G}_{g,T_0+T_1}}(R_i-\mu_g)^2.
\end{equation}
Consider the mixed-group advantage
\begin{equation}
\label{eq:mixed_group_adv}
    A_i \coloneqq \frac{R_i-\mu_g}{\sqrt{\sigma_g^2+\epsilon}},
\end{equation}
where $\epsilon>0$ is a numerical stabilizer. Then
\begin{equation}
\label{eq:mixed_group_decomp}
    A_i
    =
    \begin{cases}
        \dfrac{R_i-\bar{R}_g^{T_0}}{\sqrt{\sigma_g^2+\epsilon}} - \dfrac{|\mathcal{G}_{g,T_1}|}{|\mathcal{G}_{g,T_0+T_1}|}\,\dfrac{\Delta R_g}{\sqrt{\sigma_g^2+\epsilon}}, & i \in \mathcal{G}_{g,T_0}, \\[10pt]
        \dfrac{R_i-\bar{R}_g^{T_1}}{\sqrt{\sigma_g^2+\epsilon}} + \dfrac{|\mathcal{G}_{g,T_0}|}{|\mathcal{G}_{g,T_0+T_1}|}\,\dfrac{\Delta R_g}{\sqrt{\sigma_g^2+\epsilon}}, & i \in \mathcal{G}_{g,T_1}.
    \end{cases}
\end{equation}
\end{proposition}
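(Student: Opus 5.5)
The plan is to use the fact that the mixed-group mean is a weighted average of the two subgroup means. Write $n_0 \coloneqq |\mathcal{G}_{g,T_0}|$, $n_1 \coloneqq |\mathcal{G}_{g,T_1}|$ and $n \coloneqq |\mathcal{G}_{g,T_0+T_1}| = n_0+n_1$. Since the two index sets are disjoint, splitting the sum in \eqref{eq:mixed_group_mean} over $\mathcal{G}_{g,T_0}$ and $\mathcal{G}_{g,T_1}$ and using the definitions in \eqref{eq:dual_gap_def} gives
\begin{equation*}
\mu_g = \frac{n_0}{n}\,\bar{R}_g^{T_0} + \frac{n_1}{n}\,\bar{R}_g^{T_1}.
\end{equation*}
This is the only structural input. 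No probabilistic assumption from Lemma~\ref{lem:dual_var} is needed, because the decomposition is a deterministic algebraic identity that holds for every realization of the rewards.

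Next, I express $\mu_g$ relative to each subgroup mean. Using $n_0/n + n_1/n = 1$ and $\Delta R_g = \bar{R}_g^{T_1}-\bar{R}_g^{T_0}$, we get
\begin{equation*}
\mu_g = \bar{R}_g^{T_0} + \frac{n_1}{n}\,\Delta R_g, \qquad \mu_g = \bar{R}_g^{T_1} - \frac{n_0}{n}\,\Delta R_g.
\end{equation*}
For $i\in\mathcal{G}_{g,T_0}$, substitute the first form into $R_i-\mu_g$ to obtain $(R_i-\bar{R}_g^{T_0}) - \tfrac{n_1}{n}\Delta R_g$. For $i\in\mathcal{G}_{g,T_1}$, substitute the second form to obtain $(R_i-\bar{R}_g^{T_1}) + \tfrac{n_0}{n}\Delta R_g$.

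Finally, I divide both expressions by the common positive scalar $\sqrt{\sigma_g^2+\epsilon}$, which is well defined because $\epsilon>0$. This yields exactly the two cases of \eqref{eq:mixed_group_decomp}. The denominator is the same mixed-group quantity in both cases, so it passes through linearly without needing to be decomposed.

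There is no substantive obstacle. The only point requiring care is the bookkeeping of which weight goes with which subgroup: the $T_0$ case carries the \emph{opposite} subgroup's fraction $n_1/n$ with a minus sign, and the $T_1$ case carries $n_0/n$ with a plus sign. I will check this with a sanity test. When $\Delta R_g=0$, both cases should reduce to centering at the common mean. Summing $A_i$ over all of $\mathcal{G}_{g,T_0+T_1}$ should give zero, which it does because $-n_0 n_1/n + n_1 n_0/n = 0$.
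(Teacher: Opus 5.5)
Your proposal is correct and follows the paper's proof essentially step for step. Like you, the paper rewrites $\mu_g$ as $\bar{R}_g^{T_0} + \tfrac{|\mathcal{G}_{g,T_1}|}{|\mathcal{G}_{g,T_0+T_1}|}\Delta R_g$ and as $\bar{R}_g^{T_1} - \tfrac{|\mathcal{G}_{g,T_0}|}{|\mathcal{G}_{g,T_0+T_1}|}\Delta R_g$, substitutes each form into $R_i-\mu_g$ for the corresponding subgroup, and divides by the shared $\sqrt{\sigma_g^2+\epsilon}$, using only this deterministic algebra and not the independence assumptions of Lemma~\ref{lem:dual_var}.
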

\noindent Proposition~\ref{prop:mixed_adv} shows that mixed-group normalization shifts every high-temperature advantage by an additive term proportional to the exploration gain $\Delta R_g$. Averaging Eq.~\eqref{eq:mixed_group_decomp} over each subgroup yields the corresponding subgroup-mean identity (Appendix~\ref{app:subgroup_means}). A synthetic diagnostic in which prompt-level exploration gain is provided in Appendix~\ref{app:grouping_validation_details}.

\subsubsection{Token-Level Credit Allocation}
\label{sec:theory_js_allocation}

Mixed-temperature grouping determines \emph{whether} broader exploration yields a positive reward gain. We now characterize \emph{how} JS-based weights distribute that signal across token positions.

\begin{lemma}[Global temperature-response control of token-level JS]
\label{lem:js_local}
Let $\beta_0 \coloneqq 1/T_0$, $\beta_1 \coloneqq 1/T_1$, and $\Delta\beta \coloneqq \beta_1-\beta_0$. For fixed logits $z_{i,t}$, define $p_{i,t}^{(\beta)} \coloneqq \mathrm{softmax}(\beta z_{i,t})$. Then
\begin{equation}
\label{eq:js_expansion}
    J_{i,t}
    \le
    \frac{(\Delta\beta)^2}{8}
    \int_{0}^{1}
    \mathrm{Var}_{a\sim p_{i,t}^{(\beta_0+s\Delta\beta)}}[z_{i,t,a}]\,ds.
\end{equation}
\end{lemma}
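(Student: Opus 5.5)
The approach is to exploit the fact that $\beta\mapsto p^{(\beta)}_{i,t}=\mathrm{softmax}(\beta z_{i,t})$ is a one-parameter exponential family with sufficient statistic $z_{i,t,a}$ and log-partition $A(\beta)=\log\sum_a e^{\beta z_{i,t,a}}$. In this family $A'(\beta)=\mathbb{E}_{p^{(\beta)}}[z]$ and $A''(\beta)=\mathrm{Var}_{p^{(\beta)}}[z]=:V(\beta)$. Every KL divergence between two members then has an exact integral form
\begin{equation*}
\mathrm{KL}\bigl(p^{(a)}\,\|\,p^{(b)}\bigr)=A(b)-A(a)-(b-a)A'(a)=\int_a^b (b-u)\,V(u)\,du ,
\end{equation*}
valid for either ordering of $a,b$ with the obvious sign convention. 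So the plan is to bound $J_{i,t}$ by KL divergences within the family and then read off an integral of $V$ over $[\beta_0,\beta_1]$. The final change of variables $u=\beta_0+s\Delta\beta$ turns $\int_{\beta_0}^{\beta_1}(\cdot)\,du$ into $\Delta\beta\int_0^1(\cdot)\,ds$.

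The first step is a variational upper bound on JS. Write $M=\tfrac12(P+Q)$ with $P=p^{(\beta_0)}$ and $Q=p^{(\beta_1)}$. For any distribution $R$,
\begin{equation*}
\tfrac12\mathrm{KL}(P\|R)+\tfrac12\mathrm{KL}(Q\|R)=\mathrm{JS}(P,Q)+\mathrm{KL}(M\|R),
\end{equation*}
so $\mathrm{JS}(P,Q)\le\tfrac12\mathrm{KL}(P\|R)+\tfrac12\mathrm{KL}(Q\|R)$. I would choose $R=p^{(\beta_m)}$ with $\beta_m=\tfrac12(\beta_0+\beta_1)$, the normalized geometric mean. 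The exponential-family formula then gives
\begin{equation*}
J_{i,t}\le \tfrac12\int_{\beta_0}^{\beta_m}(\beta_m-u)V(u)\,du+\tfrac12\int_{\beta_m}^{\beta_1}(u-\beta_m)V(u)\,du .
\end{equation*}
This is an integral of $V$ against a tent-shaped weight $w(u)=\tfrac12|u-\beta_m|$, which vanishes at the midpoint and rises to $\Delta\beta/4$ at the endpoints. For constant $V$ it evaluates to exactly $(\Delta\beta)^2V/8$, which matches the claimed constant. This confirms that $1/8$ is the local Fisher-information constant of JS.

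The main obstacle is passing from the weighted integral $\int w\,V$ to the uniform average $\tfrac{\Delta\beta}{8}\int V$ when $V$ is not constant. The crude bound $w\le\Delta\beta/4$ only gives the constant $1/4$ (the classical $\mathrm{JS}\le\tfrac14$-Jeffreys bound). My first attempt would be to optimize the reference point $\beta_r\in[\beta_0,\beta_1]$ rather than fixing the midpoint. Choosing $\beta_r$ so that the two half-integrals balance should shift weight toward regions where $V$ is small. A pointwise comparison is not enough, so one needs an averaging or rearrangement argument showing $\min_{\beta_r}\int w_{\beta_r}V\le\tfrac{\Delta\beta}{4}\int V$.

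If that fails, the fallback is a direct differential argument. Set $f(s)=\mathrm{JS}(p^{(\beta_0)},p^{(\beta_0+s\Delta\beta)})$, so that $f(0)=0$. Differentiate in $s$ using the exponential-family score $z-A'(\beta)$, which gives
\begin{equation*}
f'(s)=\tfrac{\Delta\beta}{2}\,\mathrm{Cov}_{p^{(\beta_s)}}\bigl(z,\log(p^{(\beta_s)}/m_s)\bigr),
\end{equation*}
where $m_s$ is the mixture and $\beta_s=\beta_0+s\Delta\beta$. I would then bound this covariance by Cauchy--Schwarz, using that $\log(p/m)$ is controlled by the $\beta$-increment times $z$ via the concavity bound $\log m\ge\tfrac12\log p+\tfrac12\log q$. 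Integrating over $s$ should then yield the stated $(\Delta\beta)^2/8$ factor against $\int_0^1 V(\beta_s)\,ds$. I expect the constant-tracking in this step to be where the real work lies.
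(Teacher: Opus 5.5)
Your primary route has a genuine gap: the inequality you hope to get by optimizing the reference point is false. With $R=p^{(\beta_r)}$ in the family, your bound reads $J_{i,t}\le\tfrac12\int|u-\beta_r|\,V(u)\,du$ over the interval between $\beta_0$ and $\beta_1$. The weight $\tfrac12|u-\beta_r|$ is V-shaped, and its average over the interval is at least $|\Delta\beta|/8$, with equality only at the midpoint. So it is never dominated by the uniform weight $|\Delta\beta|/8$, and for suitable $V$ no choice of $\beta_r$ helps, even one made after seeing $V$. Take $V$ with two bumps of equal mass $c$ near the two endpoints and almost zero in between. Every $\beta_r$ then gives roughly $c|\Delta\beta|/2$, whereas the lemma's right-hand side is roughly $c|\Delta\beta|/4$.

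Such profiles do occur for softmax families. Take one token at logit $1$, $N$ tokens at logit $0$, and $N^{3/2}$ tokens at logit $-1$. This gives two unit-mass transitions of $V$ near $\beta=\tfrac12\log N$ and $\beta=\log N$, with an almost degenerate distribution between them. Placing $\beta_1$ and $\beta_0$ just outside these transitions makes your bound about twice the target as $N\to\infty$. In other words, the slack $\mathrm{KL}(M\|R)$ you lose by restricting $R$ to the exponential family is at least as large as the quantity you want.

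The fallback does not rescue this as sketched. The inequality $\log m\ge\tfrac12\log p+\tfrac12\log q$ is one-sided and cannot control a covariance. The natural two-sided version uses that $\log(q_s/m_s)$ is a monotone function of $z$ with slope at most $s|\Delta\beta|$. It only yields $f(1)\le\tfrac{(\Delta\beta)^2}{2}\int_0^1 s\,V(\beta_s)\,ds$, whose weight is non-uniform and already a factor $2$ too large for constant $V$. The constant-tracking you defer is exactly the content of the lemma.

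The missing idea is to compare JS directly with the Jeffreys divergence of the two endpoints, instead of routing through an intermediate member of the family. Apply your own KL formula in both orders and add. The weights $(\beta_1-u)$ and $(u-\beta_0)$ sum to the constant $\Delta\beta$, so $\mathrm{KL}(P\|Q)+\mathrm{KL}(Q\|P)=\Delta\beta\,\bigl(A'(\beta_1)-A'(\beta_0)\bigr)=(\Delta\beta)^2\int_0^1 V(\beta_0+s\Delta\beta)\,ds$. This is exactly the uniform weight you were trying to manufacture.

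It then suffices that $\mathrm{JS}\le\tfrac18\bigl(\mathrm{KL}(P\|Q)+\mathrm{KL}(Q\|P)\bigr)$ holds globally, not merely the constant $\tfrac14$ you quoted. This follows from a pointwise comparison of $f$-divergence generators. Let $f_{\mathrm J}(u)=(u-1)\log u$ and $f_{\mathrm{JS}}(u)=\tfrac u2\log\tfrac{2u}{1+u}+\tfrac12\log\tfrac{2}{1+u}$. Then $g=f_{\mathrm J}-8f_{\mathrm{JS}}$ satisfies $g(1)=g'(1)=0$ and $g''(u)=\tfrac{(u-1)^2}{u^2(1+u)}\ge0$. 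Hence $8f_{\mathrm{JS}}\le f_{\mathrm J}$ on $(0,\infty)$, and summing against $q_a$ gives the bound. This two-step argument, the exact Jeffreys path identity plus the generator inequality, is the paper's proof.
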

\noindent Lemma~\ref{lem:js_local} bounds token-level JS by the path-averaged logit variance along the interpolation from $T_0$ to $T_1$. Larger $J_{i,t}$ therefore identifies positions whose next-token distribution is more sensitive to the temperature intervention.

To state the allocation result, let $v_t(a)$ denote the normalized continuation value of choosing token $a$ at position $t$; concretely, it can be interpreted as the expected downstream reward after fixing token $a$ at that position and continuing the rollout. We assume $v_t(a)\in[0,1]$.

\begin{proposition}[Monotone JS weighting allocates more credit to temperature-sensitive positions]
\label{prop:credit_quality}
Fix a high-temperature rollout $i$ with valid positions $\mathcal{T}_i$. For each $t\in\mathcal{T}_i$, let $v_t\colon \{1,\dots,|V|\}\to[0,1]$ be such a bounded continuation-value map. Then
\begin{equation}
\label{eq:local_effect_bound}
\left|
\mathbb{E}_{a\sim p_{i,t}^{(1)}}[v_t(a)]
-
\mathbb{E}_{a\sim p_{i,t}^{(0)}}[v_t(a)]
\right|
\le
\sqrt{2J_{i,t}},
\qquad t\in\mathcal{T}_i.
\end{equation}
Moreover, if token weights are defined by
\begin{equation}
\label{eq:theory_weight_profile}
q_t \coloneqq \frac{h_i(J_{i,t})}{\sum_{s\in\mathcal{T}_i} h_i(J_{i,s})},
\qquad t\in\mathcal{T}_i,
\end{equation}
for any trajectory-wise non-decreasing $h_i:\mathbb{R}_+\to\mathbb{R}_+$ with a positive denominator, then
\begin{equation}
\label{eq:theory_weight_allocation}
\sum_{t\in\mathcal{T}_i} q_t\,\sqrt{2J_{i,t}}
\;\ge\;
\frac{1}{|\mathcal{T}_i|}\sum_{t\in\mathcal{T}_i} \sqrt{2J_{i,t}}.
\end{equation}
\end{proposition}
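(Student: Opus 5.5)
The proof has two parts: the pointwise bound \eqref{eq:local_effect_bound}, and the averaging inequality \eqref{eq:theory_weight_allocation}.

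\textbf{Pointwise bound.} We go from JS divergence to total variation through the mixture $m \coloneqq \tfrac12(p^{(0)}_{i,t}+p^{(1)}_{i,t})$, dropping indices where harmless.

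Since $v_t$ takes values in $[0,1]$, the difference of expectations is bounded by total variation:
\[
\bigl|\mathbb{E}_{p^{(1)}}[v_t]-\mathbb{E}_{p^{(0)}}[v_t]\bigr|\le \mathrm{TV}(p^{(0)},p^{(1)}).
\]
To see this, shift $v_t$ by the constant $\tfrac12$; this does not change the difference and gives $\|v_t-\tfrac12\|_\infty\le\tfrac12$. The difference is then at most $\tfrac12\|p^{(1)}-p^{(0)}\|_1$.

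Next, the triangle inequality through $m$ gives
\[
\mathrm{TV}(p^{(0)},p^{(1)})\le \mathrm{TV}(p^{(0)},m)+\mathrm{TV}(m,p^{(1)}).
\]
Apply Pinsker to each term, $\mathrm{TV}(p,m)\le\sqrt{\tfrac12\mathrm{KL}(p\|m)}$, and then Cauchy--Schwarz, $\sqrt a+\sqrt b\le\sqrt{2(a+b)}$. This yields
\[
\mathrm{TV}\le\sqrt{\mathrm{KL}(p^{(0)}\|m)+\mathrm{KL}(p^{(1)}\|m)}=\sqrt{2J_{i,t}},
\]
using $J=\tfrac12[\mathrm{KL}(p^{(0)}\|m)+\mathrm{KL}(p^{(1)}\|m)]$ with natural logarithms. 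This is exactly \eqref{eq:local_effect_bound}.

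A tighter route also works: since $p^{(0)}-m=\tfrac12(p^{(0)}-p^{(1)})$, we have $\mathrm{TV}(p^{(0)},m)=\tfrac12\mathrm{TV}(p^{(0)},p^{(1)})$, which gives $\mathrm{TV}\le 2\sqrt{J}$. Either version is enough for the statement. The main thing to check is that the constant is consistent with the paper's JS convention, which we take to be the natural-log mixture form from Appendix~\ref{app:js_allocation_appendix}.

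\textbf{Averaging inequality.} Write $a_t\coloneqq\sqrt{2J_{i,t}}$ and $w_t\coloneqq h_i(J_{i,t})$. Because $h_i$ is non-decreasing and $\sqrt{2\,\cdot}$ is increasing, the sequences $(w_t)$ and $(a_t)$ are similarly ordered in $t$: for all $s,t$,
\[
(w_t-w_s)(a_t-a_s)\ge0.
\]
If $J_{i,t}=J_{i,s}$, then $a_t=a_s$ and the product is zero, so ties cause no trouble. Summing over all pairs $(s,t)\in\mathcal{T}_i^2$ gives the Chebyshev sum inequality
\[
|\mathcal{T}_i|\sum_t w_ta_t\ge\Bigl(\sum_t w_t\Bigr)\Bigl(\sum_t a_t\Bigr).
\]
Dividing by $|\mathcal{T}_i|\sum_s w_s>0$, which is allowed because the denominator is assumed positive, gives $\sum_t q_ta_t\ge\frac1{|\mathcal{T}_i|}\sum_t a_t$. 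This is \eqref{eq:theory_weight_allocation}.

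Nothing here is genuinely hard. The only points needing care are the JS normalization constant in the pointwise bound and the positivity of the denominator in the averaging step.
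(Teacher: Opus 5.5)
Your proof is correct and follows the paper's argument: the TV variational bound, then Pinsker applied to both halves through the midpoint $m$ (your triangle-inequality-plus-Cauchy--Schwarz packaging gives the same constant as the paper's halving identity), then Chebyshev's sum inequality, which the paper phrases as a nonnegative covariance under a uniformly drawn index using the same pairwise-product argument. One correction to your aside: $2\sqrt{J_{i,t}}=\sqrt{4J_{i,t}}$ is weaker than $\sqrt{2J_{i,t}}$, not tighter, and would not give the stated bound; the halving identity $\mathrm{TV}(p^{(0)},m)=\tfrac12\mathrm{TV}(p^{(0)},p^{(1)})$ yields $\sqrt{2J_{i,t}}$ only if you apply Pinsker to both halves and average, $\mathrm{TV}(p^{(0)},p^{(1)})^2\le \mathrm{KL}(p^{(0)}\|m)+\mathrm{KL}(p^{(1)}\|m)=2J_{i,t}$, which is exactly how the paper proceeds.
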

\noindent Proposition~\ref{prop:credit_quality} has two implications. First, $\sqrt{2J_{i,t}}$ upper-bounds how much the expected continuation value at position $t$ can change when the next-token distribution is switched from $T_0$ to $T_1$. Second, any monotone JS-based weighting allocates more mass than uniform weighting to positions with larger certified temperature-sensitivity budget. Appendix~\ref{app:proof_credit} gives the proof, the covariance and ratio forms of Eq.~\eqref{eq:theory_weight_allocation}, and the connection to the implemented weight function. Appendix~\ref{app:js_allocation_appendix} reports a local-cooling probe comparing JS-, entropy-, and margin-based span selectors: JS-targeted spans produce the highest flip rate ($12.1\%$) and score drop ($0.241$) at matched position ratios, validating temperature sensitivity as an effective criterion for token-level credit allocation in practice.

\subsection{Algorithm}
\label{sec:algorithm}

The algorithm instantiates the two theoretical roles established above: mixed-temperature grouping determines \emph{whether} broader exploration yields a positive reward gain, and token-level JS determines \emph{how} that exploration signal is allocated across token positions. Algorithm~\ref{alg:tgrl} gives the full procedure.

\paragraph{Mixed-temperature grouped advantage.}
Using the group partition and notation from Section~\ref{sec:theory_analysis}, TGRL computes the group-normalized advantage
$A_i = (R_i - \mu_g) / \sqrt{\sigma_g^2 + \epsilon}$,
as defined in Eq.~\eqref{eq:mixed_group_adv}.
The normalization is applied to a temperature-contrastive rollout group, consisting of low-temperature and high-temperature rollouts from the same prompt.
For a high-temperature rollout, the resulting advantage compares its verifiable reward against the mixed-temperature group, thereby estimating whether broader search improves the outcome beyond the low-temperature reference.

\paragraph{JS-based token credit allocation.}
The scalar $A_i$ is a rollout-level signal. For each high-temperature rollout $i\in\mathcal{G}_{g,T_1}$, we replay the sampled response to obtain logits $z_{i,t}$ at valid response positions $t\in\mathcal{T}_i$, where $\mathcal{T}_i$ is the response mask and $N_i\coloneqq |\mathcal{T}_i|$. We then compute
\begin{equation}
\label{eq:tgrl_js_def}
p_{i,t}^{(0)} \coloneqq \mathrm{softmax}(z_{i,t}/T_0),\qquad
p_{i,t}^{(1)} \coloneqq \mathrm{softmax}(z_{i,t}/T_1),\qquad
J_{i,t} \coloneqq \mathrm{JS}\!\bigl(p_{i,t}^{(0)},\,p_{i,t}^{(1)}\bigr),
\qquad t\in\mathcal{T}_i,
\end{equation}
as in Eq.~\eqref{eq:js_setup}. Let $m_{i,t} \coloneqq \tfrac{1}{2}\bigl(p_{i,t}^{(0)} + p_{i,t}^{(1)}\bigr)$ denote the midpoint distribution. Then
\begin{equation}
\label{eq:tgrl_js_expanded}
J_{i,t}
=
\frac{1}{2}\,\mathrm{KL}\!\bigl(p_{i,t}^{(0)}\,\|\,m_{i,t}\bigr)
+
\frac{1}{2}\,\mathrm{KL}\!\bigl(p_{i,t}^{(1)}\,\|\,m_{i,t}\bigr)
=
\frac{1}{2}\sum_{a=1}^{|V|} p_{i,t,a}^{(0)} \log \frac{p_{i,t,a}^{(0)}}{m_{i,t,a}}
+
\frac{1}{2}\sum_{a=1}^{|V|} p_{i,t,a}^{(1)} \log \frac{p_{i,t,a}^{(1)}}{m_{i,t,a}},
\end{equation}
where $a$ ranges over the vocabulary. By Proposition~\ref{prop:credit_quality}, $\sqrt{2J_{i,t}}$ upper-bounds the change in expected continuation value induced by the temperature switch, so larger $J_{i,t}$ marks positions with larger certified temperature-sensitivity budget. We map $J_{i,t}$ to practical token weights using a monotone log compression to smooth extreme JS-sensitivity spikes \citep{sakhi2024logarithmic}, followed by trajectory-wise normalization to keep the average token weight equal to one, 
\begin{equation}
\label{eq:tgrl_js_mean}
\bar J_i \coloneqq \frac{1}{N_i}\sum_{\tau\in\mathcal{T}_i} J_{i,\tau}, 
\qquad
\omega_{i,t}
\coloneqq
\log\!\left(1+\frac{J_{i,t}+\epsilon}{\bar J_i+\epsilon}\right),
\qquad
\bar \omega_i \coloneqq \frac{1}{N_i}\sum_{\tau\in\mathcal{T}_i} \omega_{i,\tau},
\qquad
w_{i,t}
\coloneqq
\frac{\omega_{i,t}}{\bar \omega_i}.
\end{equation}

We then define the token-level TGRL advantage
\begin{equation}
\label{eq:tgrl_token_adv}
\widetilde A_{i,t}
=
\begin{cases}
0, & i\in\mathcal{G}_{g,T_0}, \\[4pt]
w_{i,t}\,A_i, & i\in\mathcal{G}_{g,T_1},
\end{cases}
\qquad t\in\mathcal{T}_i.
\end{equation}
The low-temperature group contributes to the group statistics $(\mu_g, \sigma_g)$ but receives zero token advantage gradient; optimization is restricted to the high-temperature group. This asymmetric update rule ensures that the exploration-gain signal drives updates on high-temperature trajectories only: allowing the low-temperature gradient updates would introduce a competing update direction, weakening the exploration-gain correction carried by the high-temperature group (proof in Appendix~\ref{app:low_temp_group_update}).

\paragraph{Objective function.}
Policy optimization proceeds with a clipped importance-ratio objective. The ratio accounts for the discrepancy between the current distribution and the rollout distribution, while clipping constrains large updates to stabilize training:
\begin{equation}
\label{eq:tgrl_ratio}
r_{i,t}(\theta)
\coloneqq
\frac{\pi_\theta^{T_1}(y_{i,t}\mid x_g,y_{i,<t})}
{\pi_{\theta_{\mathrm{old}}}^{T_1}(y_{i,t}\mid x_g,y_{i,<t})}.
\end{equation}
Finally, we optimize the average clipped surrogate over the high-temperature rollouts,
\begin{equation}
\label{eq:tgrl_actor_obj}
\mathcal{L}_{\mathrm{TGRL}}(\theta)
=
-
\mathbb{E}_{g}\!\left[
\frac{1}{|\mathcal{G}_{g,T_1}|}
\sum_{i\in\mathcal{G}_{g,T_1}}
\frac{1}{N_i}
\sum_{t\in\mathcal{T}_i}
\min\!\Bigl(
r_{i,t}(\theta)\,\widetilde A_{i,t},
\mathrm{clip}\!\bigl(r_{i,t}(\theta),1-\epsilon_c,1+\epsilon_c\bigr)\,\widetilde A_{i,t}
\Bigr)
\right],
\end{equation}
where $\epsilon_c$ is the clipping range. The objective implements the TGRL principle: the temperature split measures prompt-level exploration gain, and token-level JS allocates that gain as credit to positions where the temperature intervention most strongly changes the decoding distribution.

\begin{table}[h]
    \centering
    \caption{Avg@16 results on mathematical reasoning benchmarks with Qwen3 variants \citep{yang2025qwen3} trained in \texttt{think} mode. \textbf{Best} results are in \textbf{bold}, second-best results are \underline{underlined}.}
    \label{tab:math1}
    \resizebox{0.98\textwidth}{!}{
    \begin{tabular}{l|cccccc|c}
        \toprule
        \textbf{Method} & \textbf{AIME24} & \textbf{AIME25} & \textbf{AMC23} & \textbf{MATH500} & \textbf{Minerva} & \textbf{Olympiad} & \textbf{Average} \\
        \midrule
        Qwen3-14B & 50.4 & 40.0 & 79.8 & 87.9 & 46.1 & 60.3 & 60.8 \\
        PPO & 55.2 & 45.8 & 90.6 & 93.9 & 49.1 & 63.5 & 66.4 \\
        GRPO & 57.5 & 45.4 & \textbf{92.9} & \underline{94.4} & \underline{49.7} & 62.2 & 67.0 \\
        DAPO & 61.0 & 46.0 & \underline{92.8} & 94.1 & \textbf{49.8} & \underline{64.8} & \underline{68.0} \\
        Dr.GRPO & \underline{61.2} & 44.5 & 91.7 & 94.2 & 49.2 & 64.5 & 67.5 \\
        RLOO & 59.8 & \underline{47.9} & 91.2 & 93.9 & 48.6 & 62.3 & 67.3 \\
        \textbf{TGRL (ours)} & \textbf{63.4} & \textbf{49.4} & 92.7 & \textbf{95.3} & 48.9 & \textbf{66.4} & \textbf{69.4} \\
                \midrule
        Qwen3-32B & 53.1 & 39.6 & 81.1 & 90.8 & 47.7 & 64.1 & 62.7 \\
        PPO & 54.6 & 43.8 & 80.0 & 91.6 & 48.9 & \underline{65.7} & 64.1 \\
        GRPO & 60.4 & 44.8 & 88.9 & 92.2 & 48.5 & 62.1 & 66.2 \\
        DAPO & \underline{61.6} & \underline{51.2} & 91.2 & 94.5 & 49.4 & 63.5 & \underline{68.6} \\
        Dr.GRPO & 55.6 & 47.9 & 90.3 & 94.0 & \underline{49.8} & 65.5 & 67.2 \\
        RLOO & 61.0 & 45.0 & \underline{91.3} & \underline{94.6} & \underline{49.8} & 63.7 & 67.5 \\
        \textbf{TGRL (ours)} & \textbf{62.3} & \textbf{51.6} & \textbf{91.4} & \textbf{95.2} & \textbf{50.6} & \textbf{69.9} & \textbf{70.2} \\
        \bottomrule
    \end{tabular}}
    \vspace{-0.5em}
\end{table}

\begin{table}[h]
\centering
\caption{Performance on code-generation benchmarks using Qwen3-4B. We report LiveCodeBench Avg@16 and Pass@16, CodeForces Rating and Percentile, and HumanEval+ Pass@16.}
\label{code_1}
\resizebox{0.92\textwidth}{!}{
\begin{tabular}{l|cc|cc|c}
\toprule
\multirow{2}{*}{\textbf{Method}} 
& \multicolumn{2}{c|}{\textbf{LiveCodeBench}} 
& \multicolumn{2}{c|}{\textbf{CodeForces}} 
& \multicolumn{1}{c}{\textbf{HumanEval+}} \\
& \textbf{Avg@16} & \textbf{Pass@16} & \textbf{Rating} & \textbf{Percentile} &  \textbf{Pass@16} \\
\midrule
Qwen3-4B & 30.5 & 40.9 & 578.8 & 1.2 & 89.0 \\
PPO      &   \score{39.7}{(+9.2)}   &  \score{\underline{57.7}}{(+16.8)}    &  \score{\underline{1377.6}}{(+798.8)}  &   \score{\underline{71.9}}{(+70.7)}  &   \score{95.1}{(+6.1)}   \\
GRPO     & \score{39.5}{(+9.0)} & \score{55.1}{(+14.2)} & \score{1267.9}{(+689.1)} & \score{63.1}{(+61.9)} & \score{95.7}{(+6.7)} \\
DAPO     & \score{41.0}{(+10.5)} & \score{52.3}{(+11.4)} & \score{1112.5}{(+533.7)} & \score{46.7}{(+45.5)} & \score{95.7}{(+6.7)} \\
Dr.GRPO  & \score{41.6}{(+11.1)} & \score{\underline{57.7}}{(+16.8)} & \score{1195.0}{(+616.2)} & \score{55.3}{(+54.1)} & \score{95.7}{(+6.7)} \\
RLOO     & \score{\underline{43.2}}{(+12.7)} & \score{56.9}{(+16.0)} & \score{1224.0}{(+645.2)} & \score{58.6}{(+57.4)} & \score{\textbf{97.5}}{(+8.5)} \\
\midrule
\textbf{TGRL (ours)} & \score{\textbf{43.6}}{(+13.1)} & \score{\textbf{62.1}}{(+21.2)} & \score{\textbf{1574.3}}{(+995.5)} & \score{\textbf{84.3}}{(+83.1)} & \score{\underline{96.9}}{(+7.9)} \\
\bottomrule
\end{tabular}}
\vspace{-0.5em}
\end{table}

\section{Experiments}
\subsection{Experimental Setup}
\label{sec:experiments}

\paragraph{Baselines.}
We evaluate TGRL on a challenging suite spanning mathematical reasoning, code generation, and long-horizon agentic tasks. This suite probes whether RLVR methods can convert limited rollout budgets into effective exploration on complex tasks, rather than merely optimizing saturated benchmarks. Our baseline pool covers representative policy-gradient, group-relative, leave-one-out, temperature-adaptive, and agentic planning methods, including PPO \citep{ouyang2022training}, GRPO \citep{guo2025deepseek}, DAPO \citep{yu2025dapo}, Dr.GRPO \citep{liu2025drgrpo}, RLOO \citep{ahmadian2024back}, TAMPO \citep{dang2026temperature}, ReAct \citep{yao2022react}, and EMPG \citep{wang2025harnessing}. We instantiate TGRL on Qwen3-4B, Qwen3-14B and Qwen3-32B for mathematics, Qwen3-4B for code generation, and Qwen2.5-7B-Instruct for agentic tasks.

\paragraph{Benchmarks.}
The suite covers a difficulty gradient and three task structures. For mathematical reasoning, we use six benchmarks: AIME 2024/2025~\citep{AIME} and AMC 2023~\citep{AMC} (competition-level, small problem counts), MATH500~\citep{lightman2024lets} (balanced cross-difficulty subset), Minerva~\citep{minerva} (quantitative STEM reasoning), and Olympiad~\citep{he2024olympiadbench} (olympiad-level multi-step problems). For code generation, we use LiveCodeBench~\citep{jain2024livecodebench} (contamination-free, periodically refreshed), CodeForces~\citep{penedo2025codeforces} (Elo-rated against human contestants), and HumanEval+~\citep{chen2021humanevaluating,liu2023your} (HumanEval with hardened test cases for functional correctness). Math and code evaluations use decoding temperature 0.6, top-$p$ 0.95, and a maximum response length of 8192 tokens. For long-horizon agentic tasks, we follow~\citep{wang2025harnessing} on ALFWorld (multi-step household task completion) and WebShop (sparse-reward web shopping). Together, these benchmarks probe whether TGRL's prompt-level exploration signal generalizes across difficulty tiers, code paradigms, and sparse-reward multi-turn decision making.

\paragraph{Training.}
For mathematics, we train on DeepScaleR~\citep{deepscaler2025}; for code, we train on DeepCoder~\citep{deepcoder2025}; and for agent tasks, we use WebShop~\citep{yao2022webshop} and ALFWorld~\citep{shridhar2020alfworld}. Math and code experiments are conducted in Qwen3 think mode with a maximum response length of 8192 tokens. Following \citep{dang2026temperature}, we use a 20-step warmup in which all trajectories are sampled at high temperature and optimized with the standard single-temperature group-normalized advantage. This prevents the low-temperature group from becoming a degenerate baseline before the policy develops stable reasoning patterns. All methods use the rollout budget of $4$ responses. TGRL allocates one low-temperature sample at $T_0{=}0.3$ to $\mathcal{G}_{g,T_0}$ and three samples to $\mathcal{G}_{g,T_1}$, reserving most of the budget for exploratory rollouts. Motivated by the role separation in Lemma~\ref{lem:dual_var}, $T_0{=}0.3$ serves as a conservative reference; Table~\ref{tab:split_sensitivity} further evaluates the effect of different low-/high-temperature rollout splits.

\begin{table}[h]
\centering
\caption{Performance on ALFWorld and WebShop averaged over three seeds on Qwen2.5-7B-Instruct. We report Success Rate for ALFWorld and both Task Score and Success Rate for WebShop.}
\label{tab:alfworld}
\resizebox{\textwidth}{!}{
\begin{tabular}{l|ccccccc|cc}
\toprule
\multirow{2}{*}{\textbf{Method}} & \multicolumn{7}{c}{\textbf{ALFWorld}} & \multicolumn{2}{c}{\textbf{WebShop}} \\
\cmidrule(lr){2-8} \cmidrule(lr){9-10}
 & \textbf{Pick} & \textbf{Look} & \textbf{Clean} & \textbf{Heat} & \textbf{Cool} & \textbf{Pick2} & \textbf{All} & \textbf{Task Score} & \textbf{Succ.} \\
\midrule
GPT-4o \citep{hurst2024gpt} & 75.3 & 60.8 & 31.2 & 56.7 & 21.6 & 49.8 & 48.0 & 31.8 & 23.7 \\
DeepSeek-V4-Pro & 69.0 & 66.7 & 41.9 & 26.3 & 52.2 & 60.0 & 51.6 & 77.9 & 33.3 \\
\midrule
Prompting Backbone & 33.4 & 21.6 & 19.3 & 6.9 & 2.8 & 3.2 & 14.8 & 26.4 & 7.8 \\
Prompting ReAct & 48.5 & 35.4 & 34.3 & 13.2 & 18.2 & 17.6 & 31.2 & 46.2 & 19.5 \\
PPO & 92.3 & 64.0 & 92.5 & 89.5 & 80.3 & 68.8 & \underline{80.4} & \underline{81.4} & 68.7 \\
GRPO & 88.8 & 43.7 & 88.1 & 70.3 & 77.7 & 56.8 & 74.8 & 77.8 & 65.6 \\
EMPG & 92.9 & 75.2 & 74.8 & 86.3 & 73.7 & 65.3 & 78.5 & 81.0 & \underline{69.3} \\
\textbf{TGRL (ours)} & 97.0 & 69.2 & 96.7 & 71.4 & 66.7 & 95.0 & \textbf{86.7} & \textbf{85.7} & \textbf{74.2} \\

\bottomrule
\end{tabular}
}
\vspace{-1.0em}
\end{table}

\subsection{Main Results}

Tables~\ref{tab:math1}, \ref{code_1}, and \ref{tab:alfworld} report results on the full evaluation suite, covering mathematical reasoning, code generation, and long-horizon agent tasks under the evaluation protocols described above. Overall, TGRL broadly improves over strong RLVR baselines across domains, with the clearest gains on benchmarks that require complex reasoning. On mathematical reasoning, TGRL achieves the best six-benchmark average at both Qwen3-14B and Qwen3-32B, reaching $69.4\%$ and $70.2\%$ and outperforming the strongest baseline by $1.4\%$ and $1.6\%$, respectively. The gains are especially clear on challenging competition-style benchmarks: at 14B, TGRL improves over the best prior result by $2.2\%$ on AIME24, $1.5\%$ on AIME25, and $1.6\%$ on Olympiad; at 32B, it is best on all six benchmarks, with the largest margin on Olympiad ($+4.2\%$) and an additional gain on AIME25 ($+0.4\%$). Since these competition benchmarks contain relatively few test problems, such as 30 problems in each AIME set, we further conduct 5-seed Avg@16 evaluations on AIME24, AIME25, and AMC23 for both TGRL and GRPO at both scales (Appendix~\ref{app:multi_seed}). The cross-seed variance remains modest, and the mean gaps on AIME are comparable to or larger than the per-seed standard deviations, supporting the robustness of the reported improvements. We also compare against exploration-oriented RLVR baselines in Table~\ref{tab:entropy_baselines_14b}, including entropy-regularized GRPO and TAMPO, a temperature-adaptive method with globally scheduled decoding temperature. Neither family closes the gap to TGRL at 14B ($69.4\%$ vs.\ $67.8\%$ for the best entropy-regularized run and $68.0\%$ for TAMPO), and supplementary Qwen3-4B results show the same ordering in Table~\ref{tab:math_4b_methods}.

The same pattern extends to code generation and long-horizon agent tasks. On code benchmarks, TGRL obtains the best LiveCodeBench Avg@16 and Pass@16, improving Pass@16 by $4.4\%$ over the strongest prior result, and raises CodeForces rating from $1377.6$ to $1574.3$ ($+196.7$), corresponding to a percentile gain from $71.9\%$ to $84.3\%$. These improvements are most pronounced on LiveCodeBench and CodeForces, where success depends on exploring alternative algorithms, implementation choices, and debugging trajectories. On the near-saturated HumanEval+, TGRL remains competitive with the top baselines. On long-horizon agent tasks, TGRL reaches $86.7\%$ overall success on ALFWorld ($+6.3\%$ over the best RL baseline) and $74.2\%$ success on WebShop ($+4.9\%$), while also improving WebShop task score to $85.7\%$. The largest ALFWorld sub-task gain appears on \textit{Pick2} ($95.0\%$ vs.\ $68.8\%$ for PPO), which requires early commitment to a multi-step plan under sparse feedback. Across domains, the strongest gains occur on challenging competition problems, unsaturated code benchmarks, and sparse-reward long-horizon tasks, consistent with TGRL's design: mixed-temperature grouping estimates when broader exploration is beneficial for a prompt, and JS-based credit allocation directs this signal to the tokens most affected by the temperature intervention.

\subsection{Training Dynamics Analysis}
\label{sec:mechanism_analysis}

Following \citep{liu2025drgrpo,yu2025dapo,wang2025ragen}, Figure~\ref{fig:training_dynamics_14b} reports diagnostics that test whether TGRL realizes its two design goals: prompt-level exploration-gain estimation and token-level credit allocation. On AIME24/AIME25, increasing GRPO's sampling temperature improves performance from 54.8\%/37.5\% to 60.0\%/44.2\%, but still trails TGRL's 61.9\%/49.6\% and yields longer, more truncated responses. Thus, higher rollout stochasticity helps but is not sufficient on its own: TGRL instead uses a low-/high-temperature reward gap as an empirical estimate of prompt-level exploration gain.

\begin{figure}[t]
    \centering
    \includegraphics[width=\textwidth]{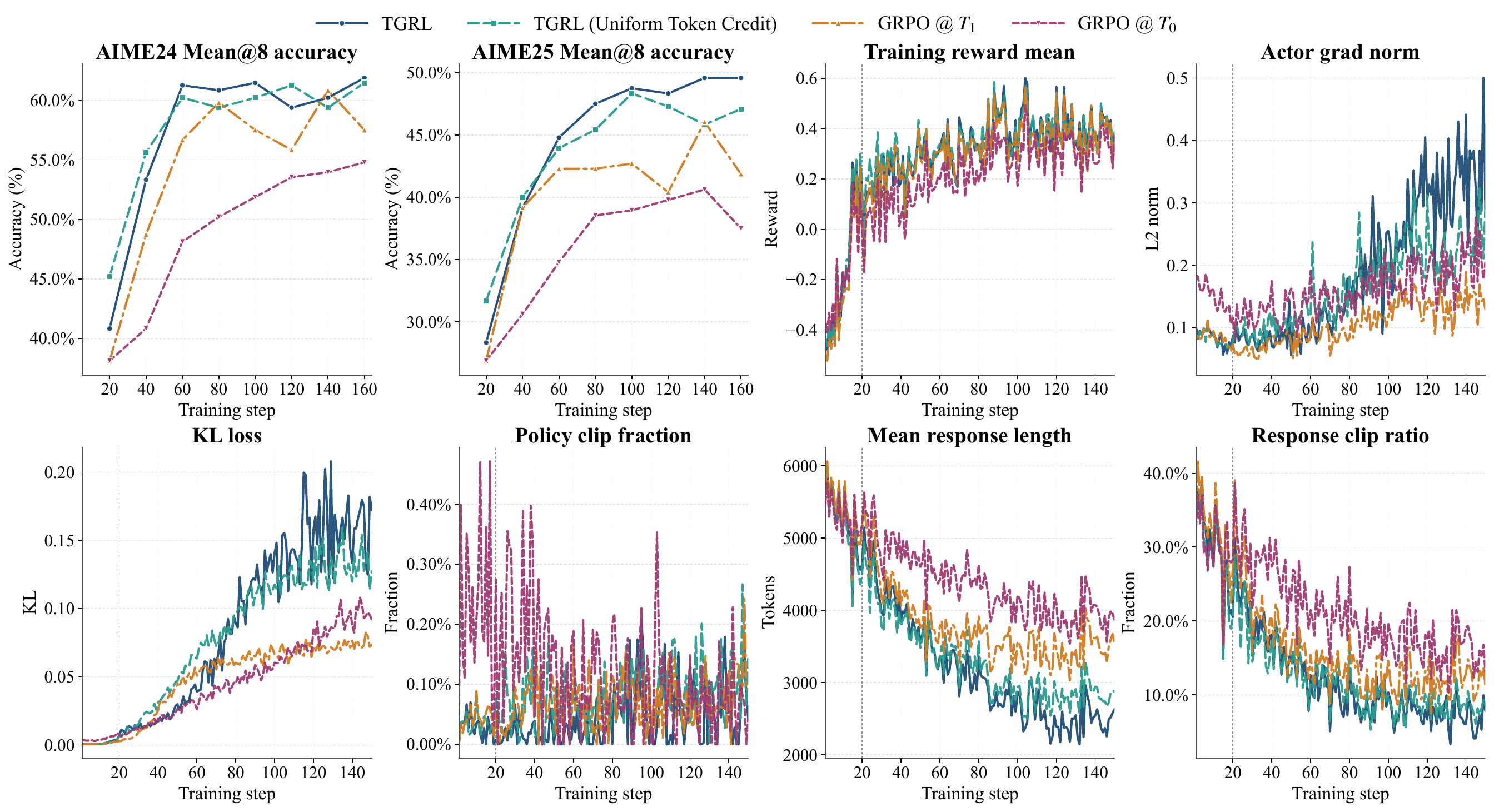}
    \caption{Training dynamics on the Qwen3-14B run. Compared with fixed-temperature GRPO variants, TGRL achieves higher downstream accuracy while producing shorter responses and lower truncation. The uniform-token-credit ablation helps isolate the effect of JS-based credit allocation.}
    \label{fig:training_dynamics_14b}
    \vspace{-1.0em}
\end{figure}

The update statistics provide a complementary diagnostic. During steps 121--150, TGRL attains the largest gradient norm (0.331) and policy KL (0.160), while keeping the PPO clip fraction below 0.11\%. Since KL measures distributional movement \citep{bai2022training} and clip fraction counts how often token ratios enter the clipped PPO region \citep{wang2019trust}, this combination indicates larger effective policy updates without widespread ratio clipping. This is consistent with JS-based credit allocation, which concentrates the prompt-level exploration signal on positions where the low- and high-temperature policies differ, rather than spreading it uniformly across all tokens.

The response-length curves show a similar pattern. Uniform token credit spreads the signal over the entire response, whereas JS-based credit allocation concentrates more of it on temperature-sensitive positions. TGRL produces the shortest late-stage responses, averaging 2476 tokens versus 3915 for GRPO @ $T_0$ and 3482 for GRPO @ $T_1$, and the lowest response clipping ratio, 7.2\% versus 16.0\% and 12.4\%. TGRL and TGRL (Uniform Token Credit) also have nearly identical late-stage training rewards, 0.394 versus 0.388, yet differ in downstream accuracy, response length, and truncation, suggesting that the gain comes from token-level credit allocation rather than reward scale. Together, these dynamics support the proposed mechanism: mixed-temperature grouping estimates prompt-level exploration gain, and JS-based credit allocation directs it to temperature-sensitive tokens.

\begin{table}[h]
\centering
\caption{Exploration-oriented baselines using Qwen3-14B. ``GRPO+Ent'' denotes GRPO with an added entropy bonus of coefficient $\lambda$; TAMPO is a temperature-adaptive baseline.}
\label{tab:entropy_baselines_14b}
\resizebox{0.95\textwidth}{!}{
\begin{tabular}{l|cccccc|c}
\toprule
\textbf{Method} & \textbf{AIME24} & \textbf{AIME25} & \textbf{AMC23} & \textbf{MATH500} & \textbf{Minerva} & \textbf{Olympiad} & \textbf{Average} \\
\midrule
GRPO & 57.5 & 45.4 & 92.9 & 94.4 & 49.7 & 62.2 & 67.0 \\
GRPO+Ent ($\lambda{=}10^{-4}$) & 63.2 & 48.4 & 90.5 & 93.7 & 47.9 & 62.8 & 67.8 \\
GRPO+Ent ($\lambda{=}10^{-3}$) & 57.1 & 47.7 & 91.9 & 92.8 & 47.0 & 63.8 & 66.7 \\
TAMPO & 63.3 & 47.1 & 92.0 & 95.2 & 49.2 & 61.0 & 68.0 \\
TGRL (ours) & 63.4 & 49.4 & 92.7 & 95.3 & 48.9 & 66.4 & \textbf{69.4} \\
\bottomrule
\end{tabular}}
   \vspace{-0.5em}
\end{table}

\begin{table}[h]
    \centering
    \caption{
    Math ablation results. Comparing GRPO @ $T_0$ and GRPO @ $T_1$ isolates single-temperature exploration. Comparing GRPO @ $T_1$ and TGRL (Uniform Token Credit) evaluates the mixed-temperature TGRL structure without JS-based credit allocation. Comparing TGRL (Uniform Token Credit) and TGRL isolates JS-based token credit allocation.
    }
    \label{tab:ablation_qwen3_32b}
    \resizebox{\textwidth}{!}{
    \begin{tabular}{c l|cccccc|c}
        \toprule
        \textbf{Scale} & \textbf{Method} & \textbf{AIME24} & \textbf{AIME25} & \textbf{AMC23} & \textbf{MATH500} & \textbf{Minerva} & \textbf{Olympiad} & \textbf{Average} \\
        \midrule
        \multirow{4}{*}{14B}
        & GRPO @ $T_0$ & 57.0 & 44.6 & 91.9 & 93.6 & 48.9 & 63.0 & 66.5 \\
        & GRPO @ $T_1$ & 57.5 & 45.4 & 92.9 & 94.4 & 49.7 & 62.2 & 67.0 \\
        & TGRL (Uniform Token Credit) & 61.0 & 46.9 & 91.3 & 93.4 & 48.8 & 68.1 & 68.2 \\
        & \textbf{TGRL (ours)} & 63.4 & 49.4 & 92.7 & 95.3 & 48.9 & 66.4 & \textbf{69.4} \\
        \midrule
        \multirow{4}{*}{32B}
        & GRPO @ $T_0$ & 55.8 & 44.4 & 85.6 & 93.0 & 48.0 & 64.6 & 65.1 \\
        & GRPO @ $T_1$ & 60.4 & 44.8 & 88.9 & 92.2 & 48.5 & 62.1 & 66.2 \\
        & TGRL (Uniform Token Credit) & 58.3 & 48.1 & 90.3 & 94.9 & 50.2 & 64.7 & 67.8 \\
        & \textbf{TGRL (ours)} & 62.3 & 51.6 & 91.4 & 95.2 & 50.6 & 69.9 & \textbf{70.2} \\
        \bottomrule
    \end{tabular}}
    \vspace{-0.8em}
\end{table}

\subsection{Ablation Studies}
\label{sec:ablations}

\paragraph{Mechanism ablations.}
Table~\ref{tab:ablation_qwen3_32b} shows the same decomposition after full training and across scales. Raising the temperature alone is limited: GRPO @ $T_1$ improves average accuracy over GRPO @ $T_0$ by only $0.5\%$ at 14B and $1.1\%$ at 32B. Replacing single-temperature GRPO with mixed-temperature grouping gives the first clear gain, raising the average from $67.0\%$ to $68.2\%$ at 14B and from $66.2\%$ to $67.8\%$ at 32B. Adding JS-based credit allocation then further raises the average to $69.4\%$ and $70.2\%$, with especially large additional gains on AIME25 (+$2.5\%$ at 14B; +$3.5\%$ at 32B). The benchmark-level changes are not uniform, which is expected: JS weighting does not create exploration by itself, but redistributes the grouped signal toward temperature-sensitive tokens. Together with Figure~\ref{fig:training_dynamics_14b}, these results support a two-stage interpretation: mixed-temperature grouping creates a prompt-level exploration signal, and JS-based credit allocation converts that signal into more focused token-level credit.

\paragraph{Sensitivity to the exploration temperature.}
We next study how TGRL's performance varies with the exploration temperature. Figure~\ref{fig:temp_sweep_4b_diagnostics} and Table~\ref{tab:temp_sweep_4b_summary} show that the key requirement is a nontrivial temperature gap: performance degrades only when the gap is too narrow ($T_1{=}0.8$), whereas a broad regime of larger gaps ($T_1\in[1.0,1.6]$) remains competitive. Within this regime, $T_1{=}1.2$ achieves the best efficiency--stability trade-off, reaching strong validation performance earlier than higher-temperature alternatives while maintaining smooth optimization dynamics. We therefore adopt $T_1{=}1.2$ as the default in the main experiments.

\paragraph{Training efficiency analysis.}
Head-to-head wall-clock comparisons against GRPO and DAPO are reported in Table~\ref{tab:wallclock}. Within a matched 12-hour budget, TGRL attains an AIME-Combined score of $1.0833$ versus GRPO's $1.0354$ ($+0.048$ absolute); against DAPO, TGRL reaches an equivalent accuracy target $36\%$ faster ($6.4$ h vs.\ $10.0$ h). These results show that fine-grained exploration-gain estimation improves per-step update quality, enabling TGRL to learn more efficiently under the same compute budget.

\section{Conclusion}
This work demonstrates that sampling temperature can serve not merely as a decoding parameter, but as a means to construct an exploration signal for RLVR. TGRL forms low-temperature reference and high-temperature exploration groups for each prompt, estimates exploration gain from their reward gap, and allocates this gain to temperature-sensitive tokens via JS divergence. Across mathematical reasoning, code generation, and long-horizon agent tasks, the results validate this design: TGRL reaches $70.2\%$ average math accuracy on Qwen3-32B, improves CodeForces rating by $196.7$ points, and achieves $86.7\%$ success on ALFWorld. Ablations and training dynamics further show that mixed-temperature grouping and JS-based credit allocation provide complementary benefits, confirming that the key is not simply sampling more stochastically, but extracting the value of exploration within each prompt and assigning it to the tokens where the temperature intervention matters.
More broadly, TGRL establishes a principle for connecting quantified exploration gain to token-level policy-gradient credit, providing a design pattern for efficient exploration in RLVR.
Future work may extend this principle to adaptive multi-temperature grouping, richer token-credit allocation criteria, and more open-ended interactive environments.

\newpage
\bibliographystyle{unsrt}
\bibliography{cite}

\newpage
\appendix

\section{Supplementary Theory and Proofs}
\label{app:proofs}

This appendix expands the derivations that are stated more compactly in the main text. It includes the detailed proofs for prompt-level exploration gain under mixed-temperature grouping and the JS-based local-sensitivity result used for token-level credit allocation, together with several equivalent forms and implementation-level connections used in the discussion. A later section records the design rationale for using the low-temperature group in the mixed-temperature statistics while excluding it from direct actor updates.

\subsection{Why Uniform-temperature Group Normalization Cannot Recover Exploration Gain}
\label{app:grpo_gap}

Suppose all $|\mathcal{G}_{g,T_0+T_1}|$ trajectories in a prompt group are sampled at a single temperature $T$, as in standard uniform-temperature group optimization. The group-normalized advantage then takes the form
\begin{equation}
\label{eq:uniform_temp_adv}
    A_i^{(T)}
    \coloneqq
    \frac{R_i-\bar{R}_g^{(T)}}{\sqrt{(\sigma_g^{(T)})^2+\epsilon}},
    \qquad
    \bar{R}_g^{(T)} \coloneqq \frac{1}{|\mathcal{G}_{g,T_0+T_1}|}\sum_{i\in\mathcal{G}_{g,T_0+T_1}}R_i.
\end{equation}
This statistic can rank samples relative to the group mean, but it has no in-group low-/high-temperature gap. In particular, there is no quantity analogous to
\[
\Delta R_g = \bar{R}_g^{T_1} - \bar{R}_g^{T_0},
\]
because the group contains only one temperature-conditioned response cloud. Consequently, uniform-temperature GRPO can exploit within-temperature variability, but it cannot directly estimate whether moving from conservative decoding to exploratory decoding helps on the same prompt. TGRL creates this missing quantity by explicitly splitting each group into a low-temperature group at $T_0$ and a high-temperature group at $T_1$.

\subsection{Proof of Lemma~\ref{lem:dual_var} (Exploration Gain under Mixed-Temperature Grouping)}
\label{app:proof_dual_var}

\begin{proof}
Fix a prompt $x$ and drop the group subscript for readability. Let $\{R_i^{T_0}\}_{i\in\mathcal{G}_{T_0}}$ be i.i.d. rewards drawn from $\pi_\theta^{T_0}(\cdot\mid x)$, with mean $\mu_{T_0}(x)$ and variance $\sigma_{T_0}^2(x)$. Let $\{R_i^{T_1}\}_{i\in\mathcal{G}_{T_1}}$ be i.i.d. rewards drawn from $\pi_\theta^{T_1}(\cdot\mid x)$, with mean $\mu_{T_1}(x)$ and variance $\sigma_{T_1}^2(x)$. Assume the low- and high-temperature samples are mutually independent.

Define
\begin{equation}
    \bar{R}^{T_0} \coloneqq \frac{1}{|\mathcal{G}_{T_0}|}\sum_{i\in\mathcal{G}_{T_0}} R_i^{T_0},
    \qquad
    \bar{R}^{T_1} \coloneqq \frac{1}{|\mathcal{G}_{T_1}|}\sum_{i\in\mathcal{G}_{T_1}} R_i^{T_1},
    \qquad
    \Delta R \coloneqq \bar{R}^{T_1} - \bar{R}^{T_0}.
\end{equation}
By linearity of expectation,
\begin{equation}
    \mathbb{E}[\bar{R}^{T_0} \mid x] = \mu_{T_0}(x),
    \qquad
    \mathbb{E}[\bar{R}^{T_1} \mid x] = \mu_{T_1}(x),
\end{equation}
so
\begin{equation}
    \mathbb{E}[\Delta R \mid x]
    = \mathbb{E}[\bar{R}^{T_1} \mid x] - \mathbb{E}[\bar{R}^{T_0} \mid x]
    = \mu_{T_1}(x) - \mu_{T_0}(x).
\end{equation}

For the variance, independence of the two branches yields
\begin{align}
    \mathrm{Var}(\Delta R \mid x)
    &= \mathrm{Var}(\bar{R}^{T_1} - \bar{R}^{T_0} \mid x) \nonumber \\
    &= \mathrm{Var}(\bar{R}^{T_1} \mid x) + \mathrm{Var}(\bar{R}^{T_0} \mid x)
       - 2\underbrace{\mathrm{Cov}(\bar{R}^{T_1}, \bar{R}^{T_0} \mid x)}_{=\,0}.
\end{align}
Since each branch is an average of i.i.d. samples,
\begin{equation}
    \mathrm{Var}(\bar{R}^{T_1} \mid x) = \frac{\sigma_{T_1}^2(x)}{|\mathcal{G}_{T_1}|},
    \qquad
    \mathrm{Var}(\bar{R}^{T_0} \mid x) = \frac{\sigma_{T_0}^2(x)}{|\mathcal{G}_{T_0}|}.
\end{equation}
Substituting gives
\begin{equation}
    \mathrm{Var}(\Delta R \mid x) = \frac{\sigma_{T_1}^2(x)}{|\mathcal{G}_{T_1}|} + \frac{\sigma_{T_0}^2(x)}{|\mathcal{G}_{T_0}|},
\end{equation}
which proves Eq.~\eqref{eq:dual_var}.
\end{proof}

\paragraph{Interpretation.}
Lemma~\ref{lem:dual_var} shows that the low-temperature group provides the within-prompt reference needed to estimate prompt-specific exploration gain. These per-group gains later enter the mixed-group advantage defined in Section~\ref{sec:algorithm}. The next proposition explains how mixed-group normalization injects this exploration gain into the high-temperature-group learning signal.

\subsection{Proof of Proposition~\ref{prop:mixed_adv} (Mixed-group Advantage Decomposition)}
\label{app:proof_mixed_adv}

\begin{proof}
Fix a prompt group $g$ with low-temperature set $\mathcal{G}_{g,T_0}$, high-temperature set $\mathcal{G}_{g,T_1}$, and total set $\mathcal{G}_{g,T_0+T_1}\coloneqq\mathcal{G}_{g,T_0}\cup\mathcal{G}_{g,T_1}$. Let
\begin{equation}
    \bar{R}_g^{T_0} \coloneqq \frac{1}{|\mathcal{G}_{g,T_0}|}\sum_{i\in\mathcal{G}_{g,T_0}} R_i,
    \qquad
    \bar{R}_g^{T_1} \coloneqq \frac{1}{|\mathcal{G}_{g,T_1}|}\sum_{i\in\mathcal{G}_{g,T_1}} R_i,
    \qquad
    \Delta R_g \coloneqq \bar{R}_g^{T_1} - \bar{R}_g^{T_0}.
\end{equation}
The mixed-group mean is
\begin{equation}
\label{eq:app_group_mean_repeat}
    \mu_g = \frac{1}{|\mathcal{G}_{g,T_0+T_1}|}\sum_{i\in \mathcal{G}_{g,T_0+T_1}} R_i
    = \frac{|\mathcal{G}_{g,T_0}|\bar{R}_g^{T_0} + |\mathcal{G}_{g,T_1}|\bar{R}_g^{T_1}}{|\mathcal{G}_{g,T_0+T_1}|}.
\end{equation}
We consider the mixed-group advantage
\begin{equation}
    A_i \coloneqq \frac{R_i-\mu_g}{\sqrt{\sigma_g^2+\epsilon}},
    \qquad
    \sigma_g^2 \coloneqq \frac{1}{|\mathcal{G}_{g,T_0+T_1}|}\sum_{i\in \mathcal{G}_{g,T_0+T_1}}(R_i-\mu_g)^2,
\end{equation}
where $\epsilon>0$ is a numerical stabilizer.
The goal is to decompose $A_i$ into a within-subgroup fluctuation term plus a cross-group exploration-gain term.

\noindent\textbf{Step 1: Two equivalent forms of the mixed-group mean.}
Starting from \eqref{eq:app_group_mean_repeat}, substitute $\bar{R}_g^{T_1} = \bar{R}_g^{T_0} + \Delta R_g$ to obtain
\begin{align}
    \mu_g
    &= \frac{|\mathcal{G}_{g,T_0}|\bar{R}_g^{T_0} + |\mathcal{G}_{g,T_1}|(\bar{R}_g^{T_0} + \Delta R_g)}{|\mathcal{G}_{g,T_0+T_1}|} \nonumber\\
    &= \bar{R}_g^{T_0} + \frac{|\mathcal{G}_{g,T_1}|}{|\mathcal{G}_{g,T_0+T_1}|}\,\Delta R_g.
    \label{eq:app_mean_form_baseline}
\end{align}
Likewise, substitute $\bar{R}_g^{T_0} = \bar{R}_g^{T_1} - \Delta R_g$ to obtain
\begin{align}
    \mu_g
    &= \frac{|\mathcal{G}_{g,T_0}|(\bar{R}_g^{T_1} - \Delta R_g) + |\mathcal{G}_{g,T_1}|\bar{R}_g^{T_1}}{|\mathcal{G}_{g,T_0+T_1}|} \nonumber\\
    &= \bar{R}_g^{T_1} - \frac{|\mathcal{G}_{g,T_0}|}{|\mathcal{G}_{g,T_0+T_1}|}\,\Delta R_g.
    \label{eq:app_mean_form_explore}
\end{align}
Thus the mixed-group mean can be written relative to either subgroup mean, with the offset determined by the exploration gain.

\noindent\textbf{Step 2: Decomposition for low-temperature-group samples.}
Take any $i\in\mathcal{G}_{g,T_0}$. Using \eqref{eq:app_mean_form_baseline},
\begin{align}
    R_i - \mu_g
    &= R_i - \left(\bar{R}_g^{T_0} + \frac{|\mathcal{G}_{g,T_1}|}{|\mathcal{G}_{g,T_0+T_1}|}\,\Delta R_g\right) \nonumber\\
    &= (R_i - \bar{R}_g^{T_0}) - \frac{|\mathcal{G}_{g,T_1}|}{|\mathcal{G}_{g,T_0+T_1}|}\,\Delta R_g.
    \label{eq:app_baseline_decomp_raw}
\end{align}
Dividing by $\sqrt{\sigma_g^2 + \epsilon}$ gives
\begin{equation}
\label{eq:app_baseline_decomp_norm}
    A_i
    =
    \frac{R_i-\bar{R}_g^{T_0}}{\sqrt{\sigma_g^2+\epsilon}}
    - \frac{|\mathcal{G}_{g,T_1}|}{|\mathcal{G}_{g,T_0+T_1}|}\,\frac{\Delta R_g}{\sqrt{\sigma_g^2+\epsilon}},
    \qquad i\in\mathcal{G}_{g,T_0}.
\end{equation}
This is exactly the low-temperature-group case of Eq.~\eqref{eq:mixed_group_decomp}.

\noindent\textbf{Step 3: Decomposition for high-temperature-group samples.}
Take any $i\in\mathcal{G}_{g,T_1}$. Using \eqref{eq:app_mean_form_explore},
\begin{align}
    R_i - \mu_g
    &= R_i - \left(\bar{R}_g^{T_1} - \frac{|\mathcal{G}_{g,T_0}|}{|\mathcal{G}_{g,T_0+T_1}|}\,\Delta R_g\right) \nonumber\\
    &= (R_i - \bar{R}_g^{T_1}) + \frac{|\mathcal{G}_{g,T_0}|}{|\mathcal{G}_{g,T_0+T_1}|}\,\Delta R_g.
    \label{eq:app_explore_decomp_raw}
\end{align}
Dividing by $\sqrt{\sigma_g^2 + \epsilon}$ gives
\begin{equation}
\label{eq:app_explore_decomp_norm}
    A_i
    =
    \frac{R_i-\bar{R}_g^{T_1}}{\sqrt{\sigma_g^2+\epsilon}}
    + \frac{|\mathcal{G}_{g,T_0}|}{|\mathcal{G}_{g,T_0+T_1}|}\,\frac{\Delta R_g}{\sqrt{\sigma_g^2+\epsilon}},
    \qquad i\in\mathcal{G}_{g,T_1}.
\end{equation}
This is exactly the high-temperature-group case of Eq.~\eqref{eq:mixed_group_decomp}.

\noindent\textbf{Step 4: Sign interpretation of the cross-group shift.}
When $\Delta R_g > 0$, the additive term in \eqref{eq:app_explore_decomp_norm} is positive, so the mixed-group advantage shifts every high-temperature-group sample upward relative to its within-subgroup deviation. At the same time, the additive term in \eqref{eq:app_baseline_decomp_norm} is negative, so every low-temperature-group sample is shifted downward. When $\Delta R_g < 0$, the signs reverse. Hence the mixed-group advantage is jointly determined by a within-subgroup fluctuation term and a cross-group gap term whose sign is controlled by the prompt-level exploration gain.

This proves Proposition~\ref{prop:mixed_adv}.
\end{proof}

\subsection{Subgroup-mean Identity}
\label{app:subgroup_means}

Averaging Eq.~\eqref{eq:mixed_group_decomp} over each subgroup gives
\begin{equation}
\label{eq:sign_carrying_subgroup_means}
    \bar{A}_g^{T_0}
    =
    -\frac{|\mathcal{G}_{g,T_1}|}{|\mathcal{G}_{g,T_0+T_1}|}\frac{\Delta R_g}{\sqrt{\sigma_g^2+\epsilon}},
    \qquad
    \bar{A}_g^{T_1}
    =
    \frac{|\mathcal{G}_{g,T_0}|}{|\mathcal{G}_{g,T_0+T_1}|}\frac{\Delta R_g}{\sqrt{\sigma_g^2+\epsilon}}.
\end{equation}

\begin{proof}
Average the two cases of Eq.~\eqref{eq:mixed_group_decomp} separately over $\mathcal{G}_{g,T_0}$ and $\mathcal{G}_{g,T_1}$. For the low-temperature group,
\begin{align}
    \bar A_g^{T_0}
    &= \frac{1}{|\mathcal{G}_{g,T_0}|}\sum_{i\in\mathcal{G}_{g,T_0}} A_i \\
    &= \frac{1}{|\mathcal{G}_{g,T_0}|}\sum_{i\in\mathcal{G}_{g,T_0}}\left(\frac{R_i-\bar R_g^{T_0}}{\sqrt{\sigma_g^2+\epsilon}} - \frac{|\mathcal{G}_{g,T_1}|}{|\mathcal{G}_{g,T_0+T_1}|}\frac{\Delta R_g}{\sqrt{\sigma_g^2+\epsilon}}\right) \nonumber\\
    &= \frac{1}{\sqrt{\sigma_g^2+\epsilon}}\left(\frac{1}{|\mathcal{G}_{g,T_0}|}\sum_{i\in\mathcal{G}_{g,T_0}}(R_i-\bar R_g^{T_0})\right) - \frac{|\mathcal{G}_{g,T_1}|}{|\mathcal{G}_{g,T_0+T_1}|}\frac{\Delta R_g}{\sqrt{\sigma_g^2+\epsilon}}.
\end{align}
The bracketed term is zero because $\bar R_g^{T_0}$ is the empirical low-temperature-group mean, hence
\begin{equation}
    \bar A_g^{T_0} = -\frac{|\mathcal{G}_{g,T_1}|}{|\mathcal{G}_{g,T_0+T_1}|}\frac{\Delta R_g}{\sqrt{\sigma_g^2+\epsilon}}.
\end{equation}
For the high-temperature group,
\begin{align}
    \bar A_g^{T_1}
    &= \frac{1}{|\mathcal{G}_{g,T_1}|}\sum_{i\in\mathcal{G}_{g,T_1}} A_i \\
    &= \frac{1}{|\mathcal{G}_{g,T_1}|}\sum_{i\in\mathcal{G}_{g,T_1}}\left(\frac{R_i-\bar R_g^{T_1}}{\sqrt{\sigma_g^2+\epsilon}} + \frac{|\mathcal{G}_{g,T_0}|}{|\mathcal{G}_{g,T_0+T_1}|}\frac{\Delta R_g}{\sqrt{\sigma_g^2+\epsilon}}\right) \nonumber\\
    &= \frac{1}{\sqrt{\sigma_g^2+\epsilon}}\left(\frac{1}{|\mathcal{G}_{g,T_1}|}\sum_{i\in\mathcal{G}_{g,T_1}}(R_i-\bar R_g^{T_1})\right) + \frac{|\mathcal{G}_{g,T_0}|}{|\mathcal{G}_{g,T_0+T_1}|}\frac{\Delta R_g}{\sqrt{\sigma_g^2+\epsilon}}.
\end{align}
Again the bracketed term vanishes by definition of $\bar R_g^{T_1}$, so
\begin{equation}
    \bar A_g^{T_1} = \frac{|\mathcal{G}_{g,T_0}|}{|\mathcal{G}_{g,T_0+T_1}|}\frac{\Delta R_g}{\sqrt{\sigma_g^2+\epsilon}}.
\end{equation}
This proves Eq.~\eqref{eq:sign_carrying_subgroup_means}.
\end{proof}

\subsection{Proof of Lemma~\ref{lem:js_local} (Global temperature-response control of token-level JS)}
\label{app:proof_js_local}

\begin{proof}
We fix a trajectory $i$ and position $t$, and drop subscripts for readability. Let $z \in \mathbb{R}^{|V|}$ denote the logit vector, define the inverse-temperature parameter $\beta \coloneqq 1/T$, and write
\begin{equation}
    p_a^{(\beta)} \coloneqq \frac{\exp(\beta z_a)}{\sum_{b=1}^{|V|}\exp(\beta z_b)}.
\end{equation}
Set $\beta_0 \coloneqq 1/T_0$, $\beta_1 \coloneqq 1/T_1$, and $\Delta\beta \coloneqq \beta_1-\beta_0$. Define the scalar log-partition function
\begin{equation}
    A(\beta) \coloneqq \log\!\left(\sum_{a=1}^{|V|}e^{\beta z_a}\right).
\end{equation}
Then
\begin{equation}
    A'(\beta)=\mathbb{E}_{a\sim p^{(\beta)}}[z_a],
    \qquad
    A''(\beta)=\mathrm{Var}_{a\sim p^{(\beta)}}[z_a].
\end{equation}

\noindent\textbf{Step 1: Exact Fisher-path identity for the Jeffreys divergence.}
For any $\beta,\beta'$, the KL divergence between the corresponding categorical exponential-family members is
\begin{align}
    \mathrm{KL}\!\left(p^{(\beta')}\middle\|p^{(\beta)}\right)
    &= \sum_{a=1}^{|V|} p_a^{(\beta')}
       \log\!\frac{p_a^{(\beta')}}{p_a^{(\beta)}} \nonumber\\
    &= \sum_{a=1}^{|V|} p_a^{(\beta')}
       \Bigl((\beta'-\beta)z_a - A(\beta') + A(\beta)\Bigr) \nonumber\\
    &= A(\beta)-A(\beta') + (\beta'-\beta)A'(\beta').
\end{align}
Therefore the Jeffreys divergence satisfies
\begin{align}
\label{eq:app_jeffreys_gap}
    D_{\mathrm{J}}\!\left(p^{(\beta_0)},p^{(\beta_1)}\right)
    &\coloneqq
    \mathrm{KL}\!\left(p^{(\beta_0)}\middle\|p^{(\beta_1)}\right)
    +
    \mathrm{KL}\!\left(p^{(\beta_1)}\middle\|p^{(\beta_0)}\right) \nonumber\\
    &=
    (\beta_1-\beta_0)\bigl(A'(\beta_1)-A'(\beta_0)\bigr).
\end{align}
Applying the fundamental theorem of calculus to $A'$ and using $A''(\beta)=\mathrm{Var}_{a\sim p^{(\beta)}}[z_a]$ gives
\begin{align}
    A'(\beta_1)-A'(\beta_0)
    &= \Delta\beta \int_0^1 A''(\beta_0+s\Delta\beta)\,ds \nonumber\\
    &= \Delta\beta \int_0^1 \mathrm{Var}_{a\sim p^{(\beta_0+s\Delta\beta)}}[z_a]\,ds.
\end{align}
Substituting into Eq.~\eqref{eq:app_jeffreys_gap} yields the exact path identity
\begin{equation}
\label{eq:app_jeffreys_path}
    D_{\mathrm{J}}\!\left(p^{(\beta_0)},p^{(\beta_1)}\right)
    =
    (\Delta\beta)^2
    \int_0^1 \mathrm{Var}_{a\sim p^{(\beta_0+s\Delta\beta)}}[z_a]\,ds.
\end{equation}

\noindent\textbf{Step 2: JS is upper-bounded by one eighth of the Jeffreys divergence.}
The Jensen--Shannon divergence is the $f$-divergence with generator
\begin{equation}
    f_{\mathrm{JS}}(u) = \tfrac{u}{2}\log\!\tfrac{2u}{1+u} + \tfrac{1}{2}\log\!\tfrac{2}{1+u},
\end{equation}
while the Jeffreys divergence is the $f$-divergence with generator
\begin{equation}
    f_{\mathrm{J}}(u) = (u-1)\log u.
\end{equation}
Define
\begin{equation}
    g(u) \coloneqq f_{\mathrm{J}}(u) - 8 f_{\mathrm{JS}}(u).
\end{equation}
Direct differentiation gives
\begin{equation}
    g(1)=0,
    \qquad
    g'(1)=0,
    \qquad
    g''(u)=\frac{(u-1)^2}{u^2(1+u)} \ge 0
    \quad \text{for all } u>0.
\end{equation}
Hence $g$ is convex and attains its global minimum at $u=1$, so
\begin{equation}
\label{eq:app_pointwise_js_jeffreys}
    8 f_{\mathrm{JS}}(u) \le f_{\mathrm{J}}(u)
    \qquad \text{for all } u>0.
\end{equation}
Integrating Eq.~\eqref{eq:app_pointwise_js_jeffreys} pointwise over the reference measure gives
\begin{equation}
\label{eq:app_js_jeffreys}
    \mathrm{JS}\!\left(p^{(\beta_0)},p^{(\beta_1)}\right)
    \le
    \frac{1}{8}
    D_{\mathrm{J}}\!\left(p^{(\beta_0)},p^{(\beta_1)}\right).
\end{equation}

\noindent\textbf{Step 3: Combine the two identities.}
Substituting Eq.~\eqref{eq:app_jeffreys_path} into Eq.~\eqref{eq:app_js_jeffreys} yields
\begin{equation}
    \mathrm{JS}\!\left(p^{(\beta_0)},p^{(\beta_1)}\right)
    \le
    \frac{(\Delta\beta)^2}{8}
    \int_0^1 \mathrm{Var}_{a\sim p^{(\beta_0+s\Delta\beta)}}[z_a]\,ds.
\end{equation}
Restoring the trajectory and position indices gives Eq.~\eqref{eq:js_expansion}.

\paragraph{Mutual-information view.}
Let $C\sim \mathrm{Unif}\{0,1\}$ and, conditional on $C=c$, sample $A$ from $p^{(\beta_c)}$. The marginal distribution of $A$ is $m=\tfrac{1}{2}(p^{(\beta_0)}+p^{(\beta_1)})$. Therefore
\begin{align}
    I(C;A)
    &= H(A)-H(A\mid C) \nonumber\\
    &= H(m)-\tfrac{1}{2}H\!\left(p^{(\beta_0)}\right)-\tfrac{1}{2}H\!\left(p^{(\beta_1)}\right) \nonumber\\
    &= \tfrac{1}{2}\mathrm{KL}\!\left(p^{(\beta_0)}\middle\|m\right)
      + \tfrac{1}{2}\mathrm{KL}\!\left(p^{(\beta_1)}\middle\|m\right) \nonumber\\
    &= \mathrm{JS}\!\left(p^{(\beta_0)},p^{(\beta_1)}\right),
\end{align}
which is the identity used in the main-text interpretation.
\end{proof}


\subsection{Local-cooling Probe for JS-based Credit Allocation}
\label{app:js_allocation_appendix}

Proposition~\ref{prop:credit_quality} is stated for one-step continuation values: a larger $J_{i,t}$ yields a larger certified budget $b_{i,t}=\sqrt{2J_{i,t}}$ on how much the low-/high-temperature intervention can change downstream value at token $t$. To relate this quantity to realized span-level behavior under the decoding procedure used by TGRL, we evaluate the same directional intervention directly on generated trajectories through a \emph{local-cooling} probe.

The protocol is as follows. Starting from successful exploration trajectories sampled at the high temperature $T_1$, we (i)~select a contiguous response span of fixed token budget according to a \emph{selector} criterion, (ii)~replace each token in that span with the token preferred under the low-temperature distribution $p^{(T_0)}$, and (iii)~continue autoregressive generation from the edited prefix. Two damage metrics are recorded: the \emph{flip rate}, defined as the fraction of originally successful trajectories that become unsuccessful after the edit, and the \emph{mean score drop}, defined as the average decrease in the scalar verifier reward.

Under this probe, selectors that target spans with larger temperature sensitivity are expected to induce larger verifier damage under the cooling intervention.

\begin{table}[t]
    \centering
    \caption{
    Local-cooling probe on the Qwen3-32B model. Each selector selects a contiguous span under a fixed edit budget within successful exploration trajectories; the span is cooled toward the low-temperature preference and generation continues from the edited prefix. \textsc{Matched-Random} is repeated five times per source trajectory, hence its larger count. Higher flip rate ($\uparrow$) and larger score drop ($\uparrow$) indicate greater temperature sensitivity of the selected span.
    }
    \label{tab:js_local_cooling_32b}
    \resizebox{0.98\linewidth}{!}{
    \begin{tabular}{lccccc}
        \toprule
        \textbf{Selector} & \textbf{Count} & \textbf{Flip rate $\uparrow$} & \textbf{Mean score drop $\uparrow$} & \textbf{Mean position ratio} & \textbf{Mean JS} \\
        \midrule
        \textsc{JS} & 224 & \textbf{12.1\%} & \textbf{0.241} & 0.444 & \textbf{0.0427} \\
        \textsc{Low-Margin} & 224 & 11.6\% & 0.232 & 0.482 & 0.0309 \\
        \textsc{Matched-Random} & 1120 & 8.7\% & 0.173 & 0.458 & 0.0064 \\
        \textsc{Entropy} & 224 & 7.6\% & 0.152 & 0.468 & 0.0385 \\
        \bottomrule
    \end{tabular}}
\end{table}

Table~\ref{tab:js_local_cooling_32b} reports the results on the Qwen3-32B model. Three findings are noteworthy.

\emph{(i)~JS spans produce the largest realized damage under cooling.}
The \textsc{JS} selector achieves the highest flip rate (\textbf{12.1\%}) and the largest mean score drop (\textbf{0.241}), dominating \textsc{Matched-Random} (8.7\%, 0.173), \textsc{Entropy} (7.6\%, 0.152), and \textsc{Low-Margin} (11.6\%, 0.232). Since the mean position ratios are comparable across selectors (0.444--0.482), this ordering is not explained by coarse positional confounds.

\emph{(ii)~Temperature sensitivity is distinct from generic uncertainty.}
The \textsc{Entropy} selector targets high-entropy positions, while \textsc{Low-Margin} targets positions where the top-two logits are closest. Both are natural proxies for token-level uncertainty, yet neither matches the damage induced by \textsc{JS}. This indicates that temperature-gap JS---the local sensitivity of the next-token distribution to the low/high-temperature perturbation---captures information not contained in these scalar uncertainty measures.

\emph{(iii)~JS-targeted spans align with larger realized damage.}
The ordering $\textsc{JS} > \textsc{Low-Margin} > \textsc{Matched-Random} > \textsc{Entropy}$ in both damage metrics shows that selectors targeting larger-JS spans also produce larger downstream damage under the explicit cooling intervention. In particular, the \textsc{JS} selector also exhibits the highest mean JS among the spans it selects (0.0427 vs.\ 0.0064--0.0385 for other selectors), indicating that it most directly targets positions where the temperature switch changes the next-token distribution.

\paragraph{Interpretation.}
The probe applies a discrete span-level version of the temperature-switch mechanism studied in Proposition~\ref{prop:credit_quality} and measures the resulting verifier degradation. The observed damage ordering is consistent with the role of JS in directing credit toward temperature-sensitive positions under the decoding regime used by TGRL.

\subsection{Proof of Proposition~\ref{prop:credit_quality} (Credit Allocation toward Temperature-Sensitive Positions)}
\label{app:proof_credit}

\begin{proof}
Fix a high-temperature trajectory $i$ and write
\begin{equation}
    J_t \coloneqq J_{i,t},
    \qquad
    b_t \coloneqq \sqrt{2J_t},
    \qquad
    s_t \coloneqq h_i(J_t),
    \qquad
    q_t \coloneqq \frac{s_t}{\sum_{r\in\mathcal{T}_i}s_r},
\end{equation}
where $t\in\mathcal{T}_i$ and $N_i\coloneqq |\mathcal{T}_i|$. Let $\tau$ be drawn uniformly from $\mathcal{T}_i$. All expectations and covariances below are taken with respect to this uniformly drawn token index, conditional on the fixed trajectory.

\noindent\textbf{Step 1: Bound the local temperature-intervention effect.}
Fix any position $t\in\mathcal{T}_i$ and any bounded continuation-value map $v_t\colon \{1,\dots,|V|\}\to[0,1]$. Operationally, $v_t(a)$ can be read as the normalized downstream value of choosing token $a$ at position $t$ and then continuing the rollout. The variational characterization of total variation gives
\begin{equation}
\label{eq:app_tv_dual}
    \left|
    \mathbb{E}_{a\sim p_{i,t}^{(1)}}[v_t(a)]
    -
    \mathbb{E}_{a\sim p_{i,t}^{(0)}}[v_t(a)]
    \right|
    \le
    \mathrm{TV}(p_{i,t}^{(0)},p_{i,t}^{(1)}).
\end{equation}
Let $m_t \coloneqq \tfrac{1}{2}(p_{i,t}^{(0)}+p_{i,t}^{(1)})$. Since
\begin{equation}
    \mathrm{TV}(p_{i,t}^{(0)},m_t)=\mathrm{TV}(p_{i,t}^{(1)},m_t)=\frac{1}{2}\mathrm{TV}(p_{i,t}^{(0)},p_{i,t}^{(1)}),
\end{equation}
Pinsker's inequality yields
\begin{align}
    J_t
    &= \frac{1}{2}\mathrm{KL}(p_{i,t}^{(0)}\|m_t) + \frac{1}{2}\mathrm{KL}(p_{i,t}^{(1)}\|m_t) \nonumber\\
    &\ge \frac{1}{2}\cdot 2\,\mathrm{TV}(p_{i,t}^{(0)},m_t)^2
      + \frac{1}{2}\cdot 2\,\mathrm{TV}(p_{i,t}^{(1)},m_t)^2 \nonumber\\
    &= \frac{1}{2}\,\mathrm{TV}(p_{i,t}^{(0)},p_{i,t}^{(1)})^2.
\end{align}
Therefore
\begin{equation}
    \mathrm{TV}(p_{i,t}^{(0)},p_{i,t}^{(1)}) \le \sqrt{2J_t}=b_t.
\end{equation}
Combining this with Eq.~\eqref{eq:app_tv_dual} proves Eq.~\eqref{eq:local_effect_bound}.

\noindent\textbf{Step 2: Rewrite the weighted average and compare it with uniform broadcasting.}
By definition of $q_t$,
\begin{equation}
\label{eq:app_quality_ratio_1}
    \sum_{t\in\mathcal{T}_i} q_t b_t
    = \frac{\sum_{t\in\mathcal{T}_i} s_t b_t}{\sum_{r\in\mathcal{T}_i}s_r}
    = \frac{\mathbb{E}_{\tau}[s_\tau b_\tau]}{\mathbb{E}_{\tau}[s_\tau]}.
\end{equation}
The corresponding uniform allocation is
\begin{equation}
\label{eq:app_uniform_quality}
    \frac{1}{N_i}\sum_{t\in\mathcal{T}_i} b_t
    = \mathbb{E}_{\tau}[b_\tau].
\end{equation}
Using $\mathbb{E}[XY]=\mathbb{E}[X]\mathbb{E}[Y]+\mathrm{Cov}(X,Y)$ with $X=s_\tau$ and $Y=b_\tau$,
\begin{equation}
    \mathbb{E}_{\tau}[s_\tau b_\tau]
    = \mathbb{E}_{\tau}[s_\tau] \, \mathbb{E}_{\tau}[b_\tau]
    + \mathrm{Cov}_{\tau}(s_\tau,b_\tau).
\end{equation}
Substituting into Eq.~\eqref{eq:app_quality_ratio_1} gives
\begin{equation}
\label{eq:app_credit_difference_main}
    \sum_{t\in\mathcal{T}_i} q_t b_t
    - \frac{1}{N_i}\sum_{t\in\mathcal{T}_i} b_t
    = \frac{\mathrm{Cov}_{\tau}(s_\tau,b_\tau)}{\mathbb{E}_{\tau}[s_\tau]}.
\end{equation}
This is the covariance form underlying Eq.~\eqref{eq:theory_weight_allocation}.

\noindent\textbf{Step 3: Non-negativity of the covariance.}
Both $s_t=h_i(J_t)$ and $b_t=\sqrt{2J_t}$ are non-decreasing functions of $J_t$. By the Chebyshev's order inequality in Lemma~\ref{lem:chebyshev_order},
\begin{equation}
    \mathrm{Cov}_{\tau}(s_\tau,b_\tau) \ge 0.
\end{equation}
Substituting this into Eq.~\eqref{eq:app_credit_difference_main} yields
\begin{equation}
    \sum_{t\in\mathcal{T}_i} q_t b_t
    \ge
    \frac{1}{N_i}\sum_{t\in\mathcal{T}_i} b_t,
\end{equation}
which is exactly Eq.~\eqref{eq:theory_weight_allocation}.

\noindent\textbf{Step 4: Ratio form.}
Whenever the uniform average is positive, Eq.~\eqref{eq:app_credit_difference_main} can also be written as
\begin{equation}
\label{eq:credit_gain_ratio}
\frac{\sum_{t\in\mathcal{T}_i} q_t b_t}{\frac{1}{N_i}\sum_{t\in\mathcal{T}_i} b_t}
=
1+
\frac{\mathrm{Cov}_{\tau}(s_\tau,b_\tau)}{\mathbb{E}_{\tau}[s_\tau]\,\mathbb{E}_{\tau}[b_\tau]}
=
1+\mathrm{Corr}_{\tau}(s_\tau,b_\tau)\,\mathrm{CV}_{\tau}(s_\tau)\,\mathrm{CV}_{\tau}(b_\tau)
\ge 1,
\end{equation}
where $\mathrm{CV}_{\tau}(X)\coloneqq \mathrm{Std}_{\tau}(X)/\mathbb{E}_{\tau}[X]$. This is the ratio form referenced in the main text.

\noindent\textbf{Step 5: Connection to the implemented weight function.}
In the implementation, for a fixed high-temperature trajectory $i$ we define
\begin{equation}
\begin{aligned}
\bar{J}_i &\coloneqq \frac{1}{N_i}\sum_{t\in\mathcal{T}_i}J_{i,t},
\qquad
\rho_{i,t} \coloneqq \frac{J_{i,t}+\epsilon}{\bar{J}_i+\epsilon},
\qquad
\omega_{i,t} \coloneqq \log\!\left(1 + \rho_{i,t}\right), \\
\bar{\omega}_i &\coloneqq \frac{1}{N_i}\sum_{t\in\mathcal{T}_i}\omega_{i,t},
\qquad
w_{i,t} \coloneqq \frac{\omega_{i,t}}{\bar{\omega}_i}.
\end{aligned}
\end{equation}
Once the trajectory is fixed, both $\bar{J}_i$ and $\bar{\omega}_i$ are constants, so the implemented score is a trajectory-wise non-decreasing function of $J_{i,t}$. The final division by $\bar{\omega}_i$ rescales all token scores by the same positive constant, so it does not change the induced profile. Therefore Proposition~\ref{prop:credit_quality} applies directly to the normalized weights in Eq.~\eqref{eq:tgrl_js_mean}. In addition, by construction,
\begin{equation}
    \frac{1}{N_i}\sum_{t\in\mathcal{T}_i} w_{i,t} = 1,
    \qquad
    \frac{1}{N_i}\sum_{t\in\mathcal{T}_i} \widetilde A_{i,t} = A_i.
\end{equation}
So the practical weight function preserves the trajectory-average scale of the rollout-level advantage while redistributing it across token positions.
\end{proof}

\subsection{Co-monotonicity and Chebyshev's Order Inequality}
\label{app:comonotone}

\begin{lemma}[Chebyshev's order inequality {\citep[Section~2.17]{hardy1952inequalities}}]
\label{lem:chebyshev_order}
Let $X$ be a random variable with $\mathbb{E}[g(X)^2] < \infty$ and $\mathbb{E}[f(X)^2] < \infty$, and let $g, f\colon \mathbb{R} \to \mathbb{R}$ be two non-decreasing measurable functions. Then $\mathrm{Cov}(g(X), f(X)) \ge 0$.
\end{lemma}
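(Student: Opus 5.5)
The plan is to prove Chebyshev's order inequality with the standard symmetrization (independent copy) argument. This avoids any ordering or rearrangement of a finite sample and works for a general random variable $X$. The square-integrability assumptions $\mathbb{E}[g(X)^2]<\infty$ and $\mathbb{E}[f(X)^2]<\infty$ are used only to make every expectation below finite. By Cauchy--Schwarz, $g(X)$, $f(X)$ and $g(X)f(X)$ are then all integrable, so $\mathrm{Cov}(g(X),f(X))$ is well defined.

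\textbf{Step 1 (independent copy).} Let $X'$ be an independent copy of $X$ on a product probability space. The key pointwise observation is that, because $g$ and $f$ are both non-decreasing,
\begin{equation*}
\bigl(g(X)-g(X')\bigr)\bigl(f(X)-f(X')\bigr)\ge 0 \quad\text{almost surely}.
\end{equation*}
To see this, split into cases. If $X\ge X'$, both factors are $\ge 0$. If $X\le X'$, both factors are $\le 0$. In either case the product is nonnegative.

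\textbf{Step 2 (expand the expectation).} Take expectations of the nonnegative product. Expanding it into four terms gives
\begin{equation*}
\mathbb{E}[g(X)f(X)]+\mathbb{E}[g(X')f(X')]-\mathbb{E}[g(X)f(X')]-\mathbb{E}[g(X')f(X)].
\end{equation*}
Since $X'$ has the same law as $X$, the first two terms each equal $\mathbb{E}[g(X)f(X)]$. By independence, the two cross terms each factor as $\mathbb{E}[g(X)]\,\mathbb{E}[f(X)]$. Hence
\begin{equation*}
0\le \mathbb{E}\bigl[(g(X)-g(X'))(f(X)-f(X'))\bigr]=2\,\mathrm{Cov}\bigl(g(X),f(X)\bigr),
\end{equation*}
which proves the lemma.

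\textbf{Main obstacle.} The only delicate point is integrability: the expansion in Step 2 must be legitimate. The product $g(X)f(X')$ is integrable because independence gives $\mathbb{E}|g(X)f(X')|=\mathbb{E}|g(X)|\,\mathbb{E}|f(X')|<\infty$, and $g(X)f(X)$ is integrable by Cauchy--Schwarz. So linearity of expectation applies term by term. Measurability of $g(X)$ and $f(X)$ is automatic, since monotone functions are Borel measurable.

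\textbf{Application in Step 3 of Proposition~\ref{prop:credit_quality}.} There, $X=J_\tau$ with $\tau$ uniform on $\mathcal{T}_i$, so $X$ takes finitely many values and all moments are trivially finite. Taking $g=h_i$ and $f(u)=\sqrt{2u}$, both non-decreasing on $\mathbb{R}_+$, the lemma yields $\mathrm{Cov}_\tau(s_\tau,b_\tau)\ge 0$, as required.
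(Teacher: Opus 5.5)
Your proof is correct and is the same argument as the paper's: introduce an independent copy $X'$, expand $\mathbb{E}\bigl[(g(X)-g(X'))(f(X)-f(X'))\bigr]$ into $2\,\mathrm{Cov}(g(X),f(X))$, and use that the integrand is pointwise nonnegative because $g$ and $f$ are both non-decreasing. Your explicit integrability check, via Cauchy--Schwarz and factorization under independence, is a small addition that the paper leaves implicit.
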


\begin{proof}
Let $X'$ be an independent copy of $X$. We claim the identity
\begin{equation}
\label{eq:app_cov_identity}
    2\,\mathrm{Cov}(g(X), f(X))
    = \mathbb{E}\bigl[(g(X) - g(X'))(f(X) - f(X'))\bigr].
\end{equation}
To verify, expand the right-hand side:
\begin{align}
    &\mathbb{E}\bigl[(g(X) - g(X'))(f(X) - f(X'))\bigr] \nonumber \\
    &\quad= \mathbb{E}[g(X)f(X)] - \mathbb{E}[g(X)f(X')] - \mathbb{E}[g(X')f(X)] + \mathbb{E}[g(X')f(X')].
    \label{eq:app_cov_expand_detail}
\end{align}
Since $X$ and $X'$ are independent, the cross terms factor: $\mathbb{E}[g(X)f(X')] = \mathbb{E}[g(X)]\,\mathbb{E}[f(X')]$. Since $X \stackrel{d}{=} X'$, we have $\mathbb{E}[g(X')] = \mathbb{E}[g(X)]$, $\mathbb{E}[f(X')] = \mathbb{E}[f(X)]$, and $\mathbb{E}[g(X')f(X')] = \mathbb{E}[g(X)f(X)]$. Substituting,
\begin{align}
    \eqref{eq:app_cov_expand_detail}
    &= \mathbb{E}[g(X)f(X)] - \mathbb{E}[g(X)]\mathbb{E}[f(X)] - \mathbb{E}[g(X)]\mathbb{E}[f(X)] + \mathbb{E}[g(X)f(X)] \nonumber \\
    &= 2\bigl(\mathbb{E}[g(X)f(X)] - \mathbb{E}[g(X)]\mathbb{E}[f(X)]\bigr)
    = 2\,\mathrm{Cov}(g(X), f(X)),
\end{align}
confirming \eqref{eq:app_cov_identity}. Now, since both $g$ and $f$ are non-decreasing, for any realization $(x, x')$: if $x \ge x'$ then $g(x) \ge g(x')$ and $f(x) \ge f(x')$; if $x \le x'$ then $g(x) \le g(x')$ and $f(x) \le f(x')$. In both cases, the product $(g(x)-g(x'))(f(x)-f(x')) \ge 0$. Therefore the integrand in \eqref{eq:app_cov_identity} is non-negative almost surely, and $\mathrm{Cov}(g(X), f(X)) \ge 0$.
\end{proof}

\paragraph{Application.}
In Proposition~\ref{prop:credit_quality}, the scalar variable is $X = Z_t = J_{i,t}$, the first monotone function is $g(X)=h_i(X)$, and the second monotone function is $f(X)=\sqrt{2X}$. The lemma therefore gives $\mathrm{Cov}_{\tau}(h_i(Z_t), \sqrt{2Z_t}) \ge 0$, which is the key ingredient in Step~3 of Appendix~\ref{app:proof_credit}.

\subsection{Attribution Control for Excluding the Low-temperature Group from Direct Actor Updates}
\label{app:low_temp_group_update}

Eq.~\eqref{eq:tgrl_token_adv} sets $\widetilde A_{i,t}=0$ for every
low-temperature-group token. This is a deliberate attribution-control design:
the low-temperature group is used as a within-prompt reference for estimating
exploration gain, but it is not treated as a direct actor-update target. This
section makes the distinction precise.

\paragraph{Stop-gradient convention.}
As in PPO/GRPO-style policy-gradient optimization, sampled responses, verifier
rewards, group statistics, and token weights are treated as fixed quantities
during one actor-update step. In particular, $(R_i,\mu_g,\sigma_g,A_i,J_{i,t},
w_{i,t})$ are detached from the actor gradient. Under this convention, the
low-temperature samples influence the update only through the mixed-group
statistics $(\mu_g,\sigma_g)$ and the reward contrast $\Delta R_g$, not through
token-level score-function terms.

For a low-temperature rollout $j\in\mathcal{G}_{g,T_0}$, even if one writes the
corresponding temperature-$T_0$ ratio
\begin{equation}
    r^{(0)}_{j,t}(\theta)
    \coloneqq
    \frac{\pi_\theta^{T_0}(y_{j,t}\mid x_g,y_{j,<t})}
    {\pi_{\theta_{\mathrm{old}}}^{T_0}(y_{j,t}\mid x_g,y_{j,<t})},
\end{equation}
the token advantage is exactly zero. Hence the clipped token contribution is
identically zero:
\begin{equation}
\label{eq:app_low_zero_exact}
\min\!\Bigl(
r^{(0)}_{j,t}(\theta)\cdot 0,\,
\mathrm{clip}\!\bigl(r^{(0)}_{j,t}(\theta),1-\epsilon_c,1+\epsilon_c\bigr)\cdot 0
\Bigr)
=0.
\end{equation}
Therefore no low-temperature token can receive a direct actor-gradient
attribution. The low-temperature branch is not discarded, however: its rewards
enter $\bar R_g^{T_0}$, $\mu_g$, $\sigma_g$, and $\Delta R_g$, and these
quantities shift the high-temperature advantages through
Proposition~\ref{prop:mixed_adv}.

\paragraph{Branch-wise decomposition of the exploration-gain carrier.}
Fix a prompt group $g$ and define
\begin{equation}
    n_0 \coloneqq |\mathcal{G}_{g,T_0}|,\qquad
    n_1 \coloneqq |\mathcal{G}_{g,T_1}|,\qquad
    n \coloneqq n_0+n_1,\qquad
    s_g \coloneqq \sqrt{\sigma_g^2+\epsilon},\qquad
    \widehat\gamma_g \coloneqq \frac{\Delta R_g}{s_g}.
\end{equation}
For compactness, write the within-branch centered rewards as
\begin{equation}
    B_i^{T_1}
    \coloneqq
    \frac{R_i-\bar R_g^{T_1}}{s_g},
    \quad i\in\mathcal{G}_{g,T_1},
    \qquad
    B_j^{T_0}
    \coloneqq
    \frac{R_j-\bar R_g^{T_0}}{s_g},
    \quad j\in\mathcal{G}_{g,T_0}.
\end{equation}
Then Eq.~\eqref{eq:mixed_group_decomp} can be rewritten as
\begin{equation}
\label{eq:app_branch_adv_decomp}
    A_i = B_i^{T_1} + \frac{n_0}{n}\widehat\gamma_g,
    \quad i\in\mathcal{G}_{g,T_1},
    \qquad
    A_j = B_j^{T_0} - \frac{n_1}{n}\widehat\gamma_g,
    \quad j\in\mathcal{G}_{g,T_0}.
\end{equation}
Thus the same prompt-level exploration-gain estimate $\widehat\gamma_g$ appears
with opposite signs in the two branches.

For a high-temperature rollout $i\in\mathcal{G}_{g,T_1}$, define its
JS-weighted score vector
\begin{equation}
\label{eq:app_high_score_vector}
    \psi_i^{T_1}(\theta)
    \coloneqq
    \frac{1}{N_i}\sum_{t\in\mathcal{T}_i}
    w_{i,t}\,
    \nabla_\theta
    \log \pi_\theta^{T_1}(y_{i,t}\mid x_g,y_{i,<t}),
    \qquad
    \bar\psi_g^{T_1}
    \coloneqq
    \frac{1}{n_1}\sum_{i\in\mathcal{G}_{g,T_1}}\psi_i^{T_1}(\theta).
\end{equation}
For comparison, consider a hypothetical update-both variant that also updates
low-temperature rollouts with some unit-mean low-branch token weights
$v_{j,t}$, where $N_j^{-1}\sum_{t\in\mathcal{T}_j}v_{j,t}=1$. Its low-branch
score vector would be
\begin{equation}
\label{eq:app_low_score_vector}
    \psi_j^{T_0}(\theta)
    \coloneqq
    \frac{1}{N_j}\sum_{t\in\mathcal{T}_j}
    v_{j,t}\,
    \nabla_\theta
    \log \pi_\theta^{T_0}(y_{j,t}\mid x_g,y_{j,<t}),
    \qquad
    \bar\psi_g^{T_0}
    \coloneqq
    \frac{1}{n_0}\sum_{j\in\mathcal{G}_{g,T_0}}\psi_j^{T_0}(\theta).
\end{equation}

\begin{proposition}[Exploration-gain attribution under high-temperature-only updates]
\label{prop:low_temp_attribution_control}
Ignoring clipping only for the purpose of decomposing the first-order
score-function carrier, the high-temperature-only TGRL ascent carrier for prompt
group $g$ is
\begin{equation}
\label{eq:app_high_only_carrier}
    \mathcal{U}_g^{\mathrm{TGRL}}
    \coloneqq
    \frac{1}{n_1}
    \sum_{i\in\mathcal{G}_{g,T_1}}
    A_i\,\psi_i^{T_1}(\theta)
    =
    \underbrace{
    \frac{1}{n_1}
    \sum_{i\in\mathcal{G}_{g,T_1}}
    B_i^{T_1}\,\psi_i^{T_1}(\theta)
    }_{\text{within-high-temperature ranking}}
    +
    \underbrace{
    \frac{n_0}{n}\widehat\gamma_g\,\bar\psi_g^{T_1}
    }_{\text{exploration-gain carrier}} .
\end{equation}
By contrast, a symmetric update-both variant that averages over both branches
would have the carrier
\begin{equation}
\label{eq:app_update_both_carrier}
\begin{aligned}
    \mathcal{U}_g^{\mathrm{both}}
    &\coloneqq
    \frac{1}{n}
    \left(
    \sum_{i\in\mathcal{G}_{g,T_1}} A_i\,\psi_i^{T_1}(\theta)
    +
    \sum_{j\in\mathcal{G}_{g,T_0}} A_j\,\psi_j^{T_0}(\theta)
    \right) \\
    &=
    \underbrace{
    \frac{1}{n}
    \left(
    \sum_{i\in\mathcal{G}_{g,T_1}} B_i^{T_1}\,\psi_i^{T_1}(\theta)
    +
    \sum_{j\in\mathcal{G}_{g,T_0}} B_j^{T_0}\,\psi_j^{T_0}(\theta)
    \right)
    }_{\text{within-branch ranking terms}} \\
    &\quad+
    \underbrace{
    \frac{n_0n_1}{n^2}\widehat\gamma_g
    \left(
    \bar\psi_g^{T_1}-\bar\psi_g^{T_0}
    \right)
    }_{\text{contrastive high-minus-low exploration-gain carrier}} .
\end{aligned}
\end{equation}
Therefore, updating both branches changes the exploration-gain carrier from a
pure high-temperature attribution term into a contrastive score-difference term
that includes an explicit negative low-branch component.
\end{proposition}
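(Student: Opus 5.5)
The proof is pure algebra. It substitutes the branch-wise advantage decomposition Eq.~\eqref{eq:app_branch_adv_decomp}, which restates Proposition~\ref{prop:mixed_adv}, into each carrier and collects terms. Throughout, under the stop-gradient convention, $A_i$, $B_i^{T_1}$, $B_j^{T_0}$, $\widehat\gamma_g$ and the weights $w_{i,t}$, $v_{j,t}$ are constants with respect to $\theta$. With clipping ignored, the first-order score-function carrier of each branch is therefore the advantage-weighted average of the score vectors $\psi_i^{T_1}$ and $\psi_j^{T_0}$. I will first note that, at $\theta=\theta_{\mathrm{old}}$, the gradient of the unclipped surrogate in Eq.~\eqref{eq:tgrl_actor_obj} for group $g$ is exactly $\frac{1}{n_1}\sum_i A_i\psi_i^{T_1}$. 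This holds because $\nabla_\theta r_{i,t}=\nabla_\theta\log\pi_\theta^{T_1}$ there and $\widetilde A_{i,t}=w_{i,t}A_i$, with the factor $1/N_i$ already inside $\psi_i^{T_1}$. This identification justifies calling $\mathcal{U}_g^{\mathrm{TGRL}}$ the carrier.

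For Eq.~\eqref{eq:app_high_only_carrier}, I substitute $A_i=B_i^{T_1}+\frac{n_0}{n}\widehat\gamma_g$ for $i\in\mathcal{G}_{g,T_1}$ and split the sum. The constant part gives $\frac{n_0}{n}\widehat\gamma_g\cdot\frac{1}{n_1}\sum_i\psi_i^{T_1}=\frac{n_0}{n}\widehat\gamma_g\bar\psi_g^{T_1}$, and the remainder is the within-high-temperature ranking term.

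For Eq.~\eqref{eq:app_update_both_carrier}, I substitute both branch decompositions into the $\frac1n$-weighted sum. The $B$-parts give the within-branch ranking terms. The constant parts give
\[
\frac1n\Bigl(\tfrac{n_0}{n}\widehat\gamma_g\, n_1\bar\psi_g^{T_1}-\tfrac{n_1}{n}\widehat\gamma_g\, n_0\bar\psi_g^{T_0}\Bigr)=\frac{n_0n_1}{n^2}\widehat\gamma_g\bigl(\bar\psi_g^{T_1}-\bar\psi_g^{T_0}\bigr),
\]
using $\sum_i\psi_i^{T_1}=n_1\bar\psi_g^{T_1}$ and $\sum_j\psi_j^{T_0}=n_0\bar\psi_g^{T_0}$. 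The concluding sentence then follows by reading off the coefficient $-\frac{n_0n_1}{n^2}\widehat\gamma_g$ on $\bar\psi_g^{T_0}$. When $\widehat\gamma_g>0$, this term pushes down the log-likelihood of the low-temperature responses, so the exploration-gain carrier becomes a high-minus-low contrast rather than a pure high-temperature attribution.

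The computation itself poses no difficulty. The main point needing care is the modelling step: making precise that dropping clipping and freezing $(A_i,w_{i,t},v_{j,t})$ is what licenses treating $\mathcal{U}$ as the gradient direction. I would state this explicitly as the first-order expansion at $\theta_{\mathrm{old}}$.
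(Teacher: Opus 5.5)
Your proposal is correct and follows the paper's proof. You substitute the branch-wise decomposition $A_i = B_i^{T_1} + \frac{n_0}{n}\widehat\gamma_g$, $A_j = B_j^{T_0} - \frac{n_1}{n}\widehat\gamma_g$ into each carrier and collect the constant parts via $\sum_i \psi_i^{T_1} = n_1\bar\psi_g^{T_1}$ and $\sum_j \psi_j^{T_0} = n_0\bar\psi_g^{T_0}$; your identification of $\mathcal{U}_g^{\mathrm{TGRL}}$ with the unclipped-surrogate gradient at $\theta=\theta_{\mathrm{old}}$ is accurate and makes explicit a modelling step the paper leaves implicit, though the algebraic identities do not depend on it.
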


\begin{proof}
Substitute Eq.~\eqref{eq:app_branch_adv_decomp} into the high-temperature-only
carrier:
\begin{align}
    \mathcal{U}_g^{\mathrm{TGRL}}
    &=
    \frac{1}{n_1}
    \sum_{i\in\mathcal{G}_{g,T_1}}
    \left(B_i^{T_1}+\frac{n_0}{n}\widehat\gamma_g\right)
    \psi_i^{T_1}(\theta) \nonumber \\
    &=
    \frac{1}{n_1}
    \sum_{i\in\mathcal{G}_{g,T_1}}
    B_i^{T_1}\,\psi_i^{T_1}(\theta)
    +
    \frac{n_0}{n}\widehat\gamma_g
    \left(
    \frac{1}{n_1}
    \sum_{i\in\mathcal{G}_{g,T_1}}\psi_i^{T_1}(\theta)
    \right),
\end{align}
which proves Eq.~\eqref{eq:app_high_only_carrier}. For the update-both variant,
substituting Eq.~\eqref{eq:app_branch_adv_decomp} gives
\begin{align}
    \mathcal{U}_g^{\mathrm{both}}
    &=
    \frac{1}{n}
    \sum_{i\in\mathcal{G}_{g,T_1}}
    \left(B_i^{T_1}+\frac{n_0}{n}\widehat\gamma_g\right)
    \psi_i^{T_1}(\theta)
    +
    \frac{1}{n}
    \sum_{j\in\mathcal{G}_{g,T_0}}
    \left(B_j^{T_0}-\frac{n_1}{n}\widehat\gamma_g\right)
    \psi_j^{T_0}(\theta) \nonumber \\
    &=
    \frac{1}{n}
    \left(
    \sum_{i\in\mathcal{G}_{g,T_1}} B_i^{T_1}\,\psi_i^{T_1}(\theta)
    +
    \sum_{j\in\mathcal{G}_{g,T_0}} B_j^{T_0}\,\psi_j^{T_0}(\theta)
    \right) \nonumber \\
    &\quad+
    \frac{n_0}{n^2}\widehat\gamma_g
    \sum_{i\in\mathcal{G}_{g,T_1}}\psi_i^{T_1}(\theta)
    -
    \frac{n_1}{n^2}\widehat\gamma_g
    \sum_{j\in\mathcal{G}_{g,T_0}}\psi_j^{T_0}(\theta) \nonumber \\
    &=
    \frac{1}{n}
    \left(
    \sum_{i\in\mathcal{G}_{g,T_1}} B_i^{T_1}\,\psi_i^{T_1}(\theta)
    +
    \sum_{j\in\mathcal{G}_{g,T_0}} B_j^{T_0}\,\psi_j^{T_0}(\theta)
    \right)
    +
    \frac{n_0n_1}{n^2}\widehat\gamma_g
    \left(
    \bar\psi_g^{T_1}-\bar\psi_g^{T_0}
    \right),
\end{align}
which proves Eq.~\eqref{eq:app_update_both_carrier}.
\end{proof}

\paragraph{Interpretation.}
When $\Delta R_g>0$, the high-temperature branch has empirically outperformed
the low-temperature reference on prompt $x_g$. Eq.~\eqref{eq:app_high_only_carrier}
then assigns the exploration-gain carrier to the high-temperature score direction
$\bar\psi_g^{T_1}$, after JS-based token weighting has  allocated the trajectory signal across token positions. If the low branch were also updated, Eq.~\eqref{eq:app_update_both_carrier}
would simultaneously add a negative term $-\bar\psi_g^{T_0}$. This negative
mirror term is not evidence about which high-temperature token caused the
successful exploratory trajectory; it is a consequence of using the low branch
as the opposite side of the reward contrast. Conversely, when $\Delta R_g<0$,
TGRL suppresses unsuccessful high-temperature exploration through the high-branch
carrier, without turning the conservative reference samples into an additional
positive training target.

Thus, the zero-gradient rule implements a clean separation of roles:
\begin{equation}
\label{eq:app_attribution_path}
\mathcal{G}_{g,T_0}
\;\longrightarrow\;
(\bar R_g^{T_0},\mu_g,\sigma_g,\Delta R_g)
\;\longrightarrow\;
A_i\ \text{for}\ i\in\mathcal{G}_{g,T_1}
\;\longrightarrow\;
\widetilde A_{i,t}\,
\nabla_\theta\log\pi_\theta^{T_1}(y_{i,t}\mid x_g,y_{i,<t}),
\end{equation}
while blocking the direct path
\begin{equation}
\label{eq:app_blocked_low_path}
\mathcal{G}_{g,T_0}
\;\not\longrightarrow\;
\widetilde A_{j,t}\,
\nabla_\theta\log\pi_\theta^{T_0}(y_{j,t}\mid x_g,y_{j,<t}),
\qquad
j\in\mathcal{G}_{g,T_0}.
\end{equation}
The low-temperature samples therefore remain essential for estimating the
prompt-level exploration gain, but they do not receive token-level credit. This
prevents the conservative reference branch from becoming a competing actor-update
target and reduces cross-branch interference inside a clipped-ratio surrogate.

\paragraph{Budget accounting.}
This role separation also makes the budget accounting explicit. TGRL uses the
same rollout budget as the single-temperature baselines, but it deliberately
spends $n_0$ rollouts on estimating the low-temperature reference and $n_1$
rollouts on high-temperature actor updates. Under the default split
$(n_0,n_1)=(1,3)$, one response estimates the conservative reference and three
responses provide exploratory update samples. Appendix~\ref{app:time_complexity}
accounts for this choice at the systems level, and
Appendix~\ref{app:split_sensitivity} evaluates the low-/high-temperature split
under the same total rollout budget. Therefore the zero-gradient design should
be read as an attribution-control mechanism.

\clearpage

\section{Algorithm Design}
\label{app:algorithm}

Algorithm~\ref{alg:tgrl} summarizes the complete TGRL training procedure for one policy-update step, consisting of three phases: (i)~mixed-temperature rollout and grouped advantage computation, (ii)~JS-based token credit allocation on the high-temperature group, and (iii)~assembly of the clipped surrogate objective.

\begin{algorithm}[h]
\caption{Temperature-Grouped Reinforcement Learning (TGRL) --- one policy-update step}
\label{alg:tgrl}
\begin{algorithmic}[1]
\Require Policy $\pi_\theta$; prompt batch $\{x_g\}$; temperatures $T_0 < T_1$; subgroup sizes $|\mathcal{G}_{g,T_0}|$, $|\mathcal{G}_{g,T_1}|$; clip range $\epsilon_c$; stabilizer $\epsilon$
\Ensure Updated policy parameters $\theta$

\For{each prompt $x_g$ in the batch}

    \Statex \hspace{\algorithmicindent}\textcolor{gray}{\textit{\% Mixed-temperature rollout and grouped advantage}}
    \State Sample $|\mathcal{G}_{g,T_0}|$ low-temperature responses $\{y_i\}_{i\in\mathcal{G}_{g,T_0}}$ from $\pi_\theta^{T_0}(\cdot\mid x_g)$
    \State Sample $|\mathcal{G}_{g,T_1}|$ high-temperature responses $\{y_i\}_{i\in\mathcal{G}_{g,T_1}}$ from $\pi_\theta^{T_1}(\cdot\mid x_g)$
    \State Obtain rewards $\{R_i\}_{i\in\mathcal{G}_{g,T_0+T_1}}$ from the verifier, where $\mathcal{G}_{g,T_0+T_1} = \mathcal{G}_{g,T_0} \cup \mathcal{G}_{g,T_1}$
    \State Compute group statistics:
        $\mu_g \gets \frac{1}{|\mathcal{G}_{g,T_0+T_1}|}\textstyle\sum_{i\in\mathcal{G}_{g,T_0+T_1}} R_i$, \quad
        $\sigma_g^2 \gets \frac{1}{|\mathcal{G}_{g,T_0+T_1}|}\textstyle\sum_{i\in\mathcal{G}_{g,T_0+T_1}}(R_i - \mu_g)^2$
    \State Compute group-normalized advantage: $A_i \gets (R_i - \mu_g) / \sqrt{\sigma_g^2 + \epsilon}$ for all $i\in\mathcal{G}_{g,T_0+T_1}$

    \Statex \hspace{\algorithmicindent}\textcolor{gray}{\textit{\% JS-based token credit allocation (high-temperature group only)}}
    \For{each high-temperature rollout $i \in \mathcal{G}_{g,T_1}$}
        \State Obtain logits $z_{i,t}$ at each response position $t \in \mathcal{T}_i$
        \State $p_{i,t}^{(0)} \gets \mathrm{softmax}(z_{i,t}/T_0)$, \quad $p_{i,t}^{(1)} \gets \mathrm{softmax}(z_{i,t}/T_1)$
        \State $J_{i,t} \gets \mathrm{JS}(p_{i,t}^{(0)}, p_{i,t}^{(1)})$ for all $t\in\mathcal{T}_i$
\State $\bar{J}_i \gets \frac{1}{N_i}\sum_{\tau \in \mathcal{T}_i} J_{i,\tau}$ \Comment{Within-trajectory mean}
\State $\omega_{i,t} \gets \log\!\bigl(1 + (J_{i,t}+\epsilon)/(\bar{J}_i+\epsilon)\bigr)$ \Comment{Relative JS with log compression}
\State $\bar{\omega}_i \gets \frac{1}{N_i}\sum_{\tau \in \mathcal{T}_i} \omega_{i,\tau}$ \Comment{Within-trajectory mean score}
\State $w_{i,t} \gets \omega_{i,t} / \bar{\omega}_i$ \Comment{Unit-mean token weights}
\State $\widetilde{A}_{i,t} \gets w_{i,t} \cdot A_i$ \Comment{Token-level TGRL advantage}
    \EndFor

    \Statex \hspace{\algorithmicindent}\textcolor{gray}{\textit{\% Low-temperature group: contributes to $\mu_g, \sigma_g$ but receives no gradient}}
    \State $\widetilde{A}_{i,t} \gets 0$ for all $i \in \mathcal{G}_{g,T_0}$, $t \in \mathcal{T}_i$

\EndFor

\Statex \textcolor{gray}{\textit{\% Policy update}}
\State Compute importance ratios: $r_{i,t}(\theta) \gets \pi_\theta^{T_1}(y_{i,t}\mid x_g, y_{i,<t})\, /\, \pi_{\theta_{\mathrm{old}}}^{T_1}(y_{i,t}\mid x_g, y_{i,<t})$
\State $\mathcal{L}_{\mathrm{TGRL}}(\theta) \gets -\mathbb{E}_g\!\Big[\frac{1}{|\mathcal{G}_{g,T_1}|}\sum_{i\in\mathcal{G}_{g,T_1}}\frac{1}{N_i}\sum_{t\in\mathcal{T}_i} \min\!\big(r_{i,t}\widetilde{A}_{i,t},\; \mathrm{clip}(r_{i,t}, 1{-}\epsilon_c, 1{+}\epsilon_c)\,\widetilde{A}_{i,t}\big)\Big]$
\State Update $\theta$ by minimizing $\mathcal{L}_{\mathrm{TGRL}}(\theta)$
\end{algorithmic}
\end{algorithm}
\newpage

\section{Time Complexity Analysis}
\label{app:time_complexity}

We analyze the per-update time complexity of TGRL in Algorithm~\ref{alg:tgrl} and compare it with standard single-temperature group-based RLVR. The goal is to isolate the algorithm-specific overhead introduced by mixed-temperature grouping and JS-based token credit allocation. Let $G$ denote the number of prompt groups in one policy-update step. For prompt $x_g$, let $P_g$ be the prompt length, and for each rollout $i\in\mathcal{G}_{g,T_0+T_1}$, let $N_i = |\mathcal{T}_i|$ denote the number of valid response tokens. Define
\begin{equation}
\label{eq:complexity_token_sums}
    S_g \coloneqq \sum_{i\in\mathcal{G}_{g,T_0+T_1}} N_i,
    \qquad
    S_g^{T_1} \coloneqq \sum_{i\in\mathcal{G}_{g,T_1}} N_i.
\end{equation}
We further let $U\ge 1$ denote the number of optimizer epochs applied to the same rollout batch. Finally, let $C_{\mathrm{ver}}(g)$ denote the verifier cost on prompt group $g$. This term is task-dependent, but it is shared by TGRL and the single-temperature baselines, so it appears only as an additive term in the comparison below.

\paragraph{Computation model.}
Consider a decoder-only Transformer with $L$ layers, hidden width $d$, and vocabulary size $|V|$. Under standard KV-cached decoding, generating one new token at context length $s$ requires
\begin{equation}
\label{eq:complexity_decode_step}
    c_{\mathrm{dec}}(s)
    =
    \Theta\!\bigl(Ld^2 + Lsd + d|V|\bigr),
\end{equation}
where the three terms correspond to the dense layer projections/MLPs, causal attention against the cached prefix, and the language-model head. Therefore, generating a response of length $N_i$ conditioned on prompt length $P_g$ costs
\begin{align}
\label{eq:complexity_generation_single}
    C_{\mathrm{gen}}(P_g,N_i)
    &=
    \sum_{t=1}^{N_i} c_{\mathrm{dec}}(P_g+t-1) \nonumber\\
    &=
    \Theta\!\left(
        L N_i d^2
        + L d\sum_{t=1}^{N_i}(P_g+t-1)
        + N_i d|V|
    \right) \nonumber\\
    &=
    \Theta\!\left(
        L N_i d^2
        + L N_i P_g d
        + \frac{L}{2}N_i(N_i-1)d
        + N_i d|V|
    \right).
\end{align}

For the actor update, TGRL performs teacher-forced evaluation on the sampled trajectories to obtain token logits and importance ratios. A forward-backward pass over a sequence of total length $s$ has the standard Transformer complexity
\begin{equation}
\label{eq:complexity_teacher_forced}
    c_{\mathrm{tf}}(s)
    =
    \Theta\!\bigl(Lsd^2 + Ls^2d + sd|V|\bigr),
\end{equation}
so the model-update cost for one rollout $i$ is
\begin{equation}
\label{eq:complexity_update_single}
    C_{\mathrm{upd}}(P_g,N_i)
    =
    \Theta\!\bigl(c_{\mathrm{tf}}(P_g+N_i)\bigr)
    =
    \Theta\!\bigl(L(P_g+N_i)d^2 + L(P_g+N_i)^2 d + (P_g+N_i)d|V|\bigr).
\end{equation}

\begin{proposition}[Per-update time complexity of TGRL]
\label{prop:tgrl_time_complexity}
Assume that the current-policy logits computed on the high-temperature trajectories are reused both for the clipped-ratio term in Eq.~\eqref{eq:tgrl_ratio} and for the JS scores in Eq.~\eqref{eq:tgrl_js_def}. Then one TGRL policy-update step over $G$ prompt groups has time complexity
\begin{equation}
\label{eq:complexity_tgrl_total}
\begin{aligned}
\mathcal{T}_{\mathrm{TGRL}}
= \sum_{g=1}^{G} \Biggl[
    \sum_{i\in\mathcal{G}_{g,T_0+T_1}} C_{\mathrm{gen}}(P_g,N_i)
    + U\sum_{i\in\mathcal{G}_{g,T_1}} C_{\mathrm{upd}}(P_g,N_i) \\
    + O\!\left(
        S_g^{T_1} |V|
        + |\mathcal{G}_{g,T_0+T_1}|
    \right)
    + C_{\mathrm{ver}}(g)
\Biggr].
\end{aligned}
\end{equation}
\end{proposition}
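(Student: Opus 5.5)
The plan is to decompose one TGRL policy-update step, following Algorithm~\ref{alg:tgrl}, into four phases. These are (i) mixed-temperature rollout generation, (ii) verifier scoring and group statistics, (iii) teacher-forced evaluation with JS scoring and weight construction on the high-temperature branch, and (iv) the $U$ optimizer epochs on the clipped surrogate Eq.~\eqref{eq:tgrl_actor_obj}. I would bound each phase separately per prompt group $g$ and then sum over $g=1,\dots,G$. Each phase contributes an additive term, so the total is the sum of the per-phase costs.

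For phase (i), every rollout $i\in\mathcal{G}_{g,T_0+T_1}$ is decoded with a KV cache. Sampling at $T_0$ or $T_1$ only rescales the logits before the softmax, which is an $O(|V|)$ operation already dominated by the $d|V|$ head term. So each rollout costs exactly $C_{\mathrm{gen}}(P_g,N_i)$ from Eq.~\eqref{eq:complexity_generation_single}, and the rollout term is their sum. For phase (ii), the verifier contributes $C_{\mathrm{ver}}(g)$ by definition. Computing $\mu_g$, $\sigma_g^2$, and all $A_i$ takes two passes over the group rewards, hence $O(|\mathcal{G}_{g,T_0+T_1}|)$.

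For phases (iii)–(iv), low-temperature rollouts have $\widetilde A_{i,t}=0$ by Eq.~\eqref{eq:tgrl_token_adv}, so their surrogate contribution vanishes identically (Eq.~\eqref{eq:app_low_zero_exact}). They therefore need no teacher-forced pass, and only $i\in\mathcal{G}_{g,T_1}$ incurs $C_{\mathrm{upd}}(P_g,N_i)$ per epoch, giving $U\sum_{i\in\mathcal{G}_{g,T_1}}C_{\mathrm{upd}}$. The key step is the reuse assumption. The logits $z_{i,t}$ produced by the first teacher-forced forward pass already give both ratio numerators. I would take old-policy log-probabilities as cached from rollout, or as absorbed into one extra forward pass that is again $O(C_{\mathrm{upd}})$. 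The same logits are then used to form $p^{(0)}_{i,t}$ and $p^{(1)}_{i,t}$, so JS scoring requires no further network evaluation.

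Given $z_{i,t}$, the extra work per position consists of two softmaxes, the midpoint $m_{i,t}$, and the two KL sums of Eq.~\eqref{eq:tgrl_js_expanded}, all $O(|V|)$. Over the branch this totals $O(S^{T_1}_g|V|)$. The weights of Eq.~\eqref{eq:tgrl_js_mean} need two passes per trajectory for $\bar J_i$ and $\bar\omega_i$, then pointwise maps, which is $O(S^{T_1}_g)$ and absorbed. Forming $\widetilde A_{i,t}$ and the clipped min per token is $O(S^{T_1}_g)$ per epoch and is absorbed into $C_{\mathrm{upd}}$, which already includes the $(P_g+N_i)d|V|$ head term. The JS weights are detached and computed once, not per epoch, so they do not pick up the factor $U$.

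I expect the main obstacle to be bookkeeping rather than mathematics. One must argue carefully that the JS computation is charged outside the $U$-fold update term, that the old-policy ratio denominators do not create an asymptotically new term, and that the $O(\cdot)$ terms really are dominated or stated additively as in Eq.~\eqref{eq:complexity_tgrl_total}. I would settle this by explicitly fixing the convention that JS scores come from the first-epoch forward pass and are then frozen. Summing the four contributions over $g$ yields Eq.~\eqref{eq:complexity_tgrl_total}.
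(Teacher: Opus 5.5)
Your proposal is correct and uses the same four-part decomposition as the paper's proof. The four parts are: rollout cost summed over the full mixed group; $O(|\mathcal{G}_{g,T_0+T_1}|)$ group statistics plus $C_{\mathrm{ver}}(g)$; a $U$-fold actor-update term restricted to $\mathcal{G}_{g,T_1}$ because low-temperature tokens carry zero advantage; and $O(|V|)$-per-token JS post-processing from shared logits. Your extra bookkeeping only makes explicit what the paper leaves implicit: old-policy log-probabilities are absorbed into an $O(C_{\mathrm{upd}})$ pass, and the JS weights are computed once and frozen, so they do not pick up the factor $U$.
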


\begin{proof}
Algorithm~\ref{alg:tgrl} decomposes into four parts.

\emph{(i) Mixed-temperature rollout.} For each prompt $x_g$, TGRL samples trajectories indexed by $\mathcal{G}_{g,T_0}$ and $\mathcal{G}_{g,T_1}$. The total rollout cost is therefore $\sum_{i\in\mathcal{G}_{g,T_0+T_1}} C_{\mathrm{gen}}(P_g,N_i)$.

\emph{(ii) Grouped reward statistics.} Once rewards are returned by the verifier, computing $\mu_g$, $\sigma_g^2$, and the rollout-level advantages $A_i$ requires only linear work in the group size, namely $O(|\mathcal{G}_{g,T_0+T_1}|)$.

\emph{(iii) High-temperature-group model pass.} By design, only the high-temperature group receives non-zero token advantages in Eq.~\eqref{eq:tgrl_token_adv}. Hence only trajectories in $\mathcal{G}_{g,T_1}$ participate in the actor update, giving the term $U\sum_{i\in\mathcal{G}_{g,T_1}} C_{\mathrm{upd}}(P_g,N_i)$.

\emph{(iv) JS-based credit allocation and weight normalization.} For each high-temperature rollout and each valid token position, computing $p_{i,t}^{(0)}$, $p_{i,t}^{(1)}$, and $J_{i,t}$ from a shared logit vector $z_{i,t}$ requires vocabulary-wise softmax and elementwise operations, which cost $O(|V|)$ per token. Summed over the high-temperature group, this yields $O(S_g^{T_1}|V|)$. The subsequent weight map in Eq.~\eqref{eq:tgrl_js_mean} consists of two within-trajectory means $(\bar{J}_i,\bar{\omega}_i)$, a pointwise $\log(1+\cdot)$, and a final pointwise division, all linear in $N_i$ and therefore absorbed into the $O(S_g^{T_1}|V|)$ term. Adding all terms and then summing over $g=1,\dots,G$ proves Eq.~\eqref{eq:complexity_tgrl_total}.
\end{proof}

\begin{corollary}[Comparison with single-temperature group optimization]
\label{cor:tgrl_baseline_complexity}
Consider a standard single-temperature group-based RLVR method that uses the same rollout budget $|\mathcal{G}_{g,T_0+T_1}|$ and applies the actor update to all trajectories in $\mathcal{G}_{g,T_0+T_1}$. Its per-update complexity is
\begin{equation}
\label{eq:complexity_baseline_total}
    \mathcal{T}_{\mathrm{base}}
    =
    \sum_{g=1}^{G}
    \left[
        \sum_{i\in\mathcal{G}_{g,T_0+T_1}} C_{\mathrm{gen}}(P_g,N_i)
        +
        U\sum_{i\in\mathcal{G}_{g,T_0+T_1}} C_{\mathrm{upd}}(P_g,N_i)
        + O(|\mathcal{G}_{g,T_0+T_1}|)
        + C_{\mathrm{ver}}(g)
    \right].
\end{equation}
Consequently,
\begin{equation}
\label{eq:complexity_difference_tgrl_base}
    \mathcal{T}_{\mathrm{TGRL}} - \mathcal{T}_{\mathrm{base}}
    =
    - U\sum_{g=1}^{G}\sum_{i\in\mathcal{G}_{g,T_0}} C_{\mathrm{upd}}(P_g,N_i)
    +
    O\!\left(
        \sum_{g=1}^{G}
        S_g^{T_1} |V|
    \right).
\end{equation}
In particular, when $|\mathcal{G}_{g,T_0}|$, $|\mathcal{G}_{g,T_1}|$, $|\mathcal{G}_{g,T_0+T_1}|$, and $U$ are fixed hyperparameters independent of model size and sequence length, TGRL has the same leading asymptotic order as standard group-based RLVR; its additional cost is a lower-order token-wise JS post-processing term on the high-temperature group.
\end{corollary}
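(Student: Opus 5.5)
The comparison rests on the same four-part accounting already used in the proof of Proposition~\ref{prop:tgrl_time_complexity}. I would first derive Eq.~\eqref{eq:complexity_baseline_total} for the single-temperature method by going through its four parts in turn.
\begin{enumerate}
\item[(i)] The baseline samples all $|\mathcal{G}_{g,T_0+T_1}|$ rollouts, so rollout generation costs $\sum_{i\in\mathcal{G}_{g,T_0+T_1}} C_{\mathrm{gen}}(P_g,N_i)$. This is identical to the TGRL term, because the generation cost $C_{\mathrm{gen}}$ in Eq.~\eqref{eq:complexity_generation_single} does not depend on the sampling temperature.
\item[(ii)] Computing the group mean, variance and advantages costs $O(|\mathcal{G}_{g,T_0+T_1}|)$.
\item[(iii)] Every trajectory receives a nonzero advantage, so the actor update runs $U$ teacher-forced forward--backward passes over the whole group. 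This costs $U\sum_{i\in\mathcal{G}_{g,T_0+T_1}} C_{\mathrm{upd}}(P_g,N_i)$.
\item[(iv)] The verifier cost $C_{\mathrm{ver}}(g)$ is shared by construction.
\end{enumerate}
Summing over $g$ gives Eq.~\eqref{eq:complexity_baseline_total}.

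Next I would subtract this from Eq.~\eqref{eq:complexity_tgrl_total} term by term.
\begin{itemize}
\item The generation terms, the $O(|\mathcal{G}_{g,T_0+T_1}|)$ terms and the verifier terms cancel.
\item For the update terms, I split $\mathcal{G}_{g,T_0+T_1}$ as the disjoint union $\mathcal{G}_{g,T_0}\cup\mathcal{G}_{g,T_1}$. This gives $U\sum_{\mathcal{G}_{g,T_1}}C_{\mathrm{upd}}-U\sum_{\mathcal{G}_{g,T_0+T_1}}C_{\mathrm{upd}}=-U\sum_{\mathcal{G}_{g,T_0}}C_{\mathrm{upd}}$.
\item What remains is the JS post-processing term $O(\sum_g S_g^{T_1}|V|)$.
\end{itemize}
This yields Eq.~\eqref{eq:complexity_difference_tgrl_base}. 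One caveat: subtracting two $O(\cdot)$ bookkeeping terms only makes sense as upper-bound accounting. I would state the difference as holding up to the shared $O(|\mathcal{G}_{g,T_0+T_1}|)$ overhead, which is absorbed in either expression.

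For the "in particular" claim, the step that needs care is showing that the extra term is of lower order than the dominant cost. By Eq.~\eqref{eq:complexity_update_single}, each high-temperature rollout already contributes $U\,C_{\mathrm{upd}}(P_g,N_i)\ge \Omega\bigl((P_g+N_i)d|V|\bigr)\ge\Omega(N_i d|V|)$ to $\mathcal{T}_{\mathrm{TGRL}}$. Summing over $\mathcal{G}_{g,T_1}$ gives at least $\Omega(S_g^{T_1}d|V|)$. The JS term $O(S_g^{T_1}|V|)$ is therefore smaller by a factor of $d$ and is dominated by the language-model-head part of the update cost. The same comparison holds against the generation cost, which contains $N_i d|V|$.

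It remains to check that the leading orders of the two methods coincide. The removed low-temperature update costs are nonnegative, so $\mathcal{T}_{\mathrm{TGRL}}\le\mathcal{T}_{\mathrm{base}}+o(\cdot)$. For the reverse direction, $\mathcal{T}_{\mathrm{TGRL}}$ retains the full generation cost $\sum_{i\in\mathcal{G}_{g,T_0+T_1}}C_{\mathrm{gen}}$ and all high-temperature updates. I would compare each $C_{\mathrm{gen}}(P_g,N_i)$ and $C_{\mathrm{upd}}(P_g,N_i)$ via Eqs.~\eqref{eq:complexity_generation_single} and \eqref{eq:complexity_update_single}. Both share the scaling families $L(P+N)d^2$ and $L(P+N)^2d$ (for generation, $LNPd+LN^2d$), up to constants, when $P_g$ is at most comparable to $N_i$.

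The group sizes and $U$ are fixed constants, so the dropped $|\mathcal{G}_{g,T_0}|$ updates are at most a constant multiple, specifically $|\mathcal{G}_{g,T_0}|$ times the per-rollout cost, of terms already present. Hence $\mathcal{T}_{\mathrm{base}}=\Theta(\mathcal{T}_{\mathrm{TGRL}})$.

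I expect this last two-sided $\Theta$ comparison to be the main obstacle. It is most delicate when rollout lengths differ across subgroups, since low-temperature responses could be much longer. I would handle this by stating the claim under the natural assumption that the response lengths $N_i$ within a group are of comparable order. Alternatively, I would interpret "same leading asymptotic order" as the common $\Theta$-form in the variables $(L,d,|V|,P,N)$, which holds regardless of the lengths.
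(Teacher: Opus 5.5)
Your proposal is correct and follows essentially the paper's route: you obtain the baseline total by the same four-part accounting as in Proposition~\ref{prop:tgrl_time_complexity} with every trajectory updated, you get the difference by splitting $\mathcal{G}_{g,T_0+T_1}$ into its two subgroups, and your lower-order argument is exactly the paper's comparison of the $O(N_i|V|)$ JS cost against the $(P_g+N_i)d|V|$ language-model-head part of $C_{\mathrm{upd}}$. Your extra two-sided $\Theta$ step, which the paper does not carry out, needs neither the comparable-length nor the $P_g\le N_i$ assumption: for $j\in\mathcal{G}_{g,T_0}$, split $C_{\mathrm{upd}}(P_g,N_j)$ into a prompt-only part $LP_gd^2+LP_g^2d+P_gd|V|$, which any single high-temperature update already dominates, plus a remainder of order $C_{\mathrm{gen}}(P_g,N_j)$, which $\mathcal{T}_{\mathrm{TGRL}}$ retains for that same rollout; this gives $\mathcal{T}_{\mathrm{base}}\le c\,\mathcal{T}_{\mathrm{TGRL}}$ with $c$ depending only on $|\mathcal{G}_{g,T_0}|$ and $U$.
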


\paragraph{Interpretation.}
Corollary~\ref{cor:tgrl_baseline_complexity} shows that TGRL does not alter the dominant scaling law of group-based RLVR under a fixed rollout budget. The reason is structural: the low-temperature group is used for detection through the mixed-group statistics, but it is excluded from the actor update by Eq.~\eqref{eq:tgrl_token_adv}. As a result, TGRL replaces the model-update cost of the $|\mathcal{G}_{g,T_0}|$ low-temperature trajectories with token-level JS post-processing on the $|\mathcal{G}_{g,T_1}|$ high-temperature trajectories. Since $C_{\mathrm{upd}}(P_g,N_i)$ already contains a full Transformer forward-backward pass and a language-model head cost of order $(P_g+N_i)d|V|$, whereas the extra JS computation is only $O(N_i|V|)$ once logits are available, the added TGRL-specific overhead is strictly lower-order with respect to model width $d$.

A convenient fixed-length upper bound follows immediately. If $P_g\le P_{\max}$ and $N_i\le N_{\max}$ for all groups and trajectories, then
\begin{equation}
\label{eq:complexity_tgrl_fixed_length}
\begin{aligned}
\mathcal{T}_{\mathrm{TGRL}}
= O\Bigl(&
\sum_{g=1}^{G}|\mathcal{G}_{g,T_0+T_1}|\,C_{\mathrm{gen}}(P_{\max},N_{\max})
+ U\sum_{g=1}^{G}|\mathcal{G}_{g,T_1}|\,C_{\mathrm{upd}}(P_{\max},N_{\max}) \\
&+ \sum_{g=1}^{G}|\mathcal{G}_{g,T_1}|\,N_{\max}|V|
+ \sum_{g=1}^{G} C_{\mathrm{ver}}(g)
\Bigr).
\end{aligned}
\end{equation}
Under the training configuration used in this paper ($|\mathcal{G}_{g,T_0+T_1}|=4$, $|\mathcal{G}_{g,T_0}|=1$, $|\mathcal{G}_{g,T_1}|=3$), the dominant actor-update term scales with three rather than four trajectories per prompt group, while the TGRL-specific overhead is confined to the token-wise JS computation on the three high-temperature trajectories.

\paragraph{Naive non-shared implementation.}
The bound above assumes the standard implementation in which the logits used for Eq.~\eqref{eq:tgrl_js_def} are reused when evaluating Eq.~\eqref{eq:tgrl_ratio}. If these logits are computed in a separate forward pass, then Eq.~\eqref{eq:complexity_tgrl_total} acquires one additional high-temperature-group forward term of order $\sum_{g=1}^{G}\sum_{i\in\mathcal{G}_{g,T_1}} \Theta\!\bigl(c_{\mathrm{tf}}(P_g+N_i)\bigr)$. This does not change the asymptotic order, but it increases the constant factor.

\clearpage

\section{Wall-Clock Training Efficiency}
\label{app:wallclock}

The analysis in Appendix~\ref{app:time_complexity} established that TGRL shares the leading asymptotic order of single-temperature group-based RLVR, with only a lower-order token-wise JS term on the high-temperature group. This appendix complements that theoretical result with direct wall-clock measurements on the 14B training setup, using the same hardware, data pipeline, and decoding configuration as the main experiments. The primary tracking metric is \textsc{AIME-Combined}, defined as the sum of the Avg@16 validation accuracies on AIME24 and AIME25, logged periodically during training; larger values indicate stronger intermediate checkpoints and the score lies in $[0,2]$. We report two complementary protocols, each designed to rule out a distinct alternative explanation for TGRL's end-of-training accuracy.

\paragraph{Protocols.}
(i)~\textbf{Post-warmup fixed-budget comparison against GRPO.} Because TGRL activates mixed-temperature grouping only after a $20$-step single-temperature warmup (Section~\ref{sec:algorithm}), comparing wall-clock from step $0$ against a baseline without such a warmup would implicitly charge TGRL for time spent in a regime where both methods are optimizing an identical single-temperature objective. To isolate the two training regimes themselves, we align the clock to the end of TGRL's $20$-step warmup and compare the best \textsc{AIME-Combined} attained within the subsequent $12$ hours. Absolute wall-clock times (i.e., measured from step $0$ on TGRL's side) are also listed for transparency.

(ii)~\textbf{From-start comparison against DAPO.} DAPO, the strongest single-temperature baseline in our math suite, does not employ a method-specific warmup schedule beyond standard learning-rate warmup. We therefore compare both methods from step $0$ within the same $12$-hour window, and additionally record the first time each method reaches a common target accuracy, chosen as $0.95\times$ DAPO's $12$-hour best $=0.9045$. The from-start time-to-target protocol is the most direct measure of learning efficiency because it is independent of any specific stopping criterion.

\begin{table}[h]
\centering
\caption{Wall-clock training efficiency on the 14B setup under identical hardware, data, and evaluation settings. ``Time of Best (h)'' is measured from the protocol origin: the end of TGRL's $20$-step warmup in (i), and step $0$ in (ii)--(iii). ``Best AIME-Comb.'' is the highest validation score reached within the budget; ``Step'' is the step index at which that best was attained.}
\label{tab:wallclock}
\small
\setlength{\tabcolsep}{5pt}
\begin{tabular}{llcccl}
\toprule
\textbf{Protocol} & \textbf{Method} & \textbf{Best AIME-Comb.} & \textbf{Time of Best (h)} & \textbf{Step} & \textbf{Remarks} \\
\midrule
\multicolumn{6}{c}{\textit{(i) Post-warmup $12$-hour fixed-budget comparison (TGRL vs.\ GRPO)}} \\
\midrule
TGRL vs.\ GRPO & \textbf{TGRL (ours)} & \textbf{1.0833} & 11.01 & 80  & $13.84$h absolute \\
TGRL vs.\ GRPO & GRPO                & 1.0354          & 8.86  & 100 & $11.42$h absolute \\
\midrule
\multicolumn{6}{c}{\textit{(ii) From-start $12$-hour fixed-budget comparison (TGRL vs.\ DAPO)}} \\
\midrule
TGRL vs.\ DAPO & \textbf{TGRL (ours)} & \textbf{1.0604} & 10.21 & 60 & from training start \\
TGRL vs.\ DAPO & DAPO                & 0.9521          & 10.01 & 30 & from training start \\
\midrule
\multicolumn{6}{c}{\textit{(iii) From-start time-to-target comparison (target $=0.95\times$ DAPO's $12$h-best $=0.9045$)}} \\
\midrule
TGRL vs.\ DAPO & \textbf{TGRL (ours)} & 0.9250 & \textbf{6.39} & 40 & first time reaching target \\
TGRL vs.\ DAPO & DAPO                & 0.9521 & 10.01         & 30 & first time reaching target \\
\bottomrule
\end{tabular}
\end{table}

\paragraph{Analysis.}
Three observations emerge from Table~\ref{tab:wallclock}.

\emph{(1) TGRL attains a higher post-warmup best score than GRPO under the same wall-clock budget.} Within the shared $12$-hour post-warmup budget, TGRL reaches a best \textsc{AIME-Combined} of $1.0833$ versus GRPO's $1.0354$, a $+0.048$ absolute improvement in validation score. Crucially, this gain is obtained despite TGRL performing mixed-temperature rollout construction together with per-token JS computation on the high-temperature group at every step. The extra per-step work is small enough relative to the dominant Transformer forward-backward cost that TGRL still delivers \emph{more} validation progress per unit wall-clock, consistent with Corollary~\ref{cor:tgrl_baseline_complexity}.

\emph{(2) TGRL remains stronger than DAPO when wall-clock is measured from the start of training.} In the stricter from-start protocol, TGRL attains a best \textsc{AIME-Combined} of $1.0604$ in the first $12$ hours while DAPO attains $0.9521$, a $+0.108$ absolute gap---notably wider than in (i), because TGRL's post-warmup advantage compounds with the shared early-training period. Under this accounting, TGRL's first $20$ single-temperature warmup steps are included in the wall-clock total, and the method still outpaces a strong RLVR baseline on identical hardware and data.

\emph{(3) TGRL reaches DAPO-level accuracy substantially earlier.} Under the time-to-target protocol, TGRL crosses the $0.9045$ threshold at $6.39$ hours of wall-clock, whereas DAPO reaches its own best of $0.9521$ only after $10.01$ hours. TGRL therefore saves roughly $3.6$ hours---approximately $36\%$ of the wall-clock budget---to attain an accuracy level that DAPO achieves only near the end of the measurement window. Because this protocol fixes the accuracy target and measures time rather than fixing time and measuring accuracy, it is the most direct single-number summary of TGRL's learning-efficiency advantage.

\paragraph{Interpretation of the wall-clock measurements.}
The wall-clock results support three points.
First, they match the asymptotic conclusion of Corollary~\ref{cor:tgrl_baseline_complexity} in a practical regime: the mixed-temperature rollout construction and token-wise JS post-processing on the high-temperature group do not translate into a visible wall-clock penalty against single-temperature baselines under a fixed rollout budget $|\mathcal{G}_{g,T_0+T_1}|{=}4$.
Second, they show that the accuracy advantage reported in Tables~\ref{tab:math1} and~\ref{code_1} is accompanied by stronger learning progress per unit wall-clock under matched hardware and rollout budgets: TGRL is simultaneously faster-converging and achieves a higher peak than both GRPO (under the comparable post-warmup protocol) and DAPO (under the strict from-start protocol).
Third, they are consistent with the mechanism analysis of Section~\ref{sec:mechanism_analysis}: by turning exploration into a measurable prompt-level gain and allocating it as token-level credit to temperature-sensitive tokens, TGRL translates each compute unit into more learning progress while retaining stable optimization behavior.

\section{Exploration-Oriented Baselines}
\label{app:entropy_baselines}

As a mechanism-level complement to the single-temperature RLVR baselines in Table~\ref{tab:math1}, we compare TGRL against two families of exploration-oriented baselines on the Qwen3-14B mathematics setup: (i)~entropy-regularized GRPO with entropy coefficient $\lambda\in\{10^{-4},10^{-3}\}$, and (ii)~TAMPO, a recent temperature-adaptive RLVR method that adapts the decoding temperature globally across training steps. As discussed in Section~\ref{related_work}, these two families cover the dominant exploration-encouragement strategies in RLVR---loss-level entropy bonuses and rollout-level temperature scheduling---so together they probe whether TGRL's gains can be recovered by alternatives to within-prompt temperature grouping. All other training and evaluation settings are identical to the 14B configuration in Section~\ref{sec:experiments}, including the $|\mathcal{G}_{g,T_0+T_1}|{=}4$ rollout budget, the $20$-step warmup schedule, and the Avg@16 evaluation protocol on the six mathematical reasoning benchmarks.
Table~\ref{tab:entropy_baselines_14b} extends the main comparison in Table~\ref{tab:math1} with these exploration-oriented baselines. Three observations are important.

\paragraph{Both baseline families improve over plain GRPO, but unevenly.}
A mild entropy bonus ($\lambda{=}10^{-4}$) raises the six-benchmark average from $67.0$ to $67.8$, and TAMPO reaches $68.0$---numerically matching DAPO, the strongest single-temperature baseline in Table~\ref{tab:math1}. Both mechanisms therefore deliver measurable exploration gains, consistent with prior reports that entropy regularization~\citep{haarnoja2018soft,cui2025entropy} and temperature scheduling~\citep{yang2025let,dang2026temperature} are meaningful RLVR design choices. However, their benchmark-level profiles are uneven. Entropy regularization concentrates its gains on AIME24 ($+5.7$) and AIME25 ($+3.0$) but degrades AMC23 ($-2.4$), MATH500 ($-0.7$), and Minerva ($-1.8$). TAMPO shows a large improvement on AIME24 ($+5.8$), smaller gains on AIME25 ($+1.7$) and MATH500 ($+0.8$), and mild regressions on AMC23 ($-0.9$), Minerva ($-0.5$), and Olympiad ($-1.2$). Neither baseline improves consistently across all six benchmarks, and raising the entropy coefficient to $\lambda{=}10^{-3}$ further illustrates the hyperparameter sensitivity of loss-level regularization, dropping the average back to $66.7$.

\paragraph{The limitation is structural: both interventions are applied prompt-agnostically.}
Entropy regularization adds a constant entropy term to the actor loss, uniformly encouraging stochastic policies on every prompt regardless of whether broader search helps on that prompt. TAMPO changes the decoding temperature as a function of training progress, but the rollout group on each prompt is still generated at a \emph{single} temperature and normalized under standard single-temperature group statistics; the optimizer therefore cannot tell whether the current global temperature is helping on a given prompt. Both interventions operate at the same resolution: global and prompt-agnostic. On prompts where broader search would help, a global exploration push is beneficial; on prompts that are already well-supported by the baseline decoding regime, the same push is neutral or harmful. The mixed per-benchmark pattern---large gains on competition-style benchmarks, flat or negative changes on benchmarks where low-temperature decoding is already near-optimal---is exactly what this prompt-agnostic design predicts.

\paragraph{TGRL's prompt-level grouping targets this limitation directly.}
TGRL forms low- and high-temperature subgroups \emph{within every prompt group}, so the reward gap $\Delta R_g$ yields an unbiased per-prompt estimate of exploration gain (Lemma~\ref{lem:dual_var}). Proposition~\ref{prop:mixed_adv} then shifts every high-temperature advantage by $(|\mathcal{G}_{g,T_0}|/|\mathcal{G}_{g,T_0+T_1}|)\,\Delta R_g/\sqrt{\sigma_g^2+\epsilon}$, turning the prompt-specific estimate into a prompt-specific update. With $69.4$ average accuracy, TGRL outperforms all four exploration-oriented baselines: $+1.6$ over the stronger entropy-regularized run, $+1.4$ over TAMPO, and $+2.4$ over plain GRPO. The margins are largest on the benchmarks where prompt-level exploration gain is most informative: Olympiad ($+2.6$ over the best non-TGRL entry) and AIME25 ($+1.0$). On benchmarks where most prompts are already well-supported by low-temperature decoding (AMC23, Minerva), TGRL is competitive with the best non-TGRL baseline rather than dominating it---but it also avoids the regressions that global interventions incur on these benchmarks. The overall pattern is consistent with the theoretical role of the mixed-temperature group as a prompt-specific exploration-gain estimator rather than a global exploration regularizer.

\paragraph{Supplementary results on Qwen3-4B.}
\label{app:math_4b_supplementary}

For completeness, we additionally report the 4B counterpart of Table~\ref{tab:entropy_baselines_14b} on the same six mathematical reasoning benchmarks, using Qwen3-4B and otherwise identical training and evaluation settings. The intent is to verify that the main-paper ordering transfers to a smaller-scale backbone.

\begin{table}[h]
    \centering
    \caption{Avg@16 results on mathematical reasoning benchmarks with Qwen3-4B trained in \texttt{think} mode. \textbf{Best} results are in \textbf{bold}, second-best results are \underline{underlined}.}
    \label{tab:math_4b_methods}
    \resizebox{\textwidth}{!}{
    \begin{tabular}{l|cccccc|c}
        \toprule
        \textbf{Method} & \textbf{AIME24} & \textbf{AIME25} & \textbf{AMC23} & \textbf{MATH500} & \textbf{Minerva} & \textbf{Olympiad} & \textbf{Average} \\
        \midrule
        Qwen3-4B & 49.4 & 35.6 & 78.8 & 84.6 & 40.3 & 58.2 & 57.8 \\
        GRPO & \underline{54.2} & 39.2 & 83.9 & 90.8 & 44.4 & 64.2 & 62.8 \\
        Dr.GRPO & 49.0 & \underline{40.0} & 85.6 & \textbf{92.7} & \textbf{45.1} & \underline{66.4} & 63.1 \\
        RLOO & 53.1 & 39.0 & \underline{85.8} & 81.8 & 40.4 & 57.6 & 59.6 \\
        TAMPO & 54.0 & 39.4 & \textbf{85.9} & 92.2 & 44.2 & 64.2 & \underline{63.3} \\
        \textbf{TGRL (ours)} & \textbf{56.2} & \textbf{42.7} & 81.4 & \underline{92.5} & \underline{44.9} & \textbf{69.0} & \textbf{64.5} \\
        \bottomrule
    \end{tabular}}
    \vspace{-0.5em}
\end{table}

Table~\ref{tab:math_4b_methods} reproduces the 14B ordering at the 4B scale: TGRL achieves the best six-benchmark average ($64.5$), ahead of TAMPO ($63.3$, $+1.2$) and Dr.GRPO ($63.1$, $+1.4$), with the largest per-benchmark improvements over the best non-TGRL baseline on AIME25 ($+2.7$) and Olympiad ($+2.6$). We include this table as supplementary evidence that TGRL's benefits do not hinge on the capacity of 14B/32B backbones.

\clearpage
\section{Sensitivity to the Fixed Exploration Temperature on 4B}
\label{app:temp_sweep_4b}

To isolate the role of the exploration temperature from the rest of the method, we run TGRL on the 4B model with a fixed low-temperature reference $T_0=0.3$ and sweep the exploration temperature over
\[
T_1 \in \{0.8,\,1.0,\,1.2,\,1.4,\,1.6\}.
\]
All runs are analyzed over the first 150 training steps. This study serves two purposes. First, it maps how performance varies with the absolute value of the exploration temperature. Second, it explains why we use $T_1=1.2$ as the default fixed exploration temperature in the main experiments.

\begin{table}[h]
    \centering
    \caption{
    Summary of the 4B fixed-temperature sweep over the first 150 training steps. $\Delta T$ denotes the gap relative to the fixed low-temperature reference $T_0=0.3$. AUC is computed from the validation AIME-combined curve. The results show that $T_1=1.2$ offers the strongest early learning efficiency, whereas larger temperatures remain competitive but less sample-efficient.
    }
    \label{tab:temp_sweep_4b_summary}
    \resizebox{0.95\linewidth}{!}{
    \begin{tabular}{c|cccccc}
        \toprule
        \textbf{$T_1$} & \textbf{$\Delta T$} & \textbf{Peak} & \textbf{Peak step} & \textbf{Final@150} & \textbf{AUC@150} & \textbf{Step to 0.94} \\
        \midrule
        0.8 & 0.5 & 0.808 & 130 & 0.758 & 0.698 & -- \\
        1.0 & 0.7 & 0.942 & 140 & 0.900 & 0.822 & 140 \\
        1.2 & 0.9 & 0.963 & 70  & 0.933 & \textbf{0.885} & \textbf{70} \\
        1.4 & 1.1 & 0.923 & 50  & 0.900 & 0.846 & -- \\
        1.6 & 1.3 & \textbf{0.975} & 140 & \textbf{0.948} & 0.814 & 110 \\
        \bottomrule
    \end{tabular}}
\end{table}

Table~\ref{tab:temp_sweep_4b_summary} summarizes the aggregate AIME-combined trend numerically. The performance collapse appears only when the temperature gap is too small. Once the gap is sufficiently pronounced, the method is fairly robust to the precise absolute value of $T_1$. Within that broad robust regime, $T_1=1.2$ provides the best early-to-mid training efficiency: it is the fastest setting to cross the strong-performance threshold and yields the highest validation AUC up to step 150, making it a sensible default within a broader competitive regime.

\begin{figure}[h]
    \centering
    \includegraphics[width=\linewidth]{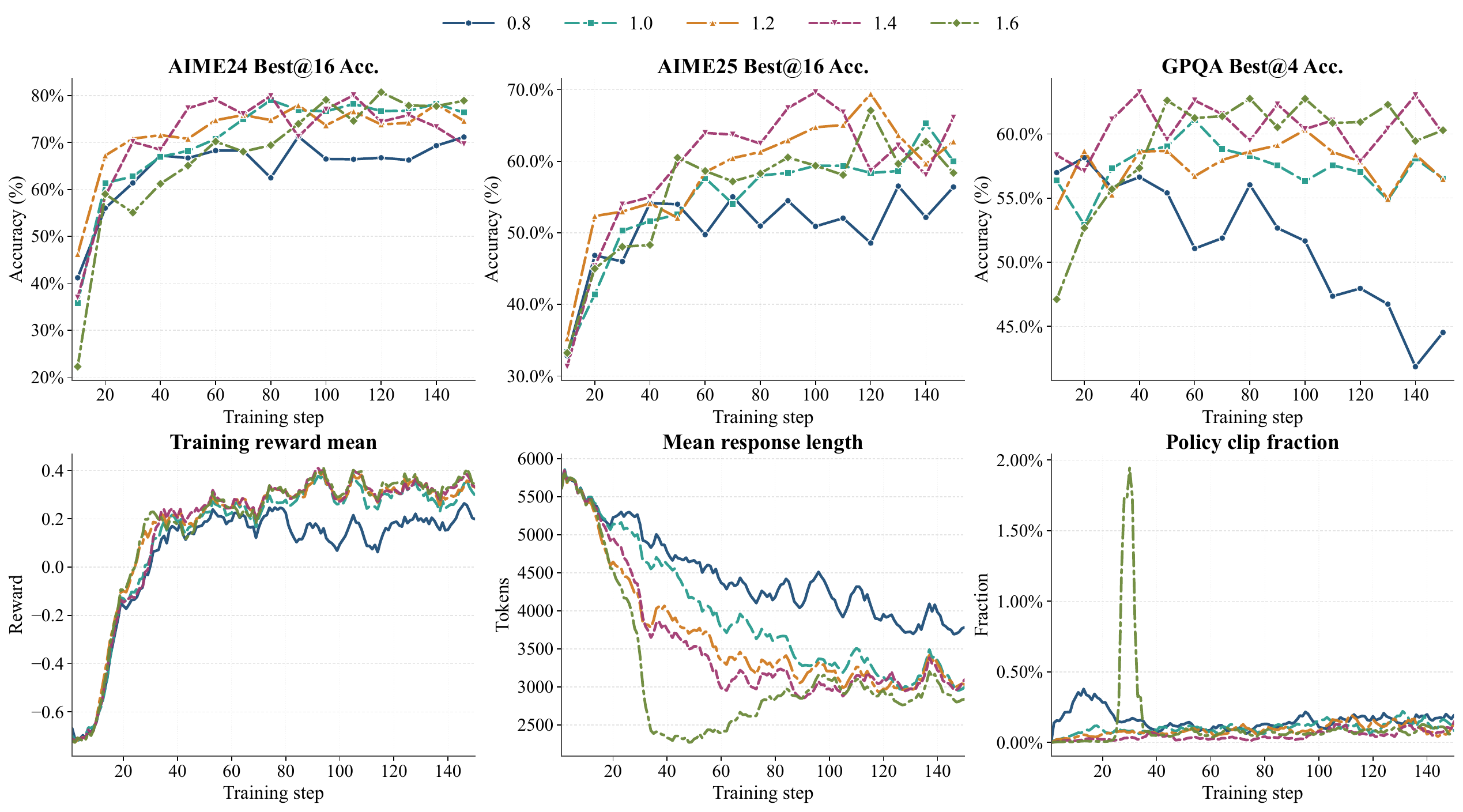}
    \caption{
    Per-benchmark validation and optimization diagnostics for the 4B fixed-temperature sweep. Top: AIME24 Best@16, AIME25 Best@16, and GPQA Best@4 diagnostic. Bottom: training reward, response length, and policy clip fraction. Moderate-to-large temperature gaps define a broad robust operating region, within which $T_1=1.2$ offers a particularly attractive efficiency--stability trade-off.
    }
    \label{fig:temp_sweep_4b_diagnostics}
\end{figure}

Figure~\ref{fig:temp_sweep_4b_diagnostics} provides the per-benchmark and optimization-level view behind that aggregate summary. On the benchmark side, the smallest gap $T_1=0.8$ is consistently weak across AIME24, AIME25, and GPQA, indicating that insufficient separation from the low-temperature group yields a poor grouped exploration signal. Once the gap is increased, performance enters a clearly more robust but still non-monotonic regime, and all settings from $T_1=1.0$ to $1.6$ remain competitive for substantial portions of training. Within this broad robust region, $T_1=1.2$ offers the cleanest overall balance: it reaches a strong regime earliest on AIME24, stays near the top throughout AIME25, and remains fully competitive on GPQA. At the same time, the neighboring higher-temperature settings also remain competitive---$T_1=1.4$ is particularly strong on AIME25 and GPQA, while $T_1=1.6$ attains some of the highest late AIME24 and GPQA values. The per-benchmark decomposition therefore sharpens the practical conclusion of the sweep: TGRL is robust to a fairly wide range of exploration temperatures, and $T_1=1.2$ is selected as a consistently strong default across tasks and checkpoints.

The optimization panels explain why $T_1=1.2$ is preferred as the default. As $T_1$ increases, late-stage training reward rises and response length generally falls, but validation quality does not improve monotonically with either proxy. At the same time, the higher-temperature settings remain strong: $T_1=1.4$ and $T_1=1.6$ both achieve strong late rewards and short responses, consistent with the benchmark-side evidence that TGRL remains effective over a broad temperature range. The main distinction is that $T_1=1.6$ shows the clearest transient spike in policy clip fraction early in training, whereas $T_1=1.2$ already reaches near-best reward, operates in the short-response regime, and maintains a uniformly low clip fraction with smoother update dynamics. Taken together with Table~\ref{tab:temp_sweep_4b_summary}, these diagnostics support selecting the default exploration temperature by joint efficiency, cross-benchmark robustness, and optimization stability; under this criterion, $T_1=1.2$ is a particularly balanced and competitive operating point within a much broader robust region.

\section{Multi-Seed Evaluation on Competition Benchmarks}
\label{app:multi_seed}

Competition-level benchmarks are relatively small in scale: AIME24 contains 30 problems, AIME25 contains 30 problems, and AMC23 contains 40 problems in the commonly used open-source evaluation setup. As a result, single-run Avg@16 estimates can exhibit non-trivial sampling variance. To quantify this variance and verify the robustness of the gains reported in Table~\ref{tab:math1}, we repeat the Avg@16 evaluation with five random seeds for TGRL and GRPO at both the 14B and 32B scales on all three benchmarks. Table~\ref{tab:multi_seed} reports the mean $\pm$ standard deviation.

\begin{table}[h]
    \centering
    \caption{Multi-seed Avg@16 evaluation on competition-level math benchmarks. We report mean accuracy (\%) $\pm$ standard deviation across 5 random seeds.}
    \label{tab:multi_seed}
    \begin{tabular}{l|ccc}
        \toprule
        \textbf{Method} & \textbf{AIME24} & \textbf{AIME25} & \textbf{AMC23} \\
        \midrule
        \multicolumn{4}{c}{\textbf{Qwen3-14B}} \\
        \midrule
        GRPO & $57.3 \pm 3.8$ & $43.2 \pm 1.8$ & $92.4 \pm 1.2$ \\
        \textbf{TGRL (ours)} & $64.2 \pm 4.1$ & $48.7 \pm 3.5$ & $91.9 \pm 0.4$ \\
        \midrule
        \multicolumn{4}{c}{\textbf{Qwen3-32B}} \\
        \midrule
        GRPO & $59.5 \pm 2.5$ & $45.3 \pm 2.6$ & $87.9 \pm 1.4$ \\
        \textbf{TGRL (ours)} & $65.3 \pm 3.5$ & $48.3 \pm 3.3$ & $91.8 \pm 1.2$ \\
        \bottomrule
    \end{tabular}
\end{table}

Two observations emerge from the multi-seed results. First, the cross-seed standard deviations are uniformly small (typically 1--4 percentage points), showing that the Avg@16 metric yields stable evaluations and that the rankings in Table~\ref{tab:math1} are stable across seeds. Second, TGRL's mean accuracy is consistently higher than GRPO's across the majority of benchmark--scale combinations, with the clearest separation on the challenging AIME benchmarks: TGRL leads by 3.0--6.9 points across scales, and the mean gaps consistently exceed the individual standard deviations. These results provide statistical evidence that the main-table improvements are reproducible and robust to evaluation variance.

\clearpage
\section{Robustness under Larger Rollout Budget}
\label{app:n8_robustness}

The main experiments use a rollout budget of $|\mathcal{G}_{g,T_0+T_1}|{=}4$ per prompt. To assess TGRL's robustness to the rollout budget setting, we re-run TGRL and GRPO on Qwen3-14B mathematics with a doubled rollout budget of $n{=}8$. TGRL allocates two responses to the low-temperature reference group ($|\mathcal{G}_{g,T_0}|{=}2$) and six to the high-temperature exploration group ($|\mathcal{G}_{g,T_1}|{=}6$), preserving the 1:3 reference-to-exploration ratio used in the main experiments. GRPO uses all eight responses at a single temperature. Table~\ref{tab:n8_robustness} reports Avg@16 accuracy on the six mathematical reasoning benchmarks.

\begin{table}[h]
    \centering
    \caption{Avg@16 results on mathematical reasoning benchmarks with Qwen3-14B under a rollout budget of $n{=}8$ per prompt. TGRL allocates $(|\mathcal{G}_{g,T_0}|, |\mathcal{G}_{g,T_1}|){=}(2,6)$, preserving the 1:3 reference-to-exploration ratio. \textbf{Best} results are in \textbf{bold}.}
    \label{tab:n8_robustness}
    \resizebox{0.95\textwidth}{!}{
    \begin{tabular}{l|cccccc|c}
        \toprule
        \textbf{Method} & \textbf{AIME24} & \textbf{AIME25} & \textbf{AMC23} & \textbf{MATH500} & \textbf{Minerva} & \textbf{Olympiad} & \textbf{Average} \\
        \midrule
        GRPO ($n{=}8$)        & 60.8 & 46.7 & \textbf{92.5} & \textbf{94.4} & 48.7 & 64.1 & 67.9 \\
        \textbf{TGRL ($n{=}8$, ours)} & \textbf{66.0} & \textbf{49.4} & 92.2 & 93.7 & \textbf{48.9} & \textbf{64.3} & \textbf{69.1} \\
        \bottomrule
    \end{tabular}}
\end{table}

Three observations support the robustness of TGRL's design under the larger budget. First, TGRL achieves a higher six-benchmark average ($69.1\%$ vs.\ $67.9\%$ for GRPO, a $+1.2$ percentage-point gain), consistent with the $+1.4$-point advantage observed at $n{=}4$ on the same model scale (Table~\ref{tab:math1}). Second, the gains are again concentrated on competition-style benchmarks that require diverse reasoning strategies: TGRL improves over GRPO by $5.2\%$ on AIME24 and $2.7$ points on AIME25, where the low-temperature reference baseline provides the clearest signal of whether high-temperature exploration yields a verifiable reward gain. Third, on near-saturated benchmarks (AMC23, MATH500, Minerva, Olympiad), the two methods remain within $0.6\%$ of each other in either direction, which is consistent with TGRL's design: the mixed-temperature grouping applies a prompt-specific exploration push only when the reward gap $\Delta R_g$ is positive, and avoids penalizing prompts where low-temperature decoding is already near-optimal.

Collectively, the $n{=}4$ and $n{=}8$ results show that TGRL's advantage over GRPO is stable across rollout budgets and does not depend on a particular allocation size. The consistent improvement on challenging competition benchmarks across both settings confirms that mixed-temperature grouping reliably identifies and amplifies the value of high-temperature exploration.

\clearpage
\section{Sensitivity to the Low-/High-Temperature Split}
\label{app:split_sensitivity}

We further examine whether the default low-/high-temperature allocation is crucial to TGRL's performance under the same total rollout budget. 
All variants keep the per-prompt rollout budget fixed at $|\mathcal{G}_{g,T_0+T_1}|=4$ and differ only in the split between the low-temperature reference group and the high-temperature exploration group. 
Table~\ref{tab:split_sensitivity} compares the default setting $(|\mathcal{G}_{g,T_0}|,|\mathcal{G}_{g,T_1}|)=(1,3)$ against two alternative allocations, $(2,2)$ and $(3,1)$, on the same six mathematical reasoning benchmarks.

\begin{table}[h]
    \centering
    \caption{Sensitivity of TGRL to the low-/high-temperature rollout split under a fixed total rollout budget of $4$ per prompt. 
    Results are Avg@16 on mathematical reasoning benchmarks. \textbf{Best} results are in \textbf{bold}, second-best results are \underline{underlined}.}
    \label{tab:split_sensitivity}
    \resizebox{0.95\textwidth}{!}{
    \begin{tabular}{l|cccccc|c}
        \toprule
        \textbf{Split} & \textbf{AIME24} & \textbf{AIME25} & \textbf{AMC23} & \textbf{MATH500} & \textbf{Minerva} & \textbf{Olympiad} & \textbf{Average} \\
        \midrule
        $|\mathcal{G}_{g,T_0}|{=}1,\ |\mathcal{G}_{g,T_1}|{=}3$ & \textbf{63.4} & \underline{49.4} & \underline{92.7} & \textbf{95.3} & \textbf{48.9} & \textbf{66.4} & \textbf{69.4} \\
         $|\mathcal{G}_{g,T_0}|{=}2,\ |\mathcal{G}_{g,T_1}|{=}2$ & 60.4 & 48.8 & 87.8 & \underline{93.0} & \underline{48.4} & \underline{64.3} & 67.1 \\
        $|\mathcal{G}_{g,T_0}|{=}3,\ |\mathcal{G}_{g,T_1}|{=}1$ & \underline{61.0} & \textbf{49.8} & \textbf{93.9} & 92.9 & 47.3 & 63.8 & \underline{68.1} \\
        \bottomrule
    \end{tabular}}
\end{table}

Table~\ref{tab:split_sensitivity} shows that TGRL is not overly brittle to the exact split: both alternative allocations remain competitive under the same total rollout budget. 
However, the default $|\mathcal{G}_{g,T_0}|{=}1,\ |\mathcal{G}_{g,T_1}|{=}3$ split gives the best overall average, outperforming $|\mathcal{G}_{g,T_0}|{=}2,\ |\mathcal{G}_{g,T_1}|{=}2$ by $2.3$ points and $|\mathcal{G}_{g,T_0}|{=}3,\ |\mathcal{G}_{g,T_1}|{=}1$ by $1.3$ points.
This pattern supports the design choice used in the main experiments. 
A single low-temperature rollout is sufficient to provide a conservative within-prompt reference for estimating the exploration-gain signal, while allocating three rollouts to the high-temperature branch supplies more exploratory trajectories for both reward-contrast estimation and actor updates. 

The two alternative splits expose the trade-off behind this choice. 
Increasing the baseline allocation to $|\mathcal{G}_{g,T_0}|{=}2,\ |\mathcal{G}_{g,T_1}|{=}2$ gives a more symmetric group but reduces the number of high-temperature trajectories that receive policy-gradient updates, leading to lower average accuracy and a particularly large drop on AMC23. 
Pushing the allocation further to $|\mathcal{G}_{g,T_0}|{=}3,\ |\mathcal{G}_{g,T_1}|{=}1$ strengthens the low-temperature reference and performs well on AIME25 and AMC23, but leaves only one exploratory trajectory for optimization, which weakens the broader cross-benchmark gains and reduces performance on MATH500, Minerva, and Olympiad.
Thus, under a fixed rollout budget, TGRL benefits more from preserving sufficient high-temperature update samples than from using additional low-temperature reference samples. 
The default $|\mathcal{G}_{g,T_0}|{=}1,\ |\mathcal{G}_{g,T_1}|{=}3$ setting therefore provides a favorable balance between a lightweight exploitation reference and a sufficiently large exploration branch.

\clearpage
\section{Synthetic Validation of Mixed-Temperature Grouping}
\label{app:grouping_validation_details}

\begin{figure}[h]
    \centering
    \includegraphics[width=\textwidth]{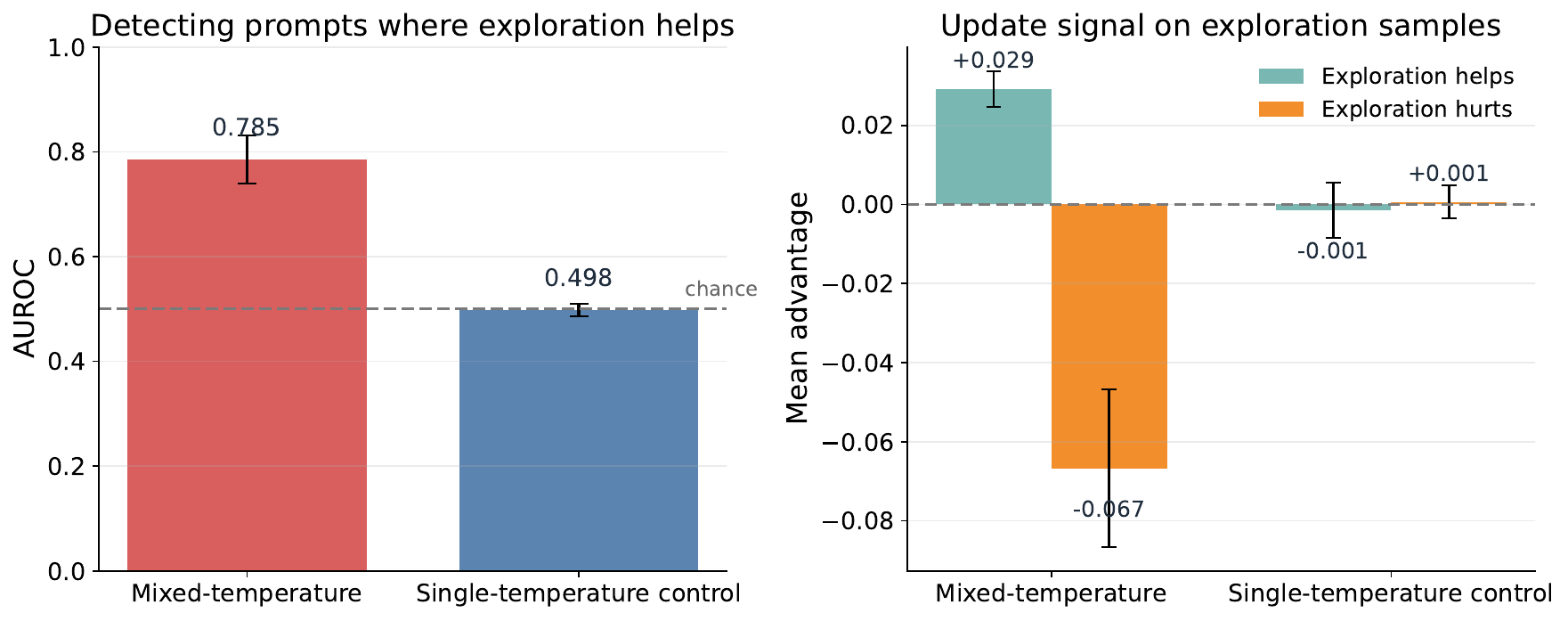}
    \caption{Synthetic validation of mixed-temperature grouping on a tractable two-step autoregressive task. \textbf{Left:} AUROC for detecting prompts on which exploration is beneficial; dashed line: chance. \textbf{Right:} mean advantage assigned to exploration samples. Error bars denote $95\%$ confidence intervals across four random seeds.}
    \label{fig:grouping_gain_carrier}
\end{figure}

This appendix gives the full construction of the synthetic diagnostic referenced in Section~\ref{sec:theory_exploration_gain}. Its purpose is not to provide an additional benchmark, but to test whether mixed-temperature grouping can recover prompt-level exploration gain in a setting where that quantity is exactly measurable.

\paragraph{Task construction.}
The input space consists of all $4$-bit prompts
\[
x=(x_1,x_2,x_3,x_4)\in\{0,1\}^4.
\]
Each prompt requires a two-step autoregressive response: first a binary \emph{branch token} $b\in\{0,1\}$, then a 4-way \emph{answer token} $a\in\{0,1,2,3\}$. The ground-truth branch is
\begin{equation}
\label{eq:toy_branch_target_v2}
b^{*}(x)=x_1\oplus x_2\oplus x_3,
\end{equation}
where $\oplus$ denotes XOR, and the ground-truth answer is
\begin{equation}
\label{eq:toy_answer_target_v2}
a^{*}(x)=2x_3+x_4.
\end{equation}
Reward is binary:
\begin{equation}
\label{eq:toy_reward_v2}
R(x,b,a)=\mathbf{1}\!\left[b=b^{*}(x)\ \wedge\ a=a^{*}(x)\right].
\end{equation}
Thus, a rollout succeeds only when both the early branch decision and the final answer are correct.

\paragraph{Policy architecture.}
The policy is a two-stage MLP with autoregressive factorization
\begin{equation}
\label{eq:toy_factorization_v2}
p_{\theta}(b,a\mid x)=p_{\theta}(b\mid x)\,p_{\theta}(a\mid x,b).
\end{equation}
A two-layer tanh encoder maps the prompt to a hidden state
\begin{equation}
\label{eq:toy_encoder_v2}
h(x)=\tanh\!\bigl(W_2\,\tanh(W_1x+b_1)+b_2\bigr),
\qquad h(x)\in\mathbb{R}^{16}.
\end{equation}
A binary head predicts the branch token:
\begin{equation}
\label{eq:toy_branch_head_v2}
p_{\theta}(b\mid x)=\mathrm{softmax}(U_b h(x)+c_b).
\end{equation}
Given a branch token $b$, we concatenate a learned branch embedding $e(b)\in\mathbb{R}^{4}$ with $h(x)$ and predict the answer through a second MLP head:
\begin{equation}
\label{eq:toy_answer_head_v2}
p_{\theta}(a\mid x,b)
=
\mathrm{softmax}\!\Bigl(
U_a\,\tanh\!\bigl(V[h(x);e(b)]+d\bigr)+c_a
\Bigr).
\end{equation}
This preserves the minimal structure needed for the mechanism test: an early decision token, a later conditioned token, and an autoregressive factorization.

\paragraph{Branch-biased warm start and easy/hard prompts.}
To create a genuine exploration problem without introducing answer-head mismatch, we bias only the branch decision at initialization. Specifically, the warm-start objective is
\begin{equation}
\label{eq:toy_warm_loss_v2}
\mathcal{L}_{\mathrm{warm}}
=
\mathrm{CE}\!\bigl(p_{\theta}(b\mid x),\,0\bigr)
+
\frac{1}{2}\,\mathrm{CE}\!\bigl(p_{\theta}(a\mid x,b{=}0),\,a^{*}(x)\bigr)
+
\frac{1}{2}\,\mathrm{CE}\!\bigl(p_{\theta}(a\mid x,b{=}1),\,a^{*}(x)\bigr).
\end{equation}
The first term biases the policy toward branch $0$, while the last two terms train the answer head under \emph{both} branch conditions. Consequently, hard prompts differ from easy prompts only through the early branch decision, not through missing answer supervision under one branch.

Prompts with $b^{*}(x)=0$ are \emph{easy}, because low-temperature decoding already follows the correct branch. Prompts with $b^{*}(x)=1$ are \emph{hard}, because success requires escaping the initialization bias and selecting the other branch. In our four random seeds, the branch-biased warm start behaves exactly as intended: the mean conditional answer probability under the correct branch at $T_1$ is $0.943$ on both easy and hard prompts, whereas the mean probability of selecting the correct branch on hard prompts is only $0.044$.

\paragraph{Exact prompt-level exploration gain.}
Because both the model and output space are tiny, the exact success probability of every prompt can be computed under any sampling temperature $T$. For a fixed policy,
\begin{equation}
\label{eq:toy_exact_success_v2}
\mu_T(x)
=
p_{\theta}^{(T)}\!\bigl(b^{*}(x)\mid x\bigr)\;
p_{\theta}^{(T)}\!\bigl(a^{*}(x)\mid x,b^{*}(x)\bigr),
\end{equation}
where $p_{\theta}^{(T)}$ denotes the temperature-scaled decoding distribution. We define the exact prompt-level exploration gain by
\begin{equation}
\label{eq:toy_delta_mu_v2}
\Delta \mu(x)\coloneqq \mu_{T_1}(x)-\mu_{T_0}(x).
\end{equation}
Hence $\Delta\mu(x)>0$ means that higher-temperature sampling truly helps on prompt $x$, whereas $\Delta\mu(x)<0$ means that it hurts. Under the branch-biased warm start, positive exploration gain coincides with the hard prompts in all four seeds used in the diagnostic.

\paragraph{Diagnostic protocol.}
All statistics in Figure~\ref{fig:grouping_gain_carrier} are computed at the common warm-start checkpoint, before any RL updates. This ensures that the diagnostic reflects recovery quality and signed advantage structure rather than differences in optimization trajectory. We use
\[
T_0=0.4,\qquad T_1=1.4,
\]
and a total sample budget of $|\mathcal{G}_{x,T_0+T_1}|=4$ per prompt. The mixed-temperature diagnostic uses
\[
|\mathcal{G}_{x,T_0}|=1 \quad \text{low-temperature sample}, \qquad
|\mathcal{G}_{x,T_1}|=3 \quad \text{high-temperature samples}.
\]

\paragraph{Mixed-temperature score.}
For each prompt $x$, we sample low- and high-temperature rollouts indexed by $\mathcal{G}_{x,T_0}$ and $\mathcal{G}_{x,T_1}$, yielding rewards
\[
\{R^{(0)}_{x,i}\}_{i\in\mathcal{G}_{x,T_0}},
\qquad
\{R^{(1)}_{x,i}\}_{i\in\mathcal{G}_{x,T_1}}.
\]
The mixed-temperature score is
\begin{equation}
\label{eq:toy_mixed_score_v2}
\hat{\Delta}_{\mathrm{mix}}(x)
\coloneqq
\frac{1}{|\mathcal{G}_{x,T_1}|}\sum_{i\in\mathcal{G}_{x,T_1}}R^{(1)}_{x,i}
-
\frac{1}{|\mathcal{G}_{x,T_0}|}\sum_{j\in\mathcal{G}_{x,T_0}}R^{(0)}_{x,j}.
\end{equation}
This is the empirical analogue of cross-temperature exploration gain.

\paragraph{Single-temperature comparison.}
To remove the temperature gap while preserving the same total sample budget, we draw all $|\mathcal{G}_{x,T_0+T_1}|$ rollouts from $T_1$ and randomly bipartition them into two equal-size pseudo-groups,
\[
\mathcal{P}_x^{(a)},\mathcal{P}_x^{(b)}, \qquad |\mathcal{P}_x^{(a)}|=|\mathcal{P}_x^{(b)}|=|\mathcal{G}_{x,T_0+T_1}|/2.
\]
The resulting comparison score is
\begin{equation}
\label{eq:toy_single_score_v2}
\hat{\Delta}_{\mathrm{single}}(x)
\coloneqq
\frac{1}{|\mathcal{P}_x^{(a)}|}\sum_{i\in\mathcal{P}_x^{(a)}}R^{(1)}_{x,i}
-
\frac{1}{|\mathcal{P}_x^{(b)}|}\sum_{i\in\mathcal{P}_x^{(b)}}R^{(1)}_{x,i}.
\end{equation}
This comparison keeps the grouped computation but removes the temperature gap, so it should be unable to recover prompt-level exploration gain.

\paragraph{Prompt-level detection metric.}
The left panel of Figure~\ref{fig:grouping_gain_carrier} evaluates whether the score assigned to each prompt correctly predicts the sign of $\Delta\mu(x)$. We therefore treat
\[
\mathbf{1}[\Delta\mu(x)>0]
\]
as the ground-truth binary label and report AUROC for $\hat{\Delta}_{\mathrm{mix}}(x)$ and $\hat{\Delta}_{\mathrm{single}}(x)$. A higher AUROC means better detection of prompts on which exploration is genuinely beneficial.

\paragraph{Signed update-signal metric.}
For the right panel, we compute standard group-normalized advantages within each sampled group:
\begin{equation}
\label{eq:toy_group_adv_v2}
A_i=\frac{R_i-\mu_x}{\sigma_x+\epsilon},
\end{equation}
where $\mu_x$ and $\sigma_x$ are the empirical mean and standard deviation of rewards in the sampled group for prompt $x$. For the mixed-temperature diagnostic, we average these advantages over the $|\mathcal{G}_{x,T_1}|$ exploration samples:
\begin{equation}
\label{eq:toy_high_temp_adv_v2}
\bar{A}^{T_1}(x)
\coloneqq
\frac{1}{|\mathcal{G}_{x,T_1}|}\sum_{i\in\mathcal{G}_{x,T_1}} A^{(1)}_{x,i}.
\end{equation}
We then report
\[
\mathbb{E}\!\left[\bar{A}^{T_1}(x)\mid \Delta\mu(x)>0\right],
\qquad
\mathbb{E}\!\left[\bar{A}^{T_1}(x)\mid \Delta\mu(x)<0\right].
\]
If mixed-temperature grouping works as intended, the first quantity should be positive and the second negative. For the single-temperature comparison, we compute the same statistic on one of the random pseudo-groups; because there is no temperature gap, the resulting mean advantage should remain near zero.

\paragraph{Estimation details.}
All statistics in Figure~\ref{fig:grouping_gain_carrier} are estimated independently for four random seeds. Within each seed, we use $300$ repeated Monte Carlo resampling trials. The main text reports seed-averaged means; the error bars denote $95\%$ confidence intervals across seeds.

\clearpage
\section{Implementation and Reproducibility Details}
\label{app:reproducibility_details}

Tables~\ref{tab:tgrl_hparams} and~\ref{tab:agent_hparams} summarize the
reproducibility-critical hyperparameters for the main TGRL experiments.
Table~\ref{tab:baseline_tuning_fairness} further records the baseline tuning
and checkpoint-selection protocol used to avoid giving TGRL additional tuning
or evaluation advantages.

\begin{table}[h]
\centering
\small
\setlength{\tabcolsep}{3.5pt}
\caption{Reproducibility-critical hyperparameters for the main TGRL experiments.}
\label{tab:tgrl_hparams}
\resizebox{\linewidth}{!}{
\begin{tabular}{lcc}
\toprule
\textbf{Hyperparameter} & \textbf{Math (Qwen3-14B/32B)} & \textbf{Code (Qwen3-4B)} \\
\midrule
Training data & DeepScaleR & DeepCoder \\
Reasoning mode & Qwen3 \texttt{think} & Qwen3 \texttt{think} \\
Rollouts per prompt $|\mathcal{G}_{g,T_0+T_1}|$ & 4 & 4 \\
Low-/high-temperature split $(|\mathcal{G}_{g,T_0}|,|\mathcal{G}_{g,T_1}|)$ & $(1,3)$ & $(1,3)$ \\
Low-temperature-group actor update & disabled & disabled \\
Warmup steps & 20 & 20 \\
Reference temperature $T_0$ & 0.3 & 0.3 \\
Exploration temperature $T_1$ & 1.2 & 1.2 \\
Training rollout decoding & top-$p{=}1.0$, top-$k{=}-1$ & top-$p{=}1.0$, top-$k{=}-1$ \\
Token JS reweighting & high-temperature group only & high-temperature group only \\
KL loss coefficient & 0.0 & 0.0 \\
Entropy coefficient & 0.0 & 0.0 \\
Optimizer & AdamW & AdamW \\
Learning rate & $1\times 10^{-6}$ & $1\times 10^{-6}$ \\
Learning-rate warmup steps & 10 & 10 \\
Weight decay & 0.1 & 0.1 \\
Train batch size & 128 & 64 \\
PPO mini-batch size & 32 & 32 \\
Max prompt length & 2048 & 2048 \\
Max response length & 8192 & 8192 \\
Nodes $\times$ GPUs & $4 \times 8$ & $1 \times 8$ \\
Tensor parallel size & 8 & 8 \\
Rollout GPU memory utilization & 0.85 & 0.75 \\
Training epochs & 2 & 20 \\
Reported test decoding & $T{=}0.6$, top-$p{=}0.95$, top-$k$ off & $T{=}0.6$, top-$p{=}0.95$, top-$k$ off \\
\bottomrule
\end{tabular}}
\end{table}
\clearpage
\begin{table}[h]
\centering
\small
\setlength{\tabcolsep}{3.5pt}
\caption{Reproducibility-critical hyperparameters for the TGRL agent experiments. Both runs use Qwen2.5-7B-Instruct as the policy model and share the same TGRL objective defined in the main text and Table~\ref{tab:tgrl_hparams}; this table focuses on the agent-specific environment and systems settings.}
\label{tab:agent_hparams}
\resizebox{\linewidth}{!}{
\begin{tabular}{lcc}
\toprule
\textbf{Hyperparameter} & \textbf{ALFWorld} & \textbf{WebShop} \\
\midrule
Policy model & Qwen2.5-7B-Instruct & Qwen2.5-7B-Instruct \\
Engine & vLLM & vLLM \\
Training data size & 16 & 16 \\
Validation data size & 128 & 64 \\
Rollouts per prompt $|\mathcal{G}_{g,T_0+T_1}|$ & 8 & 8 \\
Warmup steps & 20 & 25 \\
Reference temperature $T_0$ & 0.3 & 0.3 \\
Exploration temperature $T_1$ & 1.0 & 1.2 \\
Learning rate & $1\times 10^{-6}$ & $1\times 10^{-6}$ \\
PPO mini-batch size & 128 & 64 \\
PPO micro-batch size / GPU & 8 & 8 \\
Max prompt length & 2048 & 4096 \\
Max response length & 512 & 512 \\
Tensor parallel size & 4 & 4 \\
Nodes $\times$ GPUs & $1 \times 8$ & $1 \times 8$ \\
Rollout GPU memory utilization & 0.6 & 0.6 \\
Chunked prefill / eager mode & enabled / enabled & enabled / enabled \\
Actor param offload & False & True \\
Actor optimizer offload & True & True \\
Reference param offload & True & True \\
Invalid-action penalty coefficient & 0.1 & 0.1 \\
Environment & ALFWorld & WebShop \\
Environment seed & 0 & 0 \\
Max environment steps & 50 & 15 \\
CPUs per env worker & 0.1 & 0.2 \\
Best-checkpoint metric & val/success\_rate & val/success\_rate \\
Save frequency & every 50 steps & every 50 steps \\
Validation frequency & every 5 steps & every 5 steps \\
Training epochs & 160 & 150 \\
\bottomrule
\end{tabular}}
\end{table}

\paragraph{Baseline tuning fairness.}
All baselines in Tables~\ref{tab:math1}, \ref{code_1}, \ref{tab:alfworld},
\ref{tab:entropy_baselines_14b}, and~\ref{tab:math_4b_methods} are trained and
evaluated under a matched protocol. The goal is to isolate algorithmic
differences rather than differences in rollout budget, data, evaluator,
checkpoint selection, or test-time decoding. Table~\ref{tab:baseline_tuning_fairness}
summarizes the fairness controls used for PPO, GRPO, DAPO, Dr.GRPO, RLOO,
TAMPO, ReAct, and EMPG. Method-specific components are kept only when they are
part of the corresponding baseline definition, such as TAMPO's adaptive
temperature policy.

\begin{table}[h]
\centering
\small
\setlength{\tabcolsep}{4pt}
\caption{Baseline tuning and checkpoint-selection fairness protocol. The same
protocol is applied across the reported math, code, and agent experiments
unless a method-specific mechanism is explicitly part of the baseline itself.}
\label{tab:baseline_tuning_fairness}
\begin{tabular}{p{0.26\linewidth}p{0.68\linewidth}}
\toprule
\textbf{Control item} & \textbf{Protocol} \\
\midrule
Backbone and initialization
&
For each domain and scale, all methods start from the same policy model
checkpoint: Qwen3 variants for math/code and Qwen2.5-7B-Instruct for agent
experiments. No baseline is initialized from a stronger checkpoint than TGRL. \\
\midrule
Training data and verifier
&
Within each domain, all methods use the same training data source, prompt
pipeline, verifier/reward function, and environment interface. Math uses
DeepScaleR, code uses DeepCoder, and agent experiments use the same ALFWorld and
WebShop environments described in Table~\ref{tab:agent_hparams}. \\
\midrule
Rollout budget
&
All math and code methods use the same total rollout budget
$|\mathcal{G}_{g,T_0+T_1}|=4$ responses per prompt. TGRL allocates this fixed
budget into $(|\mathcal{G}_{g,T_0}|,|\mathcal{G}_{g,T_1}|)=(1,3)$, while
single-temperature baselines spend all four rollouts at their rollout
temperature. Agent experiments use the matched budget reported in
Table~\ref{tab:agent_hparams}. \\
\midrule
Training rollout decoding
&
Single-temperature RLVR baselines use the same rollout decoding controls as the
high-temperature branch unless the method defines an adaptive policy over
temperature. In particular, top-$p$, top-$k$, maximum response length, and the
rollout budget are matched. TAMPO is allowed to adapt temperature because this
is its method-specific mechanism, but it receives no additional rollouts. \\
\midrule
Evaluation decoding
&
All math and code checkpoints are evaluated with the same decoding configuration:
$T=0.6$, top-$p=0.95$, top-$k$ off, and maximum response length $8192$. Agent
evaluation uses the same environment limits and success metrics across methods. \\
\midrule
Shared optimization settings
&
Optimizer family, learning-rate scale, batch size, prompt/response length
limits, PPO mini-batch size, number of training epochs, and warmup convention
are matched whenever they are not part of a baseline-specific algorithmic
definition. Baseline-specific terms, such as entropy bonuses or decoupled
clipping rules, are added on top of the shared training recipe rather than
changing the data or evaluation protocol. \\
\midrule
Method-specific hyperparameters
&
Method-specific scalar hyperparameters are selected on validation signals only
and then fixed across the corresponding test benchmarks in a domain. TGRL's
$T_0$, $T_1$, and low-/high-temperature split are fixed globally by the
diagnostics in Appendices~\ref{app:temp_sweep_4b} and~\ref{app:split_sensitivity}
and are not tuned separately for each test benchmark. \\
\midrule
Checkpoint selection
&
Checkpoints are selected using the same validation metric within a domain, not
by the final test benchmarks. For math, validation follows the AIME-combined
tracking protocol used in Appendix~\ref{app:wallclock}. For agent tasks, the
selection metric is the validation success rate reported in
Table~\ref{tab:agent_hparams}. \\
\midrule
Statistical reporting
&
When multi-seed evaluation is reported, the same evaluation protocol and random
seed list are used for the compared methods. For small competition-style math
benchmarks, Appendix~\ref{app:multi_seed} reports mean $\pm$ standard deviation
across five evaluation seeds. \\
\midrule
Configuration release
&
The anonymized code release includes the final training and evaluation
configurations for TGRL and the reported baselines, so that differences in
algorithmic components can be inspected separately from shared infrastructure
settings. \\
\bottomrule
\end{tabular}
\end{table}

This protocol is intended to make the comparisons conservative: TGRL is not
allowed benchmark-specific hyperparameter retuning, stronger initialization,
extra rollouts, different test decoding, or test-set-based checkpoint selection.
The only differences between TGRL and the baselines are the algorithmic
mechanisms under study: mixed-temperature grouping, exploration-gain estimation,
and JS-based token credit allocation.

\clearpage

\section{Limitations and Broader Impact}
TGRL is designed for reinforcement learning with verifiable rewards, where a
scalar verifier can evaluate the outcome of a sampled trajectory. This setting is
well matched to mathematical reasoning, code generation, and benchmarked
agentic tasks, but it does not cover all forms of language-model alignment. In
particular, we do not claim that the same mechanism directly solves open-ended
dialogue, subjective preference optimization, safety-critical decision making,
or tasks where reward feedback is noisy, delayed, or difficult to verify.

More sample-efficient RLVR can reduce the computational cost of improving
reasoning models under fixed rollout budgets. This may make research on
mathematical reasoning, program synthesis, formal problem solving, and
benchmark-based agent training more accessible. Stronger reasoning and coding
models can also support beneficial applications such as educational tutoring,
software testing, debugging assistance, scientific computation, and automated
analysis in constrained environments.

\clearpage
\section*{NeurIPS Paper Checklist}

\begin{enumerate}

\item {\bf Claims}
    \item[] Question: Do the main claims made in the abstract and introduction accurately reflect the paper's contributions and scope?
    \item[] Answer: \answerYes{} 
    \item[] Justification: The abstract and introduction accurately summarize the paper's core contributions, scope, and empirical/theoretical findings. The main claims are supported by the method, experiments, and discussion in the main paper and appendix.
    \item[] Guidelines:
    \begin{itemize}
        \item The answer \answerNA{} means that the abstract and introduction do not include the claims made in the paper.
        \item The abstract and/or introduction should clearly state the claims made, including the contributions made in the paper and important assumptions and limitations. A \answerNo{} or \answerNA{} answer to this question will not be perceived well by the reviewers. 
        \item The claims made should match theoretical and experimental results, and reflect how much the results can be expected to generalize to other settings. 
        \item It is fine to include aspirational goals as motivation as long as it is clear that these goals are not attained by the paper. 
    \end{itemize}

\item {\bf Limitations}
    \item[] Question: Does the paper discuss the limitations of the work performed by the authors?
    \item[] Answer: \answerYes{} 
    \item[] Justification: The paper explicitly discusses the main limitations in the conclusion's future work and Appendix.
    \item[] Guidelines:
    \begin{itemize}
        \item The answer \answerNA{} means that the paper has no limitation while the answer \answerNo{} means that the paper has limitations, but those are not discussed in the paper. 
        \item The authors are encouraged to create a separate ``Limitations'' section in their paper.
        \item The paper should point out any strong assumptions and how robust the results are to violations of these assumptions (e.g., independence assumptions, noiseless settings, model well-specification, asymptotic approximations only holding locally). The authors should reflect on how these assumptions might be violated in practice and what the implications would be.
        \item The authors should reflect on the scope of the claims made, e.g., if the approach was only tested on a few datasets or with a few runs. In general, empirical results often depend on implicit assumptions, which should be articulated.
        \item The authors should reflect on the factors that influence the performance of the approach. For example, a facial recognition algorithm may perform poorly when image resolution is low or images are taken in low lighting. Or a speech-to-text system might not be used reliably to provide closed captions for online lectures because it fails to handle technical jargon.
        \item The authors should discuss the computational efficiency of the proposed algorithms and how they scale with dataset size.
        \item If applicable, the authors should discuss possible limitations of their approach to address problems of privacy and fairness.
        \item While the authors might fear that complete honesty about limitations might be used by reviewers as grounds for rejection, a worse outcome might be that reviewers discover limitations that aren't acknowledged in the paper. The authors should use their best judgment and recognize that individual actions in favor of transparency play an important role in developing norms that preserve the integrity of the community. Reviewers will be specifically instructed to not penalize honesty concerning limitations.
    \end{itemize}

\item {\bf Theory assumptions and proofs}
    \item[] Question: For each theoretical result, does the paper provide the full set of assumptions and a complete (and correct) proof?
    \item[] Answer: \answerYes{} 
    \item[] Justification: For each theoretical result, the paper states the required assumptions and provides complete proofs or proof details in the appendix/supplementary material. The main text also gives the statements and sufficient intuition for the key results.
    \item[] Guidelines:
    \begin{itemize}
        \item The answer \answerNA{} means that the paper does not include theoretical results. 
        \item All the theorems, formulas, and proofs in the paper should be numbered and cross-referenced.
        \item All assumptions should be clearly stated or referenced in the statement of any theorems.
        \item The proofs can either appear in the main paper or the supplemental material, but if they appear in the supplemental material, the authors are encouraged to provide a short proof sketch to provide intuition. 
        \item Inversely, any informal proof provided in the core of the paper should be complemented by formal proofs provided in appendix or supplemental material.
        \item Theorems and Lemmas that the proof relies upon should be properly referenced. 
    \end{itemize}

    \item {\bf Experimental result reproducibility}
    \item[] Question: Does the paper fully disclose all the information needed to reproduce the main experimental results of the paper to the extent that it affects the main claims and/or conclusions of the paper (regardless of whether the code and data are provided or not)?
    \item[] Answer: \answerYes{} 
    \item[] Justification: The paper discloses the algorithmic design, evaluation protocol, benchmarks, implementation choices, and other details needed to reproduce the main results. Additional experimental details are provided in the appendix/supplementary material where appropriate.
    \item[] Guidelines:
    \begin{itemize}
        \item The answer \answerNA{} means that the paper does not include experiments.
        \item If the paper includes experiments, a \answerNo{} answer to this question will not be perceived well by the reviewers: Making the paper reproducible is important, regardless of whether the code and data are provided or not.
        \item If the contribution is a dataset and\slash or model, the authors should describe the steps taken to make their results reproducible or verifiable. 
        \item Depending on the contribution, reproducibility can be accomplished in various ways. For example, if the contribution is a novel architecture, describing the architecture fully might suffice, or if the contribution is a specific model and empirical evaluation, it may be necessary to either make it possible for others to replicate the model with the same dataset, or provide access to the model. In general. releasing code and data is often one good way to accomplish this, but reproducibility can also be provided via detailed instructions for how to replicate the results, access to a hosted model (e.g., in the case of a large language model), releasing of a model checkpoint, or other means that are appropriate to the research performed.
        \item While NeurIPS does not require releasing code, the conference does require all submissions to provide some reasonable avenue for reproducibility, which may depend on the nature of the contribution. For example
        \begin{enumerate}
            \item If the contribution is primarily a new algorithm, the paper should make it clear how to reproduce that algorithm.
            \item If the contribution is primarily a new model architecture, the paper should describe the architecture clearly and fully.
            \item If the contribution is a new model (e.g., a large language model), then there should either be a way to access this model for reproducing the results or a way to reproduce the model (e.g., with an open-source dataset or instructions for how to construct the dataset).
            \item We recognize that reproducibility may be tricky in some cases, in which case authors are welcome to describe the particular way they provide for reproducibility. In the case of closed-source models, it may be that access to the model is limited in some way (e.g., to registered users), but it should be possible for other researchers to have some path to reproducing or verifying the results.
        \end{enumerate}
    \end{itemize}

\item {\bf Open access to data and code}
    \item[] Question: Does the paper provide open access to the data and code, with sufficient instructions to faithfully reproduce the main experimental results, as described in supplemental material?
    \item[] Answer: \answerYes{} 
    \item[] Justification: The submission provides open access to the code/data or an anonymized supplementary package with the scripts, configurations, and instructions needed to reproduce the main experimental results. The reproduction procedure is described clearly in the supplementary material.
    \item[] Guidelines:
    \begin{itemize}
        \item The answer \answerNA{} means that paper does not include experiments requiring code.
        \item Please see the NeurIPS code and data submission guidelines (\url{https://neurips.cc/public/guides/CodeSubmissionPolicy}) for more details.
        \item While we encourage the release of code and data, we understand that this might not be possible, so \answerNo{} is an acceptable answer. Papers cannot be rejected simply for not including code, unless this is central to the contribution (e.g., for a new open-source benchmark).
        \item The instructions should contain the exact command and environment needed to run to reproduce the results. See the NeurIPS code and data submission guidelines (\url{https://neurips.cc/public/guides/CodeSubmissionPolicy}) for more details.
        \item The authors should provide instructions on data access and preparation, including how to access the raw data, preprocessed data, intermediate data, and generated data, etc.
        \item The authors should provide scripts to reproduce all experimental results for the new proposed method and baselines. If only a subset of experiments are reproducible, they should state which ones are omitted from the script and why.
        \item At submission time, to preserve anonymity, the authors should release anonymized versions (if applicable).
        \item Providing as much information as possible in supplemental material (appended to the paper) is recommended, but including URLs to data and code is permitted.
    \end{itemize}

\item {\bf Experimental setting/details}
    \item[] Question: Does the paper specify all the training and test details (e.g., data splits, hyperparameters, how they were chosen, type of optimizer) necessary to understand the results?
    \item[] Answer: \answerYes{} 
    \item[] Justification: The paper specifies the essential experimental details, including datasets/benchmarks, training and evaluation settings, hyperparameters, and optimization choices. Further implementation details are provided in the appendix or supplementary material.
    \item[] Guidelines:
    \begin{itemize}
        \item The answer \answerNA{} means that the paper does not include experiments.
        \item The experimental setting should be presented in the core of the paper to a level of detail that is necessary to appreciate the results and make sense of them.
        \item The full details can be provided either with the code, in appendix, or as supplemental material.
    \end{itemize}

\item {\bf Experiment statistical significance}
    \item[] Question: Does the paper report error bars suitably and correctly defined or other appropriate information about the statistical significance of the experiments?
    \item[] Answer: \answerYes{} 
    \item[] Justification: The main empirical results are accompanied by appropriate uncertainty quantification, such as multi-seed statistics, confidence intervals, error bars, or equivalent significance information where relevant. The paper also explains what source of variability these statistics reflect.
    \item[] Guidelines:
    \begin{itemize}
        \item The answer \answerNA{} means that the paper does not include experiments.
        \item The authors should answer \answerYes{} if the results are accompanied by error bars, confidence intervals, or statistical significance tests, at least for the experiments that support the main claims of the paper.
        \item The factors of variability that the error bars are capturing should be clearly stated (for example, train/test split, initialization, random drawing of some parameter, or overall run with given experimental conditions).
        \item The method for calculating the error bars should be explained (closed form formula, call to a library function, bootstrap, etc.)
        \item The assumptions made should be given (e.g., Normally distributed errors).
        \item It should be clear whether the error bar is the standard deviation or the standard error of the mean.
        \item It is OK to report 1-sigma error bars, but one should state it. The authors should preferably report a 2-sigma error bar than state that they have a 96\% CI, if the hypothesis of Normality of errors is not verified.
        \item For asymmetric distributions, the authors should be careful not to show in tables or figures symmetric error bars that would yield results that are out of range (e.g., negative error rates).
        \item If error bars are reported in tables or plots, the authors should explain in the text how they were calculated and reference the corresponding figures or tables in the text.
    \end{itemize}

\item {\bf Experiments compute resources}
    \item[] Question: For each experiment, does the paper provide sufficient information on the computer resources (type of compute workers, memory, time of execution) needed to reproduce the experiments?
    \item[] Answer: \answerYes{} 
    \item[] Justification: The paper reports the compute setup used for the experiments, including the hardware type and the information necessary to understand the computational requirements. It also provides sufficient detail to assess the scale and reproducibility cost of the reported runs.
    \item[] Guidelines:
    \begin{itemize}
        \item The answer \answerNA{} means that the paper does not include experiments.
        \item The paper should indicate the type of compute workers CPU or GPU, internal cluster, or cloud provider, including relevant memory and storage.
        \item The paper should provide the amount of compute required for each of the individual experimental runs as well as estimate the total compute. 
        \item The paper should disclose whether the full research project required more compute than the experiments reported in the paper (e.g., preliminary or failed experiments that didn't make it into the paper). 
    \end{itemize}
    
\item {\bf Code of ethics}
    \item[] Question: Does the research conducted in the paper conform, in every respect, with the NeurIPS Code of Ethics \url{https://neurips.cc/public/EthicsGuidelines}?
    \item[] Answer: \answerYes{} 
    \item[] Justification: The research conforms to the NeurIPS Code of Ethics to the best of our knowledge. The submission preserves anonymity and does not involve any procedure that departs from the conference's ethical guidelines.
    \item[] Guidelines:
    \begin{itemize}
        \item The answer \answerNA{} means that the authors have not reviewed the NeurIPS Code of Ethics.
        \item If the authors answer \answerNo, they should explain the special circumstances that require a deviation from the Code of Ethics.
        \item The authors should make sure to preserve anonymity (e.g., if there is a special consideration due to laws or regulations in their jurisdiction).
    \end{itemize}

\item {\bf Broader impacts}
    \item[] Question: Does the paper discuss both potential positive societal impacts and negative societal impacts of the work performed?
    \item[] Answer: \answerYes{} 
    \item[] Justification: The paper discusses societal impacts of the proposed method.
    \item[] Guidelines:
    \begin{itemize}
        \item The answer \answerNA{} means that there is no societal impact of the work performed.
        \item If the authors answer \answerNA{} or \answerNo, they should explain why their work has no societal impact or why the paper does not address societal impact.
        \item Examples of negative societal impacts include potential malicious or unintended uses (e.g., disinformation, generating fake profiles, surveillance), fairness considerations (e.g., deployment of technologies that could make decisions that unfairly impact specific groups), privacy considerations, and security considerations.
        \item The conference expects that many papers will be foundational research and not tied to particular applications, let alone deployments. However, if there is a direct path to any negative applications, the authors should point it out. For example, it is legitimate to point out that an improvement in the quality of generative models could be used to generate Deepfakes for disinformation. On the other hand, it is not needed to point out that a generic algorithm for optimizing neural networks could enable people to train models that generate Deepfakes faster.
        \item The authors should consider possible harms that could arise when the technology is being used as intended and functioning correctly, harms that could arise when the technology is being used as intended but gives incorrect results, and harms following from (intentional or unintentional) misuse of the technology.
        \item If there are negative societal impacts, the authors could also discuss possible mitigation strategies (e.g., gated release of models, providing defenses in addition to attacks, mechanisms for monitoring misuse, mechanisms to monitor how a system learns from feedback over time, improving the efficiency and accessibility of ML).
    \end{itemize}
    
\item {\bf Safeguards}
    \item[] Question: Does the paper describe safeguards that have been put in place for responsible release of data or models that have a high risk for misuse (e.g., pre-trained language models, image generators, or scraped datasets)?
    \item[] Answer: \answerNA{} 
    \item[] Justification: The paper does not release a high-risk model, scraped dataset, or other asset that would require special misuse safeguards beyond standard responsible research practice. Therefore, this question is not applicable to the present submission.
    \item[] Guidelines:
    \begin{itemize}
        \item The answer \answerNA{} means that the paper poses no such risks.
        \item Released models that have a high risk for misuse or dual-use should be released with necessary safeguards to allow for controlled use of the model, for example by requiring that users adhere to usage guidelines or restrictions to access the model or implementing safety filters. 
        \item Datasets that have been scraped from the Internet could pose safety risks. The authors should describe how they avoided releasing unsafe images.
        \item We recognize that providing effective safeguards is challenging, and many papers do not require this, but we encourage authors to take this into account and make a best faith effort.
    \end{itemize}

\item {\bf Licenses for existing assets}
    \item[] Question: Are the creators or original owners of assets (e.g., code, data, models), used in the paper, properly credited and are the license and terms of use explicitly mentioned and properly respected?
    \item[] Answer: \answerYes{} 
    \item[] Justification: The paper properly credits all existing assets used in the work, including datasets, models, codebases, and benchmarks, and explicitly states their licenses or terms of use where applicable. The submission respects the usage conditions of these assets.
    \item[] Guidelines:
    \begin{itemize}
        \item The answer \answerNA{} means that the paper does not use existing assets.
        \item The authors should cite the original paper that produced the code package or dataset.
        \item The authors should state which version of the asset is used and, if possible, include a URL.
        \item The name of the license (e.g., CC-BY 4.0) should be included for each asset.
        \item For scraped data from a particular source (e.g., website), the copyright and terms of service of that source should be provided.
        \item If assets are released, the license, copyright information, and terms of use in the package should be provided. For popular datasets, \url{paperswithcode.com/datasets} has curated licenses for some datasets. Their licensing guide can help determine the license of a dataset.
        \item For existing datasets that are re-packaged, both the original license and the license of the derived asset (if it has changed) should be provided.
        \item If this information is not available online, the authors are encouraged to reach out to the asset's creators.
    \end{itemize}

\item {\bf New assets}
    \item[] Question: Are new assets introduced in the paper well documented and is the documentation provided alongside the assets?
    \item[] Answer: \answerYes{} 
    \item[] Justification: Any new assets introduced by the paper are documented in sufficient detail, including their intended use, relevant implementation details, and limitations. The accompanying documentation is provided in the supplementary material or release package.
    \item[] Guidelines:
    \begin{itemize}
        \item The answer \answerNA{} means that the paper does not release new assets.
        \item Researchers should communicate the details of the dataset\slash code\slash model as part of their submissions via structured templates. This includes details about training, license, limitations, etc. 
        \item The paper should discuss whether and how consent was obtained from people whose asset is used.
        \item At submission time, remember to anonymize your assets (if applicable). You can either create an anonymized URL or include an anonymized zip file.
    \end{itemize}

\item {\bf Crowdsourcing and research with human subjects}
    \item[] Question: For crowdsourcing experiments and research with human subjects, does the paper include the full text of instructions given to participants and screenshots, if applicable, as well as details about compensation (if any)? 
    \item[] Answer: \answerNA{} 
    \item[] Justification: The paper does not involve crowdsourcing experiments or research with human subjects. Therefore, no participant instructions, screenshots, or compensation details are applicable.
    \item[] Guidelines:
    \begin{itemize}
        \item The answer \answerNA{} means that the paper does not involve crowdsourcing nor research with human subjects.
        \item Including this information in the supplemental material is fine, but if the main contribution of the paper involves human subjects, then as much detail as possible should be included in the main paper. 
        \item According to the NeurIPS Code of Ethics, workers involved in data collection, curation, or other labor should be paid at least the minimum wage in the country of the data collector. 
    \end{itemize}

\item {\bf Institutional review board (IRB) approvals or equivalent for research with human subjects}
    \item[] Question: Does the paper describe potential risks incurred by study participants, whether such risks were disclosed to the subjects, and whether Institutional Review Board (IRB) approvals (or an equivalent approval/review based on the requirements of your country or institution) were obtained?
    \item[] Answer: \answerNA{} 
    \item[] Justification: The paper does not involve human subjects research or crowdsourcing, so Institutional Review Board approval or an equivalent review is not applicable.
    \item[] Guidelines:
    \begin{itemize}
        \item The answer \answerNA{} means that the paper does not involve crowdsourcing nor research with human subjects.
        \item Depending on the country in which research is conducted, IRB approval (or equivalent) may be required for any human subjects research. If you obtained IRB approval, you should clearly state this in the paper. 
        \item We recognize that the procedures for this may vary significantly between institutions and locations, and we expect authors to adhere to the NeurIPS Code of Ethics and the guidelines for their institution. 
        \item For initial submissions, do not include any information that would break anonymity (if applicable), such as the institution conducting the review.
    \end{itemize}

\item {\bf Declaration of LLM usage}
    \item[] Question: Does the paper describe the usage of LLMs if it is an important, original, or non-standard component of the core methods in this research? Note that if the LLM is used only for writing, editing, or formatting purposes and does \emph{not} impact the core methodology, scientific rigor, or originality of the research, declaration is not required.
    \item[] Answer: \answerYes{} 
    \item[] Justification: LLMs were used only for grammar, spelling, and word-choice assistance.
    \item[] Guidelines:
    \begin{itemize}
        \item The answer \answerNA{} means that the core method development in this research does not involve LLMs as any important, original, or non-standard components.
        \item Please refer to our LLM policy in the NeurIPS handbook for what should or should not be described.
    \end{itemize}

\end{enumerate}

\end{document}